\documentclass{article}
\usepackage{arxiv}

\usepackage{silence}

\usepackage{microtype}
\usepackage[%
pdfstartview=FitH,%
breaklinks=true,%
bookmarks=false,%
colorlinks=true,%
linkcolor= blue,
anchorcolor=blue,%
citecolor=blue,
filecolor=blue,%
menucolor=blue,%
urlcolor=blue%
]{hyperref}
\usepackage{url}
\usepackage{booktabs}
\usepackage{amsfonts}
\usepackage{nicefrac}
\usepackage{xcolor}
\usepackage{amsmath}
\usepackage{algorithm}
\usepackage{algorithmic}
\usepackage{amsthm}
\usepackage{amssymb}
\usepackage{xspace}
\usepackage{mathtools}
\usepackage{graphicx}
\usepackage{pdflscape}
\usepackage{subcaption}
\usepackage{natbib}
\usepackage{thmtools}
\usepackage{thm-restate}
\usepackage{bbm}
\usepackage{multirow}
\usepackage{enumitem}

\usepackage{cleveref}
\usepackage{crossreftools}
\newcounter{algline}

\crefname{algline}{line}{lines}
\Crefname{algline}{Line}{Lines}
  {\setcounter{algline}{0}\begin{algorithmic}[#1]}%
  {\end{algorithmic}}

\definecolor{blue}{HTML}{0077BB}
\definecolor{cyan}{HTML}{33BBEE}
\definecolor{green}{HTML}{009988}
\definecolor{orange}{HTML}{EE7733}
\definecolor{red}{HTML}{CC3311}
\definecolor{magenta}{HTML}{EE3377}
\definecolor{grey}{HTML}{BBBBBB}

\usepackage{amsmath}
\usepackage{xspace}

\newcommand{\h}{h}

\newcommand{\ie}{\textit{i.e.}\xspace}

\newcommand{\eg}{\textit{e.g.}\xspace}

\newcommand{\iid}{\textit{i.i.d.}\xspace}

\newcommand{\sign}{\operatorname{sign}}

\newcommand{\defeq}{:=}

\newcommand{\KL}{\operatorname{KL}}

\newcommand{\Risk}{\mathrm{R}}

\newcommand{\Riskemp}{\widehat{\mathrm{R}}}

\newcommand{\Beta}{\textrm{Beta}}

\newcommand{\MajVote}[1]{H_{#1}}
\newcommand{\prior}{\pi}
\newcommand{\post}{\rho}
\newcommand{\posterior}{\post}

\newcommand{\DgOne}{\Dcal_{|\GrOne}}
\newcommand{\DgTwo}{\Dcal_{|\GrTwo}}

\newcommand{\Sg}{S_{|g}}

\newcommand{\bound}{\Risk}
\newcommand{\BoundMV}{\mathfrak{R}}
\newcommand{\objG}{\mathbf{G}}
\newcommand{\objMV}{\mathbf{H}}

\newcommand{\AttrSpace}{\Gcal}
\newcommand{\GrOne}{a}
\newcommand{\GrTwo}{b}

\newcommand{\klup}{\overline{\kl}}
\newcommand{\kldown}{\underline{\kl}}

\newcommand{\RF}{\mathrm{RF}}
\newcommand{\RFemp}{\widehat{\mathrm{RF}}}
\newcommand{\FairRisk}[2]{\RF_{#1}\!\left(#2\right)}

\newcommand{\FairRiskemp}[2]{\RFemp_{#1}\!\left(#2\right)}

\newcommand{\DP}{\mathrm{DP}}

\newcommand{\DemParity}[2]{\DP_{#1}\!\left(#2\right)}

\newcommand{\EOP}{\mathrm{EOP}}

\newcommand{\EqualOpp}[2]{\EOP_{#1}\!\left(#2\right)}
\newcommand{\EqualOppEmp}[2]{\widehat{\EOP}_{#1}\!\left(#2\right)}

\newcommand{\EO}{\mathrm{EO}}

\newcommand{\EqualOdds}[3]{\EO_{#1}^{#3}\!\left(#2\right)}

\newcommand{\Hrisk}[1]{\Risk_{#1}\!\left(\h\right)}
\newcommand{\Hriskemp}[1]{\Riskemp_{#1}\!\left(\h\right)}

\newcommand{\Gibbs}{G}
\newcommand{\GibbsRisk}[1]{\Risk_{#1}\!\left(\Gibbs_\post\right)}

\newcommand{\GibbsRiskemp}[1]{\Riskemp_{#1}\!\left(\Gibbs_\post\right)}

\newcommand{\MVrisk}[2]{\Risk_{#1}\!\left(\MajVote{#2}\right)}

\newcommand{\GrHrisk}[2]{\Risk_{#1_{#2}}\!\left(\h\right)}
\newcommand{\GrHriskemp}[2]{\Riskemp_{#1_{#2}}\!\left(\h\right)}

\newcommand{\GrGibbsRisk}[2]{\Risk_{#1_{#2}}\!\left(G_\post\right)}
\newcommand{\GrGibbsRiskemp}[2]{\Riskemp_{#1_{#2}}\!\left(G_\post\right)}

\renewcommand{\b}{\ensuremath{\mathfrak{0}}}
\renewcommand{\c}{\ensuremath{\mathfrak{1}}}

\newcommand{\dir}{\operatorname{Dir}}
\newcommand{\cat}{\Ccal}
\newcommand{\gaus}{\Ncal}

\newcommand{\bD}{\b_{\Dcal}} 
\newcommand{\cD}{\c_{\Dcal}}
\newcommand{\bDg}[1]{\b_{\Dcal_{|#1}}} 
\newcommand{\cDg}[1]{\c_{\Dcal_{|#1}}}

\newcommand{\TripleB}{\bD^\post(\h)}
\newcommand{\TripleC}{\cD^\post(\h)}

\newcommand{\TripleBH}{\bD^\post(h_{\alphabf})}
\newcommand{\TripleCH}{\cD^\post(h_{\alphabf})}

\newcommand{\GrTripleB}[1]{\bDg{#1}^\post(\h)}
\newcommand{\GrTripleC}[1]{\cDg{#1}^\post(\h)}

\newcommand{\DownTripleB}{\bD^{\downarrow}(\h, \post)}
\newcommand{\UpTripleB}{\bD^{\uparrow}(\h, \post)}
\newcommand{\DownTripleC}{\cD^{\downarrow}(\h, \post)}
\newcommand{\UpTripleC}{\cD^{\uparrow}(\h, \post)}

\DeclareMathOperator*{\Esp}{\mathbb{E}}

\newcommand{\Ccal}{{\mathcal{C}}}
\newcommand{\Dcal}{{\mathcal{D}}}

\newcommand{\Fcal}{{\mathcal{F}}}
\newcommand{\Gcal}{{\mathcal{G}}}
\newcommand{\Hcal}{{\mathcal{H}}}

\newcommand{\Ncal}{{\mathcal{N}}}

\newcommand{\Pcal}{{\mathcal{P}}}
\newcommand{\Qcal}{{\mathcal{Q}}}

\newcommand{\Xcal}{{\mathcal{X}}}
\newcommand{\Ycal}{{\mathcal{Y}}}

\newcommand{\Pbb}{{\mathbb{P}}}

\newcommand{\Rbb}{{\mathbb{R}}}

\newcommand{\fbf}{{\mathbf f}}

\newcommand{\ibf}{{\mathbf i}}

\newcommand{\wbf}{{\mathbf w}}
\newcommand{\xbf}{{\mathbf x}}

\newcommand{\Ibf}{{\mathbf I}}

\newcommand{\kl}{{\textup{kl}}}

\newcommand{\alphabf}{{\boldsymbol{\alpha}}}

\DeclareMathOperator*{\argmax}{argmax}
\DeclareMathOperator*{\argmin}{argmin}

\newcommand{\halpha}{h_{\alphabf}}

\theoremstyle{plain}
\newtheorem{theorem}{Theorem}[section]

\theoremstyle{definition}

\newtheorem{example}[theorem]{Example}
\newtheorem{heuristic}[theorem]{Heuristic}
\theoremstyle{remark}

\crefname{definition}{Def.}{Defs.}
\Crefname{definition}{Definition}{Definitions}
\crefname{equation}{Eq.}{Eqs.}
\Crefname{equation}{Equation}{Equations}
\crefname{theorem}{Th.}{Ths.}
\Crefname{theorem}{Theorem}{Theorems}
\crefname{corollary}{Cor.}{Cors.}
\Crefname{corollary}{Corollary}{Corollaries}
\crefname{assumption}{Ass.}{Ass.}
\Crefname{assumption}{Assumption}{Assumptions}
\crefname{section}{Sec.}{Secs.}
\Crefname{section}{Section}{Sections}
\crefname{lemma}{Lem.}{Lems.}
\Crefname{lemma}{Lemma}{Lemmas}
\crefname{heuristic}{Heur.}{Heurs.}
\Crefname{heuristic}{Heuristic}{Heuristics}

\newif\ifnotappendix
\notappendixtrue

\title{PAC-Bayesian Generalization Guarantees for Fairness\\on Stochastic and Deterministic Classifiers}
\author{Julien Bastian\\
Université Jean Monnet Saint-Étienne, CNRS, Institut d Optique Graduate School,\\
Laboratoire Hubert Curien UMR 5516, F-42023, Saint-Etienne, France\\
\texttt{julien.bastian@univ-st-etienne.fr}
\AND 
Benjamin Leblanc, \hspace*{5mm} Pascal Germain\\
Département d'informatique et de génie logiciel, Université Laval, Québec, Canada\\
\texttt{benjamin.leblanc.2@ulaval.ca}\hspace*{5mm} \texttt{pascal.germain@ulaval.ca}
\AND 
Amaury Habrard \\
Université Jean Monnet Saint-Étienne, CNRS, Institut d Optique Graduate School,\\
Laboratoire Hubert Curien UMR 5516, Inria, F-42023, Saint-Etienne, France\\
Institut Universitaire de France\\
\texttt{amaury.habrard@univ-st-etienne.fr}
\AND
Christine Largeron\\
Université Jean Monnet Saint-Étienne, CNRS, Institut d Optique Graduate School,\\
Laboratoire Hubert Curien UMR 5516, F-42023, Saint-Etienne, France\\
\texttt{christine.largeron@univ-st-etienne.fr}
\AND
Guillaume Metzler\\
Université Lumière Lyon 2, Universite Claude Bernard Lyon 1, ERIC, 69007, Lyon, France \\
\texttt{guillaume.metzler@univ-lyon2.fr}
\AND 
Emilie Morvant \\
Université Jean Monnet Saint-Étienne, CNRS, Institut d Optique Graduate School,\\
Laboratoire Hubert Curien UMR 5516, F-42023, Saint-Etienne, France\\
\texttt{emilie.morvant@univ-st-etienne.fr}
\AND 
Paul Viallard\\
Univ Rennes, Inria, CNRS IRISA - UMR 6074, F35000 Rennes, France\\
\texttt{paul.viallard@inria.fr}
}

\begin{document}

\vspace*{-1cm}

\maketitle

\vspace*{-1.2cm}
\begin{abstract}
\vspace*{-.5cm}
Classical PAC generalization bounds on the prediction risk of a classifier are insufficient to provide theoretical guarantees on fairness when the goal is to learn models balancing predictive risk and fairness constraints.
We propose a PAC-Bayesian framework for deriving generalization bounds for fairness, covering both stochastic and deterministic classifiers. 
For stochastic classifiers, we derive a fairness bound using standard PAC-Bayes techniques.
Whereas for deterministic classifiers, as usual PAC-Bayes arguments do not apply directly, we leverage a recent advance in PAC-Bayes to extend the fairness bound beyond the stochastic setting. 
Our framework has two advantages: \textit{(i)} It applies to a broad class of fairness measures that can be expressed as a risk discrepancy, and \textit{(ii)} it leads to a self-bounding algorithm in which the learning procedure directly optimizes a trade-off between generalization bounds on the prediction risk and on the fairness.
We empirically evaluate our framework with three classical fairness measures, demonstrating not only its usefulness but also the tightness of our bounds.

\end{abstract}
\section{Introduction}

\looseness=-1
With the widespread use of machine learning on human-related data comes a major responsibility: Ensuring that learning algorithms do not incorporate the discriminative biases contained in the training data.
As a consequence, understanding and mitigating these biases has become a major topic~\citep[\eg,][]{czarnowska2021quantifying,mehrabi2021survey,caton2024surveyfairness}. 
In this paper, we focus on group fairness  \citep[that is, the ability to treat groups equitably without causing discrimination, see, \eg,][]{dwork2012fairness,ustun19a}, and we propose a principled approach to learn fair prediction models by leveraging generalization bounds from statistical learning theory.  
Traditionally, such generalization bounds provide high-probability guarantees that the empirical error of a model is close to its true error on unseen data. 
However, most existing generalization guarantees typically concern predictive risk only and offer no guarantee on the fairness of the model.

\looseness=-1
Our work addresses this gap by deriving generalization bounds for the family of fairness measures that can be expressed as a difference of risks. 
This yields a general framework encompassing several classical group fairness measures, such as Demographic Parity \citep{dwork2012fairness}, Equalized Odds or Equal Opportunity \citep{hardt2016equality}.
Our analysis relies on PAC-Bayesian theory \citep{ShaweTaylor1997,McAllester1998SomePT}, which offers a key advantage among learning theory tools: It enables the derivation of generalization bounds computable from data and directly optimizable, leading to self-bounding algorithms~\citep{freund1998self}.
Related works in this direction remain limited.
In particular, \citet{oneto2019,oneto2020} derived the first PAC-Bayesian generalization bound for fairness, but this theoretical result has not been computationally exploited and is restricted to stochastic classifiers and Equal Opportunity.
More broadly, existing approaches to fairness generalization \citep[\eg,][]{laakom2025fairness,woodworth2017,agarwal2018,denis2024} typically provide post-hoc guarantees: The bounds apply to a fixed model or algorithm after training and do not intervene in the learning process itself. 
To our knowledge, we propose the first PAC-Bayesian generalization bound for such general fairness measures for deterministic classifiers by leveraging recent advances in PAC-Bayes \citep{Leblanc25}. 
A key distinction with existing fairness generalization bounds is that ours lead to a self-bounding algorithm.
Indeed, the learned model comes with its own intrinsic certification of both accuracy and fairness, thereby enhancing the model's trustworthiness.

\looseness=-1
\textbf{Organization of the paper.}
\Cref{sec:supervised} covers PAC-Bayesian background for binary classification.
\Cref{sec:fairness} reviews the group fairness setting, 
and discusses existing generalization bounds.
In \Cref{section:fairness_generalization_bounds}, we derive new PAC-Bayesian bounds for fairness, which are used in \Cref{sec:algo} to design a self-bounding algorithm, empirically evaluated in \Cref{sec:expe}.

\section{General Supervised Classification Setting}

\label{sec:supervised}
\subsection{Setting and Notations} 
\looseness=-1
A supervised binary classification task is modeled by an unknown distribution $\Pcal$ defined over $\Xcal{\times}\Ycal$, where $\Xcal\!\subseteq\!\Rbb^d$ is the input space of dimension $d$ and $\Ycal\!=\! \{-1,+1\}$ is the output space ($\Pcal_{\Xcal}$, \textit{resp.} $\Pcal_{\Ycal}$, denotes the marginal distribution of $\Pcal$ over $\Xcal$, \textit{resp.} $\Ycal$).
A learning sample $S\!=\!\{(x_i,y_i)\}_{i=1}^m $ contains $m$ examples drawn \iid from~$\Pcal$; We denote by $\Pcal^m$ the distribution of such a $m$-sample.
Let~$\Hcal$ be a hypothesis space, where each $\h \!\in\! \Hcal$ is a classifier $\h:\Xcal\!\to\!\Ycal'$.
In most machine learning literature, the learner's objective is to find the hypothesis $\h$ that assigns a label $y\!\in\!\Ycal$ to an input $x$ as accurately as possible. 
Given a binary loss $\ell: \Ycal' {\times} \Ycal \!\to\! \{0,1\}$, the \textit{true risk} $\Hrisk{\Pcal}$ of a hypothesis $\h$ on the distribution $\Pcal$ is defined by
\begin{align*}
\Hrisk{\Pcal} \defeq \Esp_{(x,y)\sim\Pcal}\,\ell(\h(x),y),
\end{align*}
and the \textit{empirical risk} 
is \mbox{$\Hriskemp{S} \!\defeq\! \frac{1}{m} \sum_{i=1}^{m} \ell(\h(x_i),y_i)$}.
Then, the learner aims to find the \emph{best} hypothesis $\h\!\in\!\Hcal$ that minimizes $\Hrisk{\Pcal}$. 
Since $\Pcal$ is unknown, statistical learning theory approaches \citep[see, \eg,][]{vapnik1999nature} classically promote minimizing an upper bound on the generalization gap: $| \Hrisk{\Pcal}{-}\Hriskemp{S} |$. This line of thought is notably represented by PAC (Probably Approximately Correct) generalization bounds \citep{Valiant1984} that take the form 
\begin{align*}
\Pbb_{S\sim\Pcal^m} \!\left[
\forall \h \in \Hcal,\  
\left| \Hrisk{\Pcal}{-}\Hriskemp{S} \right| \leq \epsilon\left(\tfrac1\delta,\tfrac1m \right) \right] \!\geq\! 1{-}\delta.
\end{align*}
Put into words, with high-probability (at least $1{-}\delta$) over the random choice of the sample $S$, tight generalization guarantees are obtained when the deviation $\big|\Hrisk{\Pcal}\! -\! \Hriskemp{S}\big|$ is low, \ie, one wants  $\epsilon(\frac1\delta,\frac1m)$ to be as small as possible.

Our work studies a specific family of generalization bounds known as PAC-Bayesian bounds, which have the advantage of providing tight bounds that can be estimable, allowing us to develop algorithms looking for the direct minimization of the bound.
Such algorithms are referred to as self-bounding algorithms \citep{freund1998self}.

\subsection{Classical PAC-Bayesian Theory}\label{sec:pac_bayesian_theory}
\looseness=-1
PAC-Bayesian theory, introduced by \citet{McAllester1998SomePT,ShaweTaylor1997}, provides generalization bounds in expectation over the hypothesis space~$\Hcal$.
It assumes a \textit{prior} distribution $\prior$ over~$\Hcal$, encoding an \textit{apriori} belief about the hypotheses before observing the learning sample~$S$. 
Then, given $S$, $\prior$, and $\Hcal$, the learner outputs a \textit{posterior} distribution $\post$ over $\Hcal$.
This framework studies the risk of the stochastic \textit{Gibbs classifier}, denoted by~$\Gibbs_{\post}$, which classifies an input $x\!\in\!\Xcal$ by first sampling a hypothesis~$\h$ from~$\posterior$ and then outputting $\h(x)$.
Accordingly, the \textit{Gibbs risk} (true risk) is defined by 
\begin{align}
\label{eq:gibbs}
\GibbsRisk{\Pcal} \defeq \Esp_{\h\sim \post} \Hrisk{\Pcal} =
\Esp_{\h\sim \post} \Esp_{(x,y) \sim \Pcal}  \ell(\h(x),y).
\end{align}
 Its empirical counterpart is  \mbox{$\GibbsRiskemp{S} \!\defeq \Esp_{\h\sim \post} \Hriskemp{S}$}.
Classical PAC-Bayesian theory provides high-probability upper bounds on the generalization gap for the Gibbs risk \citep[\eg,][]{mcallester2003pac,catoni2007pac,Seeger02,Maurer2004}.
 We recall below the generalization bound of \citet{Seeger02}  \citep[improved by][]{Maurer2004}, where the generalization 
 gap is captured by the KL-divergence between two Bernoulli distributions:
 $\kl(q\| p) \defeq q \ln(\frac{q}{p}){+}(1{-}q) \ln(\frac{1{-}q}{1{-}p})$.
 \allowdisplaybreaks[4]
\begin{theorem}[\citealp{Seeger02, Maurer2004}]\label{theorem:classical_kl}
    For any distribution $\Pcal$, any hypothesis set $\Hcal$, any prior $\prior$ on~$\Hcal$, and $\delta\!\in\! (0,1]$,  with probability at least $1{-}\delta$ on the random choice $S\sim \Pcal^m$, we have for any distribution $\post$ on $\Hcal$,
\begin{align*}
&\kl\left(\GibbsRiskemp{S}\middle\|\GibbsRisk{\Pcal}\right)\leq \frac{1}{m}\!\left[\KL(\post\|\prior){+}\ln\frac{2\sqrt{m}}{\delta}\right],
\end{align*}
where $\KL(\post\|\prior)\!\defeq\! \Esp_{\h \sim \post} \mathrm{ln}\frac{\post(\h)}{\prior(\h)}$ is the KL-divergence. 
\end{theorem}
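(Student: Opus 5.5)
The plan follows the standard change-of-measure route. Introduce the function $f(\h) \defeq m\,\kl\big(\Hriskemp{S}\,\|\,\Hrisk{\Pcal}\big)$ on $\Hcal$. Two ingredients are needed.

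The first is the Donsker--Varadhan variational formula. Applied with the prior $\prior$, which does not depend on $S$, it gives, for every posterior $\post$ simultaneously,
\begin{align*}
\Esp_{\h\sim\post} f(\h) \leq \KL(\post\|\prior) + \ln \Esp_{\h\sim\prior} e^{f(\h)}.
\end{align*}
Because $\kl$ is jointly convex, Jensen's inequality gives $m\,\kl\big(\GibbsRiskemp{S}\,\|\,\GibbsRisk{\Pcal}\big) \leq \Esp_{\h\sim\post} f(\h)$. So the left-hand side of the statement, multiplied by $m$, is controlled once we bound the random variable $Z(S)\defeq\Esp_{\h\sim\prior} e^{f(\h)}$.

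The second ingredient is a high-probability bound on $Z(S)$. By Markov's inequality, with probability at least $1-\delta$ over $S\sim\Pcal^m$ we have $Z(S)\leq \frac1\delta\Esp_{S}Z(S)$. Since $\prior$ is data-independent, Fubini lets us swap the expectations:
\begin{align*}
\Esp_S Z(S) = \Esp_{\h\sim\prior}\Esp_S e^{m\,\kl(\Hriskemp{S}\|\Hrisk{\Pcal})}.
\end{align*}
It therefore suffices to show, for each fixed $\h$, that $\Esp_S e^{m\,\kl(\Hriskemp{S}\|\Hrisk{\Pcal})}\leq 2\sqrt m$. Combining the pieces and dividing by $m$ then yields exactly the stated bound, with $\ln\frac{2\sqrt m}{\delta}$.

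The main obstacle is this fixed-hypothesis moment bound, which is Maurer's lemma. For fixed $\h$ the losses $\ell(\h(x_i),y_i)$ are i.i.d.\ Bernoulli with mean $p=\Hrisk{\Pcal}$, so $m\Hriskemp{S}\sim\mathrm{Bin}(m,p)$. Write the expectation as
\begin{align*}
\sum_{k=0}^m \binom{m}{k}p^k(1-p)^{m-k}\Big(\tfrac{k/m}{p}\Big)^{k}\Big(\tfrac{1-k/m}{1-p}\Big)^{m-k} = \sum_{k=0}^m\binom{m}{k}\Big(\tfrac km\Big)^k\Big(1-\tfrac km\Big)^{m-k}.
\end{align*}
The dependence on $p$ cancels, which is the key point. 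It remains to bound this sum by $2\sqrt m$. I would do this with Stirling-type bounds on $\binom mk$: each interior term is at most of order $\sqrt{m/(2\pi k(m-k))}$. Comparing the sum to the integral $\int_0^1 \frac{dt}{\sqrt{t(1-t)}}=\pi$ gives roughly $\sqrt{\pi m/2}$, and adding the two boundary terms (each equal to $1$) stays below $2\sqrt m$ for $m\geq 8$. For small $m$ I would check the bound directly; Maurer handles these small cases separately, and if the constants become delicate I would simply cite \citet{Maurer2004} for this lemma.

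Finally, note that the Markov event does not depend on $\post$, so the resulting bound holds uniformly over all posteriors $\post$, as the statement requires.
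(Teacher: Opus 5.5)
Your proposal is correct. The paper gives no proof of its own for this statement and simply cites \citet{Seeger02,Maurer2004}, and your route is exactly the standard argument behind the $\ln\frac{2\sqrt m}{\delta}$ version: the Donsker--Varadhan change of measure, joint convexity of $\kl$, Markov's inequality with Fubini over the data-independent prior, and Maurer's moment bound $\Esp_S e^{m\,\kl(\Hriskemp{S}\|\Hrisk{\Pcal})}\le 2\sqrt{m}$ for Bernoulli losses. Your Stirling step is also rigorous once its approximate wording is made exact: Robbins' bounds give each interior term at most $\sqrt{m/(2\pi k(m-k))}$, the Riemann sum is dominated by $\int_0^1 \frac{dt}{\sqrt{t(1-t)}}=\pi$ because the integrand is monotone on each half of $(0,1)$, and the cases $m\le 7$ check directly, with equality at $m=1$.
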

\Cref{theorem:classical_kl} gives a lower and an upper bound (\textit{resp.} denoted by $^\downarrow$ and $^\uparrow$) 
on the Gibbs risk, \ie, under the same assumptions, we simultaneously have 
\begin{align}
\label{eq:borne-sup-kl}&\GibbsRisk{\Pcal} \le \underbrace{\klup\!\left(\GibbsRiskemp{S} \middle\| \frac{1}{m}\!\left[\KL(\post\|\prior){+}\ln\tfrac{2\sqrt{m}}{\delta}\right]\!\right)}_{\displaystyle
\bound_{S}^{\uparrow}(\post, \prior, \delta, \Gibbs_{\post})}, \\[1mm]
\label{eq:borne-inf-kl}\mbox{and }&\GibbsRisk{\Pcal} \ge \underbrace{\kldown\!\left(\GibbsRiskemp{S}\middle\|\frac{1}{m}\!\left[\KL(\post\|\prior){+}\ln \tfrac{2\sqrt{m}}{\delta}\right]\!\right)}_{\displaystyle \bound_{S}^{\downarrow}(\post, \prior, \delta, \Gibbs_{\post})},
\end{align}
with $\klup(q\|\epsilon) \defeq \max_{r\in[0,1]} \{r : \kl(q\|r)\leq \epsilon\}$ and $\kldown(q\|\epsilon) \defeq \min_{r\in[0,1]} \{r : \kl(q\|r)\leq \epsilon\}$. The above bounds hold for the stochastic Gibbs classifier~$\Gibbs_\post$.
While studying $\Gibbs_\post$ is of interest for randomized algorithms \citep[\eg,][]{DziugaiteR17}, in many practical settings one aims to learn a deterministic model. 
Some PAC-Bayesian approaches proposed to derive generalization bounds for deterministic classifiers.
One such approach relies on commonly named \emph{disintegrated} or \emph{derandomized} PAC-Bayesian bounds \citep[\eg,][]{catoni2007pac,blanchard2007,rivasplata2020,viallard2024leveraging,viallard2024general}.
They provide high-probability bounds over the random choice of $S$, but also over the random choice of a single model $h$ drawn from a learned~$\rho$.
As a result, although the bound is stated for a single $h$, this model remains random.

\looseness=-1
To obtain PAC-Bayesian bounds for a ``fully'' deterministic classifier, another approach consists in considering the weighted majority vote over $\Hcal$ according to $\posterior$, where each $h\!\in\!\Hcal$ is weighted according to its probability $\post(h)$, \ie,
\begin{align} \label{eq:mv}
& \forall x\in\Xcal,\  \MajVote{\rho}(x) \defeq \Esp_{\h\sim\post} \h(x)\,.
\end{align}
A family of work establishes inequalities that upper-bound the majority vote's risk $\MVrisk{\Pcal}{\post}$ in terms of the Gibbs risk $\GibbsRisk{\Pcal}$ \citep[\eg,][]{langford2002pac,mcallester2003pac,Lacasse06,MasegosaLorenzenIgelSeldin2020,RoyLavioletteMarchand2011,GermainLacasseLavioletteMarchandRoy2015,Leblanc25}.
These results imply that an upper bound on $\GibbsRisk{\Pcal}$ leads to an upper bound on $\MVrisk{\Pcal}{\post}$, up to additional terms depending on the inequality.
The simplest and most classical inequality \cite{langford2002pac} is 
\begin{align}
\label{eq:2gibbs}
\MVrisk{\Pcal}{\post}\leq2\GibbsRisk{\Pcal}.
\end{align} 
Consequently, any high-probability upper bound on the Gibbs risk (\eg, \Cref{eq:borne-sup-kl}) yields a high-probability upper bound on the majority vote's risk, up to a factor~$2$.
When one aims to learn a low-error classifier, such upper bounds are usually sufficient.
However, in this work we study group fairness measures that can be expressed as a difference of risk.
In this case, deriving a generalization bound requires both upper and lower bounds, since bounding a difference involves controlling deviations on both sides.

\section{Group Fairness Setting}
\label{sec:fairness}
\looseness=-1
PAC-Bayesian bounds are usually used to certify generalization capacities in terms of accuracy. 
As we discuss in \Cref{sec:related_works}, only a few results exist that provide such certification for fairness abilities. 
We address this gap by deriving PAC-Bayesian generalization bounds for fairness in supervised binary classification under the group fairness setting~\citep[see, \eg,][]{dwork2012fairness}, which we recall below.

\subsection{Setting and Fairness Measures}\label{section:fair_learning}
\looseness=-1
We now assume each input $x\!\in\! \Xcal$ to be associated with a binary sensitive attribute $g \!\in\! \AttrSpace\!=\!\{\GrOne,\GrTwo\}$, which indicates the subgroup to which $x$ belongs.
We consider an unknown distribution $\Dcal$ over $\Xcal{\times}\AttrSpace{\times}\Ycal$, so that an example is a triplet $(x,g,y)\!\sim\! \Dcal$.
Accordingly, a learning sample is now \mbox{$S\!=\!\{(x_i,g_i,y_i)\}_{i=1}^m \!\sim\! \Dcal^m$}.
Let denote $\Dcal_{|g}\!=\! \Dcal\big((x_i,y_i){\mid} g_i{=}g\big)$ the conditional distribution of $(x_i,y_i)$ given $g_i\!=\!g$.
Therefore, the true risk associated with a subgroup $g \! \in \!\AttrSpace$ is given by
\begin{align*}
    \Hrisk{\Dcal_{|g\!}} \defeq \Esp_{(x,y) \sim \Dcal_{|g}}\ell(h(x),y)\,.
\end{align*}
The empirical risk associated with a subgroup $g$ is computed using the part of the learning sample $S$ conditioned on the value $g$ of the sensitive attribute $S_{|g} \!=\! \{(x_i,g_i,y_i)\! \in\! S : g_i \!=\! g\}$, that is,
\begin{align*}
\GrHriskemp{S}{|g\!} \defeq \frac{1}{m_g} ~\sum_{\substack{i=1}}^{m_g} ~\ell(\h(x_i),y_i)\,,
\end{align*}
with $m_g$ the size of $\Sg$.
To ensure a fair decision between the subgroups for a classifier $h$, the objective is to limit the discrepancy between the risks of the subgroups,
\ie,  $\Hrisk{\DgOne\!}$ and $\Hrisk{\DgTwo}$ should be similar.

\textbf{General fairness risk.} We define this risk discrepancy as the absolute difference between the two subgroup risks, called \textit{true fairness risk},
\begin{align}\label{eq:fair_risk}
\FairRisk{\Dcal}{\h} \defeq \big| \Hrisk{\Dcal_{|\GrOne\!}} - \Hrisk{\Dcal_{|\GrTwo\!}} \big|\,,
\end{align}
and the \textit{empirical fairness risk} is 
\begin{align}\label{eq:fair_risk_emp}
\FairRiskemp{S}{\h} \defeq \big| \Hriskemp{S_{|\GrOne\!}} - \Hriskemp{S_{|\GrTwo\!}} \big|\,.
\end{align}
A classifier $h$ is then considered fair when $\FairRisk{\Dcal}{h}\!\approx\!0$.
Note that, $\FairRisk{\Dcal}{\Gibbs_\post}$ denotes the true fairness Gibbs risk and $\FairRisk{\Dcal}{\MajVote{\rho}}$ the true fairness risk of the majority vote.
As illustrated by the following examples, \Cref{eq:fair_risk} provides a general framework that encompasses fairness measures expressible as differences of risks.
This is made possible through appropriate choices of  loss~$\ell$ and  distribution $\Dcal$, and it applies identically to $h\!\in\!\Hcal$, or $G_{\rho}$, or $\MajVote{\rho}$.

\looseness=-1
\textbf{Demographic Parity} \citep[DP,][]{dwork2012fairness}\textbf{.}
A classifier is said to satisfy DP if its predictions are independent of the sensitive attribute.
In our setting, this notion can be expressed as a risk discrepancy by choosing the loss $\ell(h(x),y)\!=\!\ell(h(x),+1)$.
We have
\begin{align*}
    \DemParity{\Dcal}{\h} \defeq\ & \left| \Hrisk{\Dcal_{\Xcal|\GrOne\!}} - \Hrisk{\Dcal_{\Xcal|\GrTwo\!}} \right|,
\end{align*}
where $\Dcal_{\Xcal|g}$  is the marginal distribution of $\Xcal$ restricted to group $g$.
A classifier is considered fair under DP if it has the same expected positive prediction across sensitive groups, \ie,
$\Esp_{x \sim \Dcal_{\Xcal|\GrOne\!}} \ell(h(x),+1) \!=\! \Esp_{x \sim \Dcal_{\Xcal|\GrTwo\!}} \ell(h(x),+1)$.

\looseness=-1
\textbf{Equalized Odds} \citep[EO,][]{hardt2016equality}\textbf{.}
A classifier satisfies EO if its predictions are independent of the sensitive attribute conditionally on the true label. 
In fact, EO is related to DP in that it compares expected predictions across groups.
The key difference is that the risk discrepancy is performed separately for each class.
For a class $y\!\in\!\Ycal$, and loss $\ell(h(x),y)\!=\!\ell(h(x),+1)$, we have 
\begin{align*}
\nonumber \EqualOdds{\Dcal}{\h}{y}  \defeq\  & \left| \Hrisk{\Dcal_{\Xcal|y,\GrOne}} - \Hrisk{\Dcal_{\Xcal|y,\GrTwo}} \right|,
\end{align*}
with $\Dcal_{\Xcal|y,g}$ the marginal distribution of $\Xcal$ restricted to class $y$ and group $g$.
The overall EO is obtained by aggregating 
the average between the classes of $\Ycal\!=\!\{-1,+1\}$:
\begin{align*}
\EqualOdds{\Dcal}{\h}{}{\defeq}\!\!\!\underset{y\sim \Dcal_{\Ycal}}{\Pbb}\![y\!=\!1]\, \EqualOdds{\Dcal}{\h}{+1}+\!\! \underset{y\sim \Dcal_{\Ycal}}{\Pbb}\![y\!=\!-1]\,\EqualOdds{\Dcal}{\h}{-1}\!.
\end{align*}
Intuitively, this means that $h$ should behave similarly across sensitive groups on both classes. 

\looseness=-1
\textbf{Equal OPportunity} \citep[EOP,][]{hardt2016equality}\textbf{.}
A classifier is said to satisfy EOP if its predictions are independent of the sensitive attributes for one class of interest $y\! \in\! \Ycal$.
When this class is $y\!=\!+1$,  with $\ell(h(x),y)\!=\!\ell(h(x),+1)$, EOP is 
\begin{align*}
    \EqualOpp{\Dcal}{h} \defeq \EqualOdds{\Dcal}{\h}{+1}.
\end{align*}

\subsection{Generalization Bounds for Fairness}
\label{sec:related_works}

\looseness=-1
\textbf{Related Works.}
There exist only a few works on generalization bounds for fairness measures. 
Among them, some concern the derivation of bounds for specific fairness measures, such as \citet{woodworth2017} for EO, or \citet{agarwal2018} for DP and EO, or \citet{denis2024} for DP.
It is noteworthy that all these bounds are derived for specific algorithms and classifiers. 
Recently, \citet{laakom2025fairness} established an information-theoretic general framework to derive generalization bounds for various fairness measures that they applied to DP and EO.
While their results apply to a broad class of models, their resulting bounds do not lead to self-bounding algorithms. 
They provide post-hoc guarantees on the fairness of a learned predictor that are not directly exploitable as learning objectives.

\looseness=-1
\textbf{A PAC-Bayesian Bound.}
The most closely related work is the PAC-Bayesian EOP-bound for the stochastic Gibbs classifier proposed by \citet{oneto2019,oneto2020}.
For comparison purposes, we present in \Cref{theorem:oneto_general} a generalization of \citet[][Th.\,1]{oneto2020} to the general fairness risk of \Cref{eq:fair_risk}. 
We discuss this result further in Appendix~\ref{appendix:Oneto}.
\begin{theorem}[Generalization of Th.\,1 of \citealp{oneto2020}]\label{theorem:oneto_general}
  For any distribution $\Dcal$, any hypothesis set $\Hcal$, any prior $\Pcal$ on $\Hcal$, any $\delta\in (0,1]$, with probability at least $1-\delta$ on the random choice $S\sim \Dcal^m$, we have for every $\post$ on $\Hcal$,
\begin{align*}
\FairRisk{\Dcal}{\Gibbs_\post} 
\le \left| \GrGibbsRiskemp{S}{\GrTwo} - \GrGibbsRiskemp{S}{\GrOne}\right|  +  \sqrt{\frac{\displaystyle \KL(\post \| \prior) + \ln\tfrac{4\sqrt{m_{\GrOne}}}{\delta}}{2 m_{\GrOne}}}  +  \sqrt{\frac{\displaystyle\KL(\post \| \prior) + \ln\tfrac{4\sqrt{m_{\GrTwo}}}{\delta}}{2 m_{\GrTwo}}}\,.
\end{align*}
\end{theorem}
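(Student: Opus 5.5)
We plan to derive the bound from the triangle inequality together with a two-sided PAC-Bayesian deviation bound for each subgroup, combined by a union bound. Writing $\Risk_{\DgOne}$ and $\Risk_{\DgTwo}$ for the true Gibbs risks on each group, we first decompose
\begin{align*}
\FairRisk{\Dcal}{\Gibbs_\post} &= \big|\GibbsRisk{\DgOne} - \GibbsRisk{\DgTwo}\big| \\
&\le \big|\GrGibbsRiskemp{S}{|\GrOne} - \GrGibbsRiskemp{S}{|\GrTwo}\big| + \big|\GibbsRisk{\DgOne} - \GrGibbsRiskemp{S}{|\GrOne}\big| + \big|\GibbsRisk{\DgTwo} - \GrGibbsRiskemp{S}{|\GrTwo}\big|,
\end{align*}
so it suffices to bound each group's two-sided deviation $|\GibbsRisk{\Dcal_{|g}} - \GrGibbsRiskemp{S}{|g}|$, uniformly over $\post$.

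For a fixed group $g$, we condition on the group labels $(g_1,\dots,g_m)$. Conditionally, $S_{|g}$ is an \iid sample of size $m_g$ from $\Dcal_{|g}$, so \Cref{theorem:classical_kl} applies to $\Dcal_{|g}$ with $m_g$ examples and confidence $\delta/2$. This gives, with probability at least $1-\delta/2$, simultaneously for all $\post$,
\begin{align*}
\kl\big(\GrGibbsRiskemp{S}{|g}\,\big\|\,\GibbsRisk{\Dcal_{|g}}\big) \le \frac{1}{m_g}\Big[\KL(\post\|\prior) + \ln\tfrac{4\sqrt{m_g}}{\delta}\Big].
\end{align*}
Since the statement holds for every realization of the group labels, integrating over them yields the same unconditional guarantee. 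We then apply Pinsker's inequality, $\kl(q\|p)\ge 2(q-p)^2$, which turns this into $|\GibbsRisk{\Dcal_{|g}} - \GrGibbsRiskemp{S}{|g}| \le \sqrt{(\KL(\post\|\prior)+\ln\frac{4\sqrt{m_g}}{\delta})/(2m_g)}$. This is two-sided, which is exactly what the decomposition requires.

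A union bound over $g\in\{\GrOne,\GrTwo\}$ makes both deviation bounds hold together with probability at least $1-\delta$. Plugging them into the triangle inequality gives the claim.

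The main obstacle is rigor in the conditioning step. The group sizes $m_{\GrOne}$ and $m_{\GrTwo}$ are random, and the bound must be allowed to depend on them. We handle this by applying \Cref{theorem:classical_kl} conditionally on the group labels and then taking expectation over them. We must also ensure $m_g\ge1$: either we condition on this event, or we note that the bound is vacuous otherwise. Everything else is routine.
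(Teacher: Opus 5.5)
Your proposal is correct and follows essentially the paper's route. The paper also splits the fairness gap with the triangle inequality, applies a two-sided per-group PAC-Bayes deviation bound at confidence $\delta/2$, and takes a union bound over the two groups. Where the paper cites \Cref{theorem:mcallester} directly, you obtain the same bound from \Cref{theorem:classical_kl} plus Pinsker, which gives the identical $\ln\frac{4\sqrt{m_g}}{\delta}$ terms. Your explicit conditioning on the group labels fills in a step the paper leaves implicit.
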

\begin{proof} 
We apply the \citeauthor{oneto2020}'s proof process on the general fairness risk (instead of EOP). 
We have 
\allowdisplaybreaks[4]
 \begin{align*}
          |\FairRisk{\Dcal}{\Gibbs_\post} - \FairRiskemp{S}{\Gibbs_\post} |  
         & = \left| | \GrGibbsRisk{\Dcal}{|\GrOne\!}- \GrGibbsRisk{\Dcal}{|\GrTwo\!}|  - | \GrGibbsRiskemp{S}{|\GrOne} - \GrGibbsRiskemp{S}{|\GrTwo}| \right|\\
        &\leq\   |\GrGibbsRisk{\Dcal}{|\GrOne\!} - \GrGibbsRiskemp{S}{|\GrOne}   | + | \GrGibbsRisk{\Dcal}{|\GrTwo\!} - \GrGibbsRiskemp{S}{|\GrTwo}  |.
    \end{align*}
We apply \citet{mcallester2003pac}'s PAC-Bayes bound (see \Cref{theorem:mcallester}) separately to each group, with probability at least $1{-}\frac{\delta}{2}$.
A union bound on the groups gives the result.
\end{proof}
Although the results of \citet{oneto2020,oneto2019} are purely theoretical and limited to EOP, they suggest a way (recalled in Appendix~\ref{appendix:Oneto}) to define $\prior$ and $\posterior$ to favor classifiers in~$\Hcal$ that achieve both good accuracy and good EOP, following a standard method in PAC-Bayes theory \citep[\eg,][]{catoni2007pac,lever2013}.
However, as noted by the authors, this method is hard to implement and may lead to numerical instabilities, especially when dealing with continuous distributions (thus, it is not evaluated in their work).
Note that, this method is not self-bounding.

That said, since $\FairRisk{\Dcal}{\Gibbs_\post}$ involves a deviation between the subgroup risks, \Cref{theorem:oneto_general} only holds for the stochastic Gibbs classifier and cannot be used to obtain upper bounds for the risk of the deterministic majority vote~$\FairRisk{\Dcal}{\MajVote{\posterior}}$.

\section{Fairness Generalization Bounds}\label{section:fairness_generalization_bounds}
\looseness=-1
This section focuses on the derivation of PAC-Bayesian generalization bounds for the general fairness risk $\FairRisk{\Dcal}{\cdot}$ (\Cref{eq:fair_risk}), paving the way to our \emph{fair self-bounding learning algorithm} (\Cref{sec:algo}).
 We first derive, in \Cref{sec:sto-fair}, a tighter bound than the one of \Cref{theorem:oneto_general} for the stochastic Gibbs classifier by relying on \Cref{theorem:classical_kl} (known to yield tighter bounds than \Cref{theorem:mcallester} on which \Cref{theorem:oneto_general} is based).
Then, by leveraging a recent result in PAC-Bayes that allows adapting guarantees from stochastic to deterministic classifiers \citep{Leblanc25}, we turn to deterministic classifiers 
in \Cref{sec:det-fair}, and to a computable specialization to the majority vote in \Cref{sec:mv-fair}.
Overall, our PAC-Bayesian bounds yield practical and theoretical benefits: \textit{(i)}~They can be instantiated to fairness measures that can be expressed as a risk discrepancy; \textit{(ii)}~They are computable from a learning sample $S$; \textit{(iii)}~When combined  with a PAC-Bayesian generalization bound on the predictive risk $\Risk_{\Dcal}(\cdot)$, they lead to a self-bounding algorithm to learn a model that directly minimizes a generalization bound on the trade-off between fairness and predictive risks.

\subsection{Stochastic Classifier}
\label{sec:sto-fair}

\looseness=-1
\Cref{theorem:gibbs_fairness_bound} states our PAC-Bayesian bound on the general fairness risk of the Gibbs classifier $\FairRisk{\Dcal}{G_\post}$.
\begin{theorem}\label{theorem:gibbs_fairness_bound}
    For any distribution $\Dcal$, any hypothesis set~$\Hcal$, any prior $\prior$ on $\Hcal$, any $\delta\in (0,1]$, with probability at least $1{-}\delta$ on the random choice $S{\sim}\Dcal^m$, we have for any $\post$ on~$\Hcal$,
    \begin{align*}
    \FairRisk{\Dcal}{\Gibbs_\post} \le \max &\Big\{\bound_{S_{|a\!}}^{\uparrow}(\post, \prior, \tfrac{\delta}{4}, \Gibbs_{\post}) -\bound_{S_{|b\!}}^{\downarrow}(\post, \prior, \tfrac{\delta}{4}, \Gibbs_{\post}),  \bound_{S_{|b\!}}^{\uparrow}(\post, \prior, \tfrac{\delta}{4}, \Gibbs_{\post}) - \bound_{S_{|a\!}}^{\downarrow}(\post, \prior, \tfrac{\delta}{4}, \Gibbs_{\post})\Big\}.
\end{align*}
\end{theorem}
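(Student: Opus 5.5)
The proof applies \Cref{theorem:classical_kl} four times, via its two consequences \Cref{eq:borne-sup-kl} and \Cref{eq:borne-inf-kl}, once per group and per direction, each at confidence level $\tfrac{\delta}{4}$. A union bound then makes all four statements hold simultaneously with probability at least $1-\delta$. The four events are:
\begin{align*}
\GrGibbsRisk{\Dcal}{|a} &\le \bound_{S_{|a}}^{\uparrow}(\post,\prior,\tfrac{\delta}{4},\Gibbs_\post), & \GrGibbsRisk{\Dcal}{|a} &\ge \bound_{S_{|a}}^{\downarrow}(\post,\prior,\tfrac{\delta}{4},\Gibbs_\post),\\
\GrGibbsRisk{\Dcal}{|b} &\le \bound_{S_{|b}}^{\uparrow}(\post,\prior,\tfrac{\delta}{4},\Gibbs_\post), & \GrGibbsRisk{\Dcal}{|b} &\ge \bound_{S_{|b}}^{\downarrow}(\post,\prior,\tfrac{\delta}{4},\Gibbs_\post).
\end{align*}
Each one holds uniformly over all posteriors $\post$, as in \Cref{theorem:classical_kl}. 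Strictly, \Cref{theorem:classical_kl} at level $\delta'$ yields both the upper and the lower bound on a single event. So two applications at level $\tfrac{\delta}{2}$ would suffice, and using four at $\tfrac{\delta}{4}$ is valid but more conservative. I will match the statement's $\tfrac{\delta}{4}$.

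The main subtlety is that \Cref{theorem:classical_kl} concerns an \iid sample, whereas $S_{|g}$ has random size $m_g$, and its points are \iid from $\Dcal_{|g}$ only conditionally on the vector $(g_1,\dots,g_m)$. I would therefore condition on the sensitive attributes $g_1,\dots,g_m$. Given them, $S_{|a}$ and $S_{|b}$ are independent \iid samples from $\DgOne$ and $\DgTwo$ of fixed sizes $m_a$ and $m_b$. Applying \Cref{theorem:classical_kl} to each gives the conditional probability bounds, and integrating over the attributes preserves the $1-\delta$ guarantee. The bounds involve $m_g$ through $\ln\frac{2\sqrt{m_g}}{\delta'}$, which is fine since the statement is conditional. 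One caveat is that the prior must not depend on $S$, which holds here.

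On the intersection of the events, the absolute value $\FairRisk{\Dcal}{\Gibbs_\post}=|\GrGibbsRisk{\Dcal}{|a}-\GrGibbsRisk{\Dcal}{|b}|$ equals the larger of the two signed differences. The first satisfies $\GrGibbsRisk{\Dcal}{|a}-\GrGibbsRisk{\Dcal}{|b}\le \bound_{S_{|a}}^{\uparrow}-\bound_{S_{|b}}^{\downarrow}$, by the upper bound for $a$ and the lower bound for $b$. Symmetrically, the second satisfies $\GrGibbsRisk{\Dcal}{|b}-\GrGibbsRisk{\Dcal}{|a}\le \bound_{S_{|b}}^{\uparrow}-\bound_{S_{|a}}^{\downarrow}$. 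Taking the maximum gives the claim for every $\post$ simultaneously.
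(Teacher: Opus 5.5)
Your proposal is correct and is essentially the paper's proof. The paper applies \Cref{theorem:classical_kl} four times: the upper bound of \Cref{eq:borne-sup-kl} and the lower bound of \Cref{eq:borne-inf-kl} for each group, each at level $\tfrac{\delta}{4}$. It then takes a union bound and uses $|c-d|=\max\{c-d,\,d-c\}$ to conclude.

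Your conditioning on $(g_1,\dots,g_m)$ to handle the random subgroup sizes $m_a,m_b$ is a step the paper leaves implicit. Your observation that one application per group at level $\tfrac{\delta}{2}$ already gives both directions is also right, and would give a slightly tighter bound.
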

\begin{proof} We apply \Cref{theorem:classical_kl} to bound the Gibbs risk on each group with probability $1{-}\frac{\delta}{2}$:
 \Cref{eq:borne-sup-kl} upper-bounds $\GrGibbsRisk{\Dcal}{|\GrOne\!}$, \Cref{eq:borne-inf-kl} lower-bounds $\GrGibbsRisk{\Dcal}{|\GrTwo\!}$.
We combine them by a union bound. 
With probability at least $1{-}\tfrac{\delta}{2}$ on $S\!\sim\!\Dcal^m$, we have, for any $\post$ on~$\Hcal$,
$$
\GrGibbsRisk{\Dcal}{|\GrOne\!}\!-\!\GrGibbsRisk{\Dcal}{|\GrTwo\!} \leq  \bound_{S_{|a\!}}^{\uparrow}(\post, \prior, \tfrac{\delta}{4}, \Gibbs_{\post}) \!-\! \bound_{S_{|b\!}}^{\downarrow}(\post, \prior, \tfrac{\delta}{4}, \Gibbs_{\post}).
$$
Since $\FairRisk{\Dcal}{G_\post}\!=\!|\GrGibbsRisk{\Dcal}{\GrOne\!}\!-\!\GrGibbsRisk{\Dcal}{|\GrTwo\!}|$, we apply the same argument symmetrically. 
By a union bound, the resulting bounds hold simultaneously.
Taking their maximum yields the desired result.
\end{proof}
In contrast to the proof of \Cref{theorem:oneto_general}, which is restricted to McAllester's bound (\Cref{theorem:mcallester}), a key feature of our proof is its generality: It can be instantiated with any PAC-Bayesian  bound for the Gibbs risk that provides both a lower and an upper bound.
Our result in \Cref{theorem:gibbs_fairness_bound} is based on the Seeger's bound (\Cref{theorem:classical_kl}), which is tighter than McAllester's bound due to Pinsker's inequality: 
$\forall (p,q)\!\in\![0,1]^2,\   2(q{-}p)^2\!\leq\! \kl(q\|p)$ \citep{wu2017lecture,canonne2022short}.
Thus, our bound is tighter than \Cref{theorem:oneto_general}.

\looseness=-1
While the stochastic Gibbs classifier is by nature the classical object of study of PAC-Bayes and shows interest for randomized algorithms, quantities such as $\FairRisk{\Dcal}{\Gibbs_\post}$ might not be representative of its actual fairness: The next example illustrates a situation where the DP of each classifier in $\Hcal$ is high, but where $\FairRisk{\Dcal}{\Gibbs_\post}$ fails to convey this information.
\begin{example}\label{ex:test}
    Let $\Hcal\! = \!\{h_1, h_2\}$ and \mbox{$\post(h_1), \post(h_2)\!=\! 0.5$}. 
    Suppose that \mbox{$\Risk_{\Dcal_{|\GrOne\!}}(h_1)\!=\!\Risk_{\Dcal_{|\GrTwo\!}}(h_2)=1$}, and \mbox{$\Risk_{\Dcal_{|\GrTwo\!}}(h_1)\!=\!\Risk_{\Dcal_{|\GrOne\!}}(h_2)\!=\!0$}.
    Since \mbox{$\Risk_{\Dcal_{|\cdot\!}}(\Gibbs_\post)\!=\!\frac{1}{2}(1\!+\!0)\!=\!\frac{1}{2}$}, we have
\begin{itemize}
    \item $\FairRisk{\Dcal}{h_1}=\left|\Risk_{\Dcal_{|\GrOne\!}}(h_1)-\Risk_{\Dcal_{|\GrTwo\!}}(h_1)\right| = |1-0| = 1$;
    \item $\FairRisk{\Dcal}{h_2}=\left|\Risk_{\Dcal_{|\GrOne\!}}(h_2)-\Risk_{\Dcal_{|\GrTwo\!}}(h_2)\right| = |0-1| = 1$;
    \item $\FairRisk{\Dcal}{\Gibbs_\post}\!=\!\left|\Risk_{\Dcal_{|\GrOne\!}}(\Gibbs_\post)-\Risk_{\Dcal_{|\GrTwo\!}}(\Gibbs_\post)\right| = \left|\tfrac{1}{2}{-}\tfrac{1}{2}\right| = 0$.
\end{itemize}
\end{example}
Therefore, in the following sections, we focus on the derivation of generalization bounds for deterministic classifiers.

\subsection{Deterministic Classifiers}
\label{sec:det-fair}
\looseness=-1
Recall that classical inequalities that upper-bound the majority vote's risk~$\MVrisk{\Dcal}{\post}$ in terms of the Gibbs risk~$\Risk_\Dcal(\Gibbs_\post)$ (\eg, \Cref{eq:2gibbs}) allow PAC-Bayesian upper bounds to be transferred from $\Gibbs_\post$ to $\MajVote{\post}$.
However, such inequalities do not extend to differences of Gibbs risks, as required for the general fairness measure (as in \Cref{theorem:oneto_general,theorem:gibbs_fairness_bound}).
To overcome this limitation, we leverage a recent result by \citet{Leblanc25}, which decomposes the risk of a single deterministic classifier $h\! \in \!\Hcal$ into the Gibbs risk and two conditional expectations terms. 
\begin{restatable}[Risk decomposition, \citealp{Leblanc25}]{proposition}{Riskdecomp}
\label{proposition:triple_decomposition}
Assume that $\TripleC \!-\! \TripleB \neq 0$. Then, for any $\h \in \Hcal$ and distribution $\post$ on $\Hcal$, the true risk of $\h$ satisfies
\begin{align}
\label{eq:decomposition-ben}
&\Hrisk{\Dcal} = \frac{\GibbsRisk{\Dcal} - \TripleB}{\TripleC - \TripleB},\\
\nonumber \text{with}\  \TripleB  \defeq \! &\underset{(x, y) \sim \Dcal}{\Esp}\left[\underset{\h^{\prime} \sim \post}{\Esp} \ell\left(\h^{\prime}(x), y\right) \;\middle|\; \ell(\h(x), y)=0 \right],\\
\nonumber \text{and}\ \TripleC \defeq \! &\underset{(x, y) \sim \Dcal}{\Esp}\left[\underset{\h^{\prime} \sim \post}{\Esp} \ell\left(\h^{\prime}(x), y\right) \;\middle|\; \ell(\h(x), y)=1\right].
\end{align}
\end{restatable}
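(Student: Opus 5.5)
The plan is to apply the law of total expectation to the Gibbs risk, conditioning on the value of the binary loss $\ell(\h(x),y)\in\{0,1\}$ incurred by the fixed classifier $\h$. We start from \Cref{eq:gibbs} and use Fubini (the loss is bounded, so the expectations over $\post$ and $\Dcal$ may be swapped):
\begin{align*}
\GibbsRisk{\Dcal} = \Esp_{(x,y)\sim\Dcal} \Big[\Esp_{\h'\sim\post}\ell(\h'(x),y)\Big].
\end{align*}
We then split on the event $E_1=\{\ell(\h(x),y)=1\}$ and its complement $E_0$. By definition of the true risk, $\Pbb_{(x,y)\sim\Dcal}[E_1]=\Hrisk{\Dcal}$ and $\Pbb[E_0]=1-\Hrisk{\Dcal}$. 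The conditional expectations of $\Esp_{\h'\sim\post}\ell(\h'(x),y)$ on $E_0$ and $E_1$ are exactly $\TripleB$ and $\TripleC$. Hence
\begin{align*}
\GibbsRisk{\Dcal} = (1-\Hrisk{\Dcal})\,\TripleB + \Hrisk{\Dcal}\,\TripleC = \TripleB + \Hrisk{\Dcal}\big(\TripleC-\TripleB\big).
\end{align*}
Under the assumption $\TripleC-\TripleB\neq 0$, solving this affine relation for $\Hrisk{\Dcal}$ gives \Cref{eq:decomposition-ben}.

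The only delicate point is that a conditional expectation is undefined when its conditioning event has probability zero. This happens when $\Hrisk{\Dcal}\in\{0,1\}$. Take $\Hrisk{\Dcal}=0$: the term multiplied by $\Pbb[E_1]=0$ vanishes whatever value $\TripleC$ is given. The identity $\GibbsRisk{\Dcal}=\TripleB$ still holds, so formula \Cref{eq:decomposition-ben} returns $0$ for any convention on $\TripleC$ with $\TripleC\neq\TripleB$; the case $\Hrisk{\Dcal}=1$ is symmetric. We will therefore add a short remark fixing an arbitrary convention for the undefined term, under which the stated assumption is meaningful. Apart from this degenerate case, the proof is a direct computation.
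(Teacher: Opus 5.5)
Your proposal is correct and follows the paper's proof essentially line for line: both apply the law of total expectation to the Gibbs risk, conditioning on the value of $\ell(\h(x),y)$, identify $\Pbb[\ell(\h(x),y)=1]=\Hrisk{\Dcal}$, and solve the resulting affine relation using $\TripleC\neq\TripleB$. Your Fubini justification and your remark on the degenerate cases $\Hrisk{\Dcal}\in\{0,1\}$ are small additions. The paper leaves both points implicit, and your handling of them is sound.
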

\begin{proof}
  The proof is recalled in Appendix~\ref{app:triple_decomposition}.
\end{proof}
\Cref{eq:decomposition-ben} relates the risk of a classifier $\Hrisk{\Dcal}$ to the Gibbs risk $\Risk_{\Dcal}(\Gibbs_\post)$ and the terms $\TripleB$ and $\TripleC$ that measure how $\post$ conditionally behaves on correct and incorrect predictions of $\h$.
This suggests that controlling $\Hrisk{\Dcal}$ requires more than solely a bound on $\Risk_{\Dcal}(\Gibbs_\post)$. 
Thus, bounding $\Hrisk{\Dcal}$ reduces to obtaining simultaneous bounds on three quantities: $\Risk_{\Dcal}(\Gibbs_\post)$, $\TripleB$, and $\TripleC$.

To be able to extend PAC-Bayesian bounds beyond the stochastic setting for $\FairRisk{\Dcal}{h}\!=\!|\Hrisk{\Dcal_{|\GrOne\!}} \!-\! \Hrisk{\Dcal_{|\GrTwo\!}}|$, we need both the following upper and lower bounds on the risk $\Hrisk{\Dcal}$ of a deterministic classifier.
\allowdisplaybreaks[4]
\begin{restatable}[Bounds on $\Hrisk{\Dcal}$]{lemma}{DetBounds}
\label{lemma:det_upper_lower}
For any distribution $\Dcal$, any hypothesis set~$\Hcal$, any prior $\prior$ on~$\Hcal$,
let $\post$ be a distribution on $\Hcal$, and $\h\! \in \!\Hcal$, and $\delta\!\in\! (0,1]$.\\
\textbf{Upper bound} \citep{Leblanc25}\textbf{:}
With probability at least $1{-}\delta$ on \mbox{$S \!\sim\! \Dcal^m$}, if\\[1mm]
$$
\GibbsRisk{\Dcal} \le \bound_{S}^{\uparrow}(\post, \prior, \delta, \Gibbs_{\post}),\quad \TripleB \ge \DownTripleB,\quad\ \TripleC \ge \DownTripleC,
$$
with $\DownTripleC > \DownTripleB$, then we have
\begin{align}
\label{lemma:det_upper}
\!\!\!\!\!\!\Risk_{\Dcal}(\h) \!\le\!
 \frac{\bound_{S}^{\uparrow}(\post, \prior, \delta, \Gibbs_{\post}) \!-\! \DownTripleB}{\DownTripleC \!-\! \DownTripleB} \!\defeq\! \displaystyle\BoundMV_{S_{}}^{\uparrow}(\post, \prior, \delta, h).\!\!
\end{align}
\textbf{Lower bound:}
With probability at least $1{-}\delta$ on $S \!\sim\! \Dcal^m$, if\\[1mm]
$$
\GibbsRisk{\Dcal} \ge \bound_{S}^{\downarrow}(\post, \prior, \delta, \Gibbs_{\post}),\quad \TripleB \le \UpTripleB,\quad\ \TripleC \le \UpTripleC,
$$
with $\UpTripleC > \UpTripleB$,
then we have
\begin{align}
\label{lemma:det_lower}
\!\!\!\!\!\!\Risk_{\Dcal}(\h) \!\ge\!
\frac{\bound_{S}^{\downarrow}(\post, \prior, \delta, \Gibbs_{\post})\! - \! \UpTripleB}{\UpTripleC \!- \! \UpTripleB} \! \defeq \! \BoundMV_{S_{}}^{\downarrow}(\post, \prior, \delta, h).\!\!
\end{align}
\end{restatable}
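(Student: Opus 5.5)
The plan is to work directly from the decomposition of \Cref{proposition:triple_decomposition}, writing $G\defeq\GibbsRisk{\Dcal}$, $\b\defeq\TripleB$, $\c\defeq\TripleC$ and $r\defeq\Hrisk{\Dcal}$. Two facts are used throughout. First, $\b,\c\in[0,1]$ and, by the law of total expectation conditioned on the event $\ell(\h(x),y)\in\{0,1\}$,
\begin{align*}
G=(1-r)\,\b+r\,\c .
\end{align*}
This identity holds even when $\c=\b$, so we can use it directly instead of the quotient form. Second, the conclusion of each part of the lemma is stated on the event where the three listed inequalities hold. So the probabilistic content is only that the Gibbs-risk inequality holds with probability at least $1-\delta$ by \Cref{theorem:classical_kl}, through \Cref{eq:borne-sup-kl} or \Cref{eq:borne-inf-kl}. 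The remaining work is a deterministic implication.

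\textbf{Upper bound.} Write $U\defeq\bound_{S}^{\uparrow}(\post,\prior,\delta,\Gibbs_\post)$, $b\defeq\DownTripleB$ and $c\defeq\DownTripleC$, and assume $G\le U$, $\b\ge b$, $\c\ge c$ and $c>b$. Subtracting $b$ from both sides of the identity above gives
\begin{align*}
G-b=(1-r)(\b-b)+r(\c-b)\ \ge\ r(\c-b)\ \ge\ r(c-b),
\end{align*}
using $1-r\ge0$, $\b-b\ge0$, $r\ge0$ and $\c\ge c$. Combining with $G\le U$ yields $r(c-b)\le U-b$. Dividing by $c-b>0$ gives \Cref{lemma:det_upper}.

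\textbf{Lower bound.} Write $L\defeq\bound_{S}^{\downarrow}(\post,\prior,\delta,\Gibbs_\post)$, $b'\defeq\UpTripleB$ and $c'\defeq\UpTripleC$, and assume $G\ge L$, $\b\le b'$, $\c\le c'$ and $c'>b'$. Symmetrically,
\begin{align*}
G-b'=(1-r)(\b-b')+r(\c-b')\ \le\ r(\c-b')\ \le\ r(c'-b').
\end{align*}
Combining with $G\ge L$ gives $L-b'\le r(c'-b')$. Dividing by $c'-b'>0$ gives \Cref{lemma:det_lower}.

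The main subtlety is not to manipulate the quotient in \Cref{eq:decomposition-ben} monotonically in each argument. That route requires sign information about $\c-\b$ and about $G-\b$, and it fails when $\c\le\b$. Working with the convex-combination identity avoids this, because only the signs of $r$, $1-r$ and the bound gaps are needed. A secondary point is how the probability is stated. The bounds on $\b$ and $\c$ are assumed as hypotheses of the lemma, not derived here. Hence the probability $1-\delta$ comes solely from the Gibbs-risk bound of \Cref{theorem:classical_kl}. Obtaining $\DownTripleB$ and the other conditional bounds with their own confidence levels, followed by a union bound, is left to the later specialization.
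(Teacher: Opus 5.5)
Your proof is correct and is essentially the paper's argument. Both start from the identity $\GibbsRisk{\Dcal}=(1-\Hrisk{\Dcal})\,\TripleB+\Hrisk{\Dcal}\,\TripleC$, obtained by conditioning on $\ell(\h(x),y)$; both then replace $\TripleB$ and $\TripleC$ termwise by their assumed bounds, divide by the positive gap, and take the $1-\delta$ solely from the Gibbs-risk bound. The one minor difference is that you note the identity does not need $\TripleC\neq\TripleB$, an assumption the paper's proof states at the outset but never uses.
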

\begin{proof}
    The proof is deferred to Appendix~\ref{app:proof:det_upper_lower}.
\end{proof}
At this step, the upper $^\uparrow$ and lower $^\downarrow$ bounds on $\TripleB$ and $\TripleC$ are assumed to exist.
In fact, \Cref{lemma:det_upper,lemma:det_lower} formalize that high-probability upper and lower bounds on the risk of a deterministic classifier $\h$ is obtainable from a PAC-Bayesian bound on the 
Gibbs risk and additional bounds on the conditional expectations $\TripleB$ and $\TripleC$.
 \Cref{sec:mv-fair} shows how one can compute these bounds when $h$ is a majority vote.

\looseness=-1
We now derive our main result: A high-probability upper bound for the fairness general measure of a deterministic $h$. 
\begin{theorem}\label{theorem:deterministic_abs_dif}
For any distribution $\Dcal$, any hypothesis set~$\Hcal$, any prior $\prior$ on $\Hcal$, and $\delta\!\in\! (0,1]$, with probability at least $1{-}\delta$ over the random choice of $S{\sim} \Dcal^m$ we have for any $\post$ over $\Hcal$ and any $\h \!\in\! \Hcal$
\begin{align*}
 \FairRisk{\Dcal}{\h}  \leq  \max\Big\{&\BoundMV_{S_{|\GrOne}\!}^{\uparrow}(\post, \prior, \tfrac{\delta}{4}, h) - \BoundMV_{S_{|\GrTwo}\!}^{\downarrow}(\post, \prior, \tfrac{\delta}{4}, h),~\BoundMV_{S_{|\GrTwo}\!}^{\uparrow}(\post, \prior, \tfrac{\delta}{4}, h) - \BoundMV_{S_{|\GrOne}\!}^{\downarrow}(\post, \prior, \tfrac{\delta}{4}, h)\Big\}.
\end{align*}
\end{theorem}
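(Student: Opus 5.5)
The plan is to mirror the proof of \Cref{theorem:gibbs_fairness_bound}, with the Gibbs bounds of \Cref{eq:borne-sup-kl,eq:borne-inf-kl} replaced by the deterministic bounds of \Cref{lemma:det_upper_lower}. We split $\FairRisk{\Dcal}{\h}=|\Hrisk{\DgOne}-\Hrisk{\DgTwo}|$ into its two signed differences and handle each with one upper and one lower bound on subgroup risks. The confidence budget $\delta$ is divided into four pieces of $\tfrac{\delta}{4}$: the upper bound \Cref{lemma:det_upper} and the lower bound \Cref{lemma:det_lower}, each applied to each of the two groups.

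First, for group $\GrOne$, we apply the upper bound of \Cref{lemma:det_upper_lower} to the conditional distribution $\DgOne$ with sample $S_{|\GrOne}$ and confidence $\tfrac{\delta}{4}$. Its hypotheses are the following events:
\begin{itemize}
\item the upper bound of \Cref{theorem:classical_kl} on $\Risk_{\DgOne}(\Gibbs_\post)$;
\item the lower bounds on $\bDg{\GrOne}^\post(\h)$ and $\cDg{\GrOne}^\post(\h)$.
\end{itemize}
On that event, with the lemma's condition that the lower bound on $\cDg{\GrOne}^\post(\h)$ exceeds the one on $\bDg{\GrOne}^\post(\h)$, we get $\Hrisk{\DgOne}\le\BoundMV_{S_{|\GrOne}}^{\uparrow}(\post,\prior,\tfrac{\delta}{4},\h)$. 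Symmetrically, the lower bound of \Cref{lemma:det_upper_lower} applied to $\DgTwo$ gives $\Hrisk{\DgTwo}\ge\BoundMV_{S_{|\GrTwo}}^{\downarrow}(\post,\prior,\tfrac{\delta}{4},\h)$. Subtracting bounds $\Hrisk{\DgOne}-\Hrisk{\DgTwo}$ by the first argument of the max. Exchanging the roles of $\GrOne$ and $\GrTwo$ bounds $\Hrisk{\DgTwo}-\Hrisk{\DgOne}$ by the second argument. A union bound over the four failure events, each of probability at most $\tfrac{\delta}{4}$, makes all four inequalities hold simultaneously with probability at least $1-\delta$. Since $|u|=\max\{u,-u\}$, this gives the stated bound.

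Two technical points need care.

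\emph{Conditioning on the subgroup samples.} The subgroup sizes $m_g$ are random, and the lemma is stated for an \iid sample from a fixed distribution. I would condition on the group labels $(g_i)_{i=1}^m$. Given these labels, $S_{|g}$ is an \iid sample of size $m_g$ from $\Dcal_{|g}$, so the lemma applies conditionally with probability at least $1-\tfrac{\delta}{4}$. Integrating over the labels preserves this probability.

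\emph{Uniformity over $\post$ and $\h$.} This is the main obstacle. The statement holds for all $\post$ and all $\h$ simultaneously, but \Cref{lemma:det_upper_lower} is stated for a fixed pair $(\post,\h)$ and presupposes the bounds on $\bDg{g}^\post(\h)$ and $\cDg{g}^\post(\h)$. The Gibbs part is already uniform over $\post$ by \Cref{theorem:classical_kl}. For the conditional terms, I would assume, as the lemma does, that the bounds $\DownTripleB$, $\UpTripleB$, $\DownTripleC$, $\UpTripleC$ come from PAC-Bayesian arguments that are themselves uniform over $\post$ and $\h$. Their failure probability then has to be absorbed into each $\tfrac{\delta}{4}$ share, which is how the $\delta$ inside $\BoundMV^{\uparrow}$ and $\BoundMV^{\downarrow}$ is meant. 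If needed, I would split each $\tfrac{\delta}{4}$ further among the Gibbs bound and the two conditional bounds, deferring the concrete construction to \Cref{sec:mv-fair}.
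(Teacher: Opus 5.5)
Your proposal is correct and follows the paper's proof essentially step for step. It uses four applications of \Cref{lemma:det_upper_lower} at level $\tfrac{\delta}{4}$ (an upper and a lower bound for each of $\DgOne$ and $\DgTwo$), a union bound, and $|u|=\max\{u,-u\}$. Your conditioning and uniformity remarks make explicit points the paper leaves implicit; no further splitting of $\tfrac{\delta}{4}$ is needed, because the bounds on $\TripleB$ and $\TripleC$ from \Cref{pr:bound_c_b} hold deterministically for every distribution, so only the Gibbs bound of \Cref{theorem:classical_kl} consumes confidence.
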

\begin{proof}
Using \Cref{proposition:triple_decomposition}, we rewrite the risk difference for a classifier.
For any $\Dcal$ on $\Xcal$, any $\Qcal$ on~$\Hcal$, and any $\h\!\in\!\Hcal$ \textit{s.t.} $\GrTripleB{\GrOne} \!\neq\! \GrTripleC{\GrOne}$ and $\GrTripleB{\GrTwo} \!\neq\! \GrTripleC{\GrTwo}$, we have
\begin{align*}  
\GrHrisk{\Dcal}{|\GrOne\!}\!{-}\GrHrisk{\Dcal}{|\GrTwo\!}\! 
=\! 
\frac{\GrGibbsRisk{\Dcal}{|\GrOne\!} \!{-} \GrTripleB{\GrOne\!\!}}{\GrTripleC{\GrOne\!\!} 
\!-\! 
\GrTripleB{\GrOne\!\!}}{-}\frac{\GrGibbsRisk{\Dcal}{|\GrTwo\!}\! {-} \GrTripleB{\GrTwo\!\!}}{\GrTripleC{\GrTwo\!\!} {-} \GrTripleB{\GrTwo\!\!}}
\end{align*}
We apply \Cref{lemma:det_upper} to upper-bound the risk decomposition for group $\GrOne$ with probability $1{-}\frac{\delta}{4}$, and \Cref{lemma:det_lower} to lower-bound the  decomposition for group $\GrTwo$ with probability $1{-}\frac{\delta}{4}$.
We combine them via a union bound to obtain, with probability at least $1{-}\tfrac{\delta}{2}$ on $S\!\sim\!\Dcal^m$, that for all $\post\mbox{ on } \Hcal$,\\[1mm]
\centerline{$\displaystyle
\GrHrisk{\Dcal}{|\GrOne}-\GrHrisk{\Dcal}{|\GrTwo}  \le \BoundMV_{S_{|\GrOne}\!}^{\uparrow}(\post, \prior, \tfrac{\delta}{4}, h)-\BoundMV_{S_{|\GrTwo}\!}^{\downarrow}(\post, \prior, \tfrac{\delta}{4}, h).$}\\[1mm]
Using \mbox{$|c{-}d|\!=\!\max\{c{-}d,\,d{-}c\},\,\forall c,d\in\Rbb$}, we apply the same argument symmetrically. 
By union bound, both results hold simultaneously.
Taking the maximum of the resulting upper bounds yields the desired result.
\end{proof}%
Importantly,~\Cref{theorem:deterministic_abs_dif} differs from standard results in PAC-Bayes in two ways: \textit{(i)}~It provides a generalization guarantee on the difference of two risks instead of an upper bound on a single risk, in the same fashion as~\Cref{theorem:gibbs_fairness_bound}; \textit{(ii)}~It relates to a single deterministic classifier $h$. 
Point \textit{(ii)} is of primary interest since, as we mentioned earlier, it is not possible to relate the difference between two risks defined for the deterministic majority vote $\MajVote{\post}$ with the difference of two Gibbs risks. 
Therefore, a bound on the latter quantity cannot be used to infer any information about $\MajVote{\post}$ (which is the major limitation of the result of~\citet{oneto2020}).

\subsection{Specialization to Weighted Majority Votes}
\label{sec:mv-fair}

\looseness=-1
To make \Cref{theorem:deterministic_abs_dif} computable, we need upper $^\uparrow$ and lower $^\downarrow$ bounds on the conditional expectations $\TripleB$ and~$\TripleC$.
To do so, we follow the approach of \citet{Leblanc25}. 
We assume a finite set of $n$ base classifiers $\Fcal\!=\!\{f_1,\dots,f_n\}$, with \mbox{$f_i\!:\!\Xcal\!\to\!\Ycal$}.
Each hypothesis is represented as a weighted majority vote on~$\Fcal$, parameterized by a weight vector $\wbf\! \in\! \mathbb{R}^n$.
Accordingly, the hypothesis set $\Hcal$ is defined as
\begin{align}\label{eq:hspace_mv}
\Hcal \defeq \Big\{ h_{\wbf} \defeq 
\sum_{i=1}^n w_i f_i \;\big|\; \wbf \in  \mathbb{R}^n \Big\}.
\end{align}
Thus, a hypothesis $h_{\wbf}\!\in\Hcal$ is fully specified by its associated weight vector $\wbf$.
In this setting, a posterior distribution $\post$ over $\Hcal$ is equivalently defined as a distribution over the weight vectors~$\wbf$.
Furthermore, we consider three families of posterior distributions, each parameterized by $\alphabf\!\in\!\Rbb^n$, which admit different sets of realizations $\wbf$:\\[1mm]
\phantom{a}\textbullet~Categorical $\cat(\alphabf)$, where $\wbf \!\in\! \{0,1\}^n$ with $\sum_i w_i \!=\!  1$,\\[1mm] 
\phantom{a}\textbullet~Dirichlet $\dir(\alphabf)$, where $\wbf$ lies on the probability simplex, \ie, $\wbf\! \in\! [0,1]^n$ with $\sum_i w_i \!=\! 1$, \\[1mm]
\phantom{a}\textbullet~Unit-variance Gaussian $\gaus(\alphabf, \Ibf_{n\times n})$, where  $\wbf\! \in\! \mathbb{R}^n$.

\looseness=-1
In this context, \citet{Leblanc25} proposed to learn the deterministic classifier $\halpha\!\in\!\Hcal$. 
When~$\posterior$ is Categorical or Gaussian \citep{langford2002pac}, $\alphabf$ acts both as the parameter of the studied classifier $\halpha$ and the parameters of a posterior distribution $\rho$ over $\Hcal$ parametrized by~$\alphabf$.\footnote{For the Dirichlet distribution, one needs to normalize $\alphabf$, see \citet[Sec.\,4.2,][]{Leblanc25} for details.}
The latter expresses a majority vote $\MajVote{\rho}$ as in \Cref{eq:mv}.
It remains to bound $\bD^\post(\h_\alphabf)$ and $\cD^\post(\h_\alphabf)$. 
The trick is to find their smallest and biggest attainable value. 
This can be done by applying the partition problem algorithm to $\alphabf$:
\begin{align*}
    \argmin_{\alphabf_1, \alphabf_2} \left\{\left| \sum_{\alpha \in \alphabf_1}\!\alpha-\!\!\!\sum_{\alpha \in \alphabf_2}\!\!\alpha\,\right|\!:\!\{\alphabf_1, \alphabf_2\}\, \text{is a partition of }\alphabf \right\}.
\end{align*}
\Cref{pr:bound_c_b} gives tractable upper and lower bounds on $\bD^\post(\h_\alphabf)$ and $\cD^\post(\h_\alphabf)$. 
The lower bounds are due to \citet{Leblanc25}, while the upper bounds are novel.

\begin{restatable}{proposition}{BoundsBC}\label{pr:bound_c_b}
    Let $\alphabf_1$ and $\alphabf_2$ be the result of the partition problem applied to~$\alphabf$. 
    Let \mbox{$\overset{\sim}{\alphabf} \!=\! \max\left(\sum_{\alpha\in\alphabf_1}\!\alpha, \sum_{\alpha\in\alphabf_2}\!\alpha\right)$}, and \mbox{$\overline{\alphabf} \!=\! \left|\sum_{\alpha\in\alphabf_1}\!\alpha\! -\! \sum_{\alpha\in\alphabf_2}\!\alpha\right|$}. 
    For any distribution~$\Dcal$, with \mbox{$\ell(h_{\alphabf}(x), y)\! =\! \mathbf{I}\big[\sign(h_{\alphabf}(x)) \neq y\big]$}, we have
\begin{enumerate}[itemsep=5mm,parsep=-3mm]
\item If  $\post=\cat(\alphabf)$, then \\[1mm]
       \centerline{$0 \leq \TripleBH \leq 1-\overset{\sim}{\alphabf}$,\ \ and \ \ $\overset{\sim}{\alphabf} \leq \TripleCH \leq 1$;}\\
\item If $\post=\dir(\alphabf)$, with $I_{0.5}(\cdot)$ the regularized incomplete beta function evaluated at $0.5$, then\\[1mm]
        \centerline{\phantom{and}\ \ $0 \leq \TripleBH \leq I_{0.5}\big(\overset{\sim}{\alphabf}, ~\|\alphabf\|_1-\overset{\sim}{\alphabf}\big)$,}\\[1mm]
        \centerline{and \ \ $I_{0.5}\big(\|\alphabf\|_1-\overset{\sim}{\alphabf}, ~\overset{\sim}{\alphabf}\big) \leq \TripleCH \leq 1$;}\\
\item If $\post=\gaus(\alphabf,\Ibf_{n\times n})$, with $\Phi(k)\!\defeq\!\frac12[1\!-\! \operatorname{Erf}(\frac{k}{\sqrt{2}})]$ and $\operatorname{Erf}(k)\!\defeq\!\frac{2}{\sqrt{\pi}}\int_{0}^{k} e^{-t^2}dt$, then
        \begin{align*}
         \Phi\left(\frac{\|\alphabf\|_1}{\sqrt{n}}\right) &\leq \TripleBH \leq \Phi\left(\frac{\overline{\alphabf}}{\sqrt{n}}\right),\\
        \text{and}\ \  1- \Phi\left(\frac{\overline{\alphabf}}{\sqrt{n}}\right) &\leq \TripleCH \leq 1-\Phi\left(\frac{\|\alphabf\|_1}{\sqrt{n}}\right).
        \end{align*}\\
\end{enumerate}
\end{restatable}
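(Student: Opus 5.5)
The plan is a pointwise reduction. Fix an example $(x,y)$ and write $m_i \defeq y f_i(x)\in\{-1,+1\}$. The inner quantity
$$p_\post(x,y) \defeq \Esp_{\h'\sim\post}\ell(\h'(x),y) = \Pbb_{\wbf\sim\post}\Big[\textstyle\sum_i w_i m_i < 0\Big]$$
then depends only on the sign pattern $\mathbf{m}\in\{-1,+1\}^n$. Let $\mathbf{m}$ split the indices into $P=\{i: m_i=+1\}$ and $N=\{i:m_i=-1\}$.

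Both $\TripleBH$ and $\TripleCH$ are conditional expectations of $p_\post(x,y)$. The conditioning events are $\{\sum_i\alpha_i m_i>0\}$ (i.e.\ $\halpha$ correct) and $\{\sum_i \alpha_i m_i<0\}$ (i.e.\ $\halpha$ wrong). A conditional expectation always lies between the infimum and the supremum of its integrand over the conditioning event. So it suffices to bound $p_\post$ uniformly over all sign patterns that make $\halpha$ correct, respectively wrong. The trivial bounds $0$ and $1$ need no argument.

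For each family I would first write $p_\post$ explicitly as a function of $A_P=\sum_{i\in P}\alpha_i$ and $A_N=\sum_{i\in N}\alpha_i$, with $A_P+A_N=\|\alphabf\|_1$ (taking $\alphabf\ge 0$, normalized where needed, for $\cat$ and $\dir$).
\begin{itemize}
\item For $\cat(\alphabf)$, $\wbf$ is a single basis vector, so $p_\post=\sum_{i\in N}\alpha_i=A_N$ with $\|\alphabf\|_1=1$.
\item For $\dir(\alphabf)$, the aggregation property gives $\sum_{i\in P}w_i\sim\Beta(A_P,A_N)$. Hence $p_\post=\Pbb[\Beta(A_P,A_N)<1/2]=I_{0.5}(A_P,A_N)$.
\item For $\gaus(\alphabf,\Ibf)$, $\sum_i w_im_i\sim\Ncal(A_P-A_N,\,n)$ when $\alphabf\ge0$ (more generally the mean is $\sum_i\alpha_im_i$). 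Hence $p_\post=\Phi\big((\sum_i\alpha_i m_i)/\sqrt n\big)$ in the paper's notation, where $\Phi(k)=\Pbb[Z>k]$.
\end{itemize}
In every case $p_\post$ is monotone decreasing in the margin $\sum_i\alpha_im_i$. For the Beta case this uses the fact that $I_{0.5}(a,b)$ decreases in $a$ and increases in $b$.

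The conditional bounds then come from optimizing the margin over the admissible sign patterns. On the correct event the margin is positive and at most $\|\alphabf\|_1$ (all $m_i=+1$). This gives the Gaussian lower bound $\TripleBH\ge\Phi(\|\alphabf\|_1/\sqrt n)$, which is the known result of \citet{Leblanc25}. For the upper bound on $\TripleBH$ I need the smallest positive value the margin can take. Each pattern $\mathbf m$ corresponds to a partition $\{P,N\}$ of $\alphabf$, with margin $A_P-A_N$. The smallest positive margin is therefore at least the partition-problem optimum $\overline{\alphabf}$. So $p_\post\le\Phi(\overline{\alphabf}/\sqrt n)$. For $\cat$ and $\dir$ the same partition argument shows that $A_P\ge\overset{\sim}{\alphabf}$ whenever $A_P>A_N$, since $\overset{\sim}{\alphabf}=(\|\alphabf\|_1+\overline{\alphabf})/2$. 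This yields $p_\post=A_N\le 1-\overset{\sim}{\alphabf}$ and $p_\post\le I_{0.5}(\overset{\sim}{\alphabf},\|\alphabf\|_1-\overset{\sim}{\alphabf})$. The $\TripleCH$ bounds follow by the symmetric argument on the wrong event, where the margin is $\le-\overline{\alphabf}$ and $\ge-\|\alphabf\|_1$. Here I use $1-\Phi(k)=\Phi(-k)$ and $I_{0.5}(b,a)=1-I_{0.5}(a,b)$.

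The main obstacle is justifying the partition step carefully. I must argue that "minimal positive $|A_P-A_N|$ over realizable sign patterns" is bounded below by the unconstrained partition optimum. The argument is that the realizable patterns form a subset of all partitions, so their minimum can only be larger. I also need to handle ties where the margin equals $0$, which I would exclude via the sign convention or a measure-zero argument. A secondary check is the monotonicity of $I_{0.5}$ in each argument, which I would obtain from a coupling of Gamma variables.
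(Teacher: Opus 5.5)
Your proposal is correct and takes essentially the same route as the paper. You bound each conditional expectation by the extremes of $\Esp_{h'\sim\post}\ell(h'(x),y)$ over the conditioning event, write that quantity as a monotone function of the vote margin (the paper uses the equivalent $\alphabf^{\pm}$), and relax realizable sign patterns to all partitions of $\alphabf$, so the partition-problem optimum controls the extremal margins. One simplification: ties need no measure-zero argument, which would in fact fail for a $\Dcal$ that puts mass on zero-margin points. A realizable zero margin forces $\overline{\alphabf}=0$ and $\overset{\sim}{\alphabf}=\|\alphabf\|_1/2$, so the tie value $1/2$ meets the relevant bound whichever event the tie is assigned to.
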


\begin{proof}
    The proof is deferred to Appendix~\ref{appendix:proof:bound_b_c}. 
\end{proof}
The conditions required to apply \Cref{lemma:det_upper_lower} (namely, $\DownTripleC \!>\! \DownTripleB$ and $\UpTripleC \!>\! \UpTripleB$) are satisfied for each considered distribution in \Cref{pr:bound_c_b}.
This allows us to apply 
\Cref{theorem:deterministic_abs_dif} to the deterministic majority vote~$\MajVote{\post}$, and thus to learn a majority vote by minimizing the resulting bound as described in the next section.

\section{A Fair Self-Bounding Algorithm}
\label{sec:algo}
\looseness=-1
We now derive our learning procedure, which consists of the minimization of a trade-off between the prediction risk and a fairness measure. 
For a given hypothesis set $\Hcal$ and a trade-off parameter $\lambda \!\in\! [0,1]$, we want to solve
\begin{align}
\label{eq:tradeoff_classic_h}
&\textstyle \argmin_{h\in\Hcal}~(1{-}\lambda)\Hrisk{\Dcal} + \lambda~\FairRisk{\Dcal}{h}\,.
\end{align}
Balancing this trade-off yields a fairness-aware learning problem \citep{menon2018tradeoff}, where the emphasis on fairness depends on 
$\lambda$.
Since $\Dcal$ is unknown, \Cref{eq:tradeoff_classic_h} cannot be solved directly. 
A common strategy is to use the empirical risk minimization (ERM)
principle by replacing $\Risk_\Dcal(\cdot)$ and $\RF_{\Dcal}(\cdot)$ with their empirical counterpart.
In such a situation, generalization guarantees are typically obtained post-hoc, by deriving bounds for the learned classifier.

\looseness=-1
A novelty of our approach is to directly optimize generalization guarantees on $\Risk_\Dcal(\cdot)$ and $\RF_{\Dcal}(\cdot)$. 
This amounts to replacing them in \Cref{eq:tradeoff_classic_h} by their respective generalization upper bounds using a union bound (the complete bounds are given in in~\Cref{theorem:trade-off_randomized,theorem:trade-off_deterministic} in Appendix~\ref{app:trade-off_proofs}).
In the PAC-Bayesian setting of \Cref{section:fairness_generalization_bounds}, where $\Hcal$ is defined as in \Cref{eq:hspace_mv} and the posterior~$\posterior$ is either $\cat(\alphabf)$, $\dir(\alphabf)$, or $\gaus(\alphabf)$, this leads to learning either a stochastic classifier\footnote{The Gibbs classifier on a set of majority vote is also referred to as the stochastic majority vote~\citep{zantedeschi2021learning}.} $\Gibbs_\post$ or a deterministic majority vote \mbox{$\MajVote{\post}\!=\!\halpha$}, depending on the bounds used as training objectives:
$\Gibbs_\post$ is learned by optimizing the bounds from \Cref{theorem:classical_kl} and \Cref{theorem:gibbs_fairness_bound}; $\MajVote{\post}$ is learned by optimizing the bounds from \Cref{lemma:det_upper_lower} (\Cref{lemma:det_upper}) and \Cref{theorem:deterministic_abs_dif}.
Given a learning sample~$S$, we respectively obtain the following learning objectives:
\begin{align}
\nonumber 
\objG_{S}(\posterior) =\ & (1{-}\lambda)\ \bound_{S}^{\uparrow}(\post, \prior, \tfrac{\delta}{2}, \Gibbs_{\post}) \\  & + \lambda\max\big\{\bound_{S_{|a\!}}^{\uparrow}(\post, \prior, \tfrac{\delta}{8}, \Gibbs_\post) \!-\!\bound_{S_{|b\!}}^{\downarrow}(\post, \prior, \tfrac{\delta}{8}, \Gibbs_{\post}),\ 
 \bound_{S_{|b\!}}^{\uparrow}(\post, \prior, \tfrac{\delta}{8}, \Gibbs_{\post}) \!-\! \bound_{S_{|a\!}}^{\downarrow}(\post, \prior, \tfrac{\delta}{8}, \Gibbs_{\post})\big\}\,,\label{eq:training_objective_stochastic}\\
\text{and}\quad \objMV_{S}(\posterior) =\ & (1{-}\lambda)\ \BoundMV_{S_{}}^{\uparrow}(\post, \prior, \tfrac{\delta}{2}, \MajVote{\post})\nonumber \\
& +\lambda\max\big\{\BoundMV_{S_{|\GrOne}\!}^{\uparrow}(\post, \prior, \tfrac{\delta}{8}, \MajVote{\post}) \!-\! \BoundMV_{S_{|\GrTwo}\!}^{\downarrow}(\post, \prior, \tfrac{\delta}{8}, \MajVote{\post}), \ \BoundMV_{S_{|\GrTwo}\!}^{\uparrow}(\post, \prior, \tfrac{\delta}{8}, \MajVote{\post}) \!-\! \BoundMV_{S_{|\GrOne}\!}^{\downarrow}(\post, \prior, \tfrac{\delta}{8}, \MajVote{\post})
\big\}.\label{eq:training_objective_det}
\end{align}
Here, the fairness measure, the confidence $\delta$, and the prior~$\prior$ are provided as inputs of the algorithm.
The associated minimization problems \mbox{$\argmin_{\posterior} \objG_{S}(\posterior)$} and \mbox{$\argmin_{\posterior} \objMV_{S}(\posterior)$} define self-bounding algorithms \citep{freund1998self}\footnote{Self-bounding algorithms have recently regained interest in PAC-Bayes, see \eg, \citet{rivasplata2022pac,viallard2023pac}.}, in the sense that the learned classifier is obtained by directly minimizing its own generalization bound.
Our overall minimization procedure is summarized in \Cref{alg:self_bounding}.
When learning a Gibbs classifier, we minimize $\objG_S(\post)$ using a standard gradient descent approach (Lines~1~to~9).
In contrast, when learning a majority vote $\MajVote{\posterior}$, directly minimizing $\objMV_{S}(\posterior)$ is not possible since the bounds on $\TripleB$ and $\TripleC$, as described in \Cref{pr:bound_c_b}, are by nature discrete.
To address this issue, we propose a two-phase optimization.
First, since the bounds in \Cref{theorem:classical_kl,theorem:gibbs_fairness_bound} (the content of $\objG_S(\post)$) are the backbone of the bounds in \Cref{lemma:det_upper_lower} and \Cref{theorem:deterministic_abs_dif} (the content of $\objMV_{S}(\posterior)$), we minimize $\objG_S(\post)$. 
Then, we apply the three following heuristics, ensuring that the bounds of \Cref{pr:bound_c_b} on $\bD^\post(\h_\alphabf)$ and $\cD^\post(\h_\alphabf)$, which play a crucial role in $\objMV_{S}(\posterior)$, are as tight as possible (Lines~10~to~12).

\begin{heuristic}\label{heur:1}
We clip the smallest absolute values of $\alphabf$ to~$0$, increasing $\overset{\sim}{\alphabf}$ and $\overline{\alphabf}$ in \cref{pr:bound_c_b}, until $\objMV_{S}(\posterior)$ does not decay anymore.
\end{heuristic}%
\begin{heuristic}\label{heur:2}
We apply a coordinate descent on the posterior values (increasing the largest components of $\alphabf$ and decreasing the smallest ones), directly increasing $\overset{\sim}{\alphabf}$ and $\overline{\alphabf}$ in \cref{pr:bound_c_b}, until $\objMV_{S}(\posterior)$ does not decay anymore.
\end{heuristic}%
\begin{heuristic}\label{heur:3}
Since $\overset{\sim}{\alphabf}$ and $\overline{\alphabf}$ grow linearly with $\|\alphabf\|_1$, we apply a multiplicative growing factor to each element in $\alphabf$ until $\objMV_{S}(\posterior)$ does not decay anymore.
\end{heuristic}

\begin{algorithm}[t] \small
\caption{Fair Self-Bounding Learning Algorithm}
\label{alg:self_bounding}
\begin{algorithmic}[1]

\REQUIRE Learning set $S\!=\!\{(x_i,y_i)\}_{i=1}^m$, epoch number~$T$, 
prior~$\prior$, confidence $\delta$, tradeoff param. $\lambda$, fairness measure (DP, EOP, or EP), Objective ($\objG_S(\post)$, or $\objMV_{S}(\posterior)$).
\FOR{$t = 1$ to $T$}
    \FORALL{mini-batches $U \subset S$ }
        \STATE Compute the empirical Gibbs risk $\GibbsRiskemp{U}$
        \STATE Compute the empirical fairness risk $\FairRiskemp{U}{\Gibbs_\post}$, 
        \STATE \qquad given the chosen fairness measure
        \STATE Compute the bound $\objG_U(\post)$
        \STATE Update $\post$ using a gradient step
    \ENDFOR
\ENDFOR
\IF{Objective is $\objMV_{S}(\posterior)$}
\STATE Update $\post$ using, in turn, \Cref{heur:1,heur:2,heur:3}
\ENDIF
\STATE \textbf{Output:} Posterior distribution $\post$

\end{algorithmic}
\end{algorithm}%

\begin{figure}[ht]
    \centering
    \begin{subfigure}{0.9\linewidth}
        \centering
        \includegraphics[width=\linewidth]{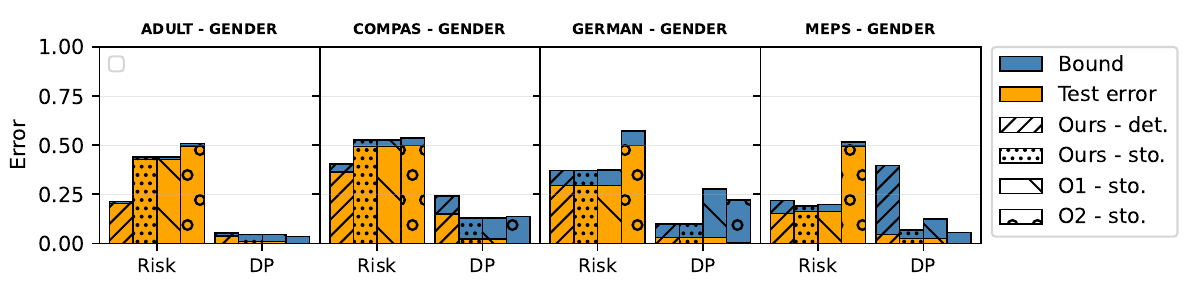}
        \caption{Sensitive attribute: Gender.}
        \label{fig:plot_DP_gender}
    \end{subfigure}

    \vspace{0.5em}

    \begin{subfigure}{0.9\linewidth}
        \centering
        \includegraphics[width=\linewidth]{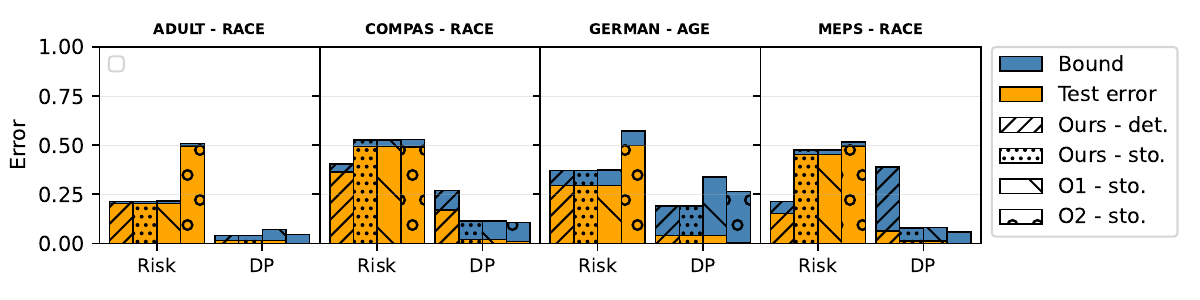}
        \caption{Sensitive attribute: Race or Age (depending on availability).}
        \label{fig:plot_DP_other}
    \end{subfigure}

    \caption{Test error and generalization bound of a stochastic majority vote classifier and its deterministic counterpart for Demographic Parity (DP).}
    \label{fig:plot_DP}
\end{figure}


\section{Experiments\protect\footnote{Our code is provided as supplementary material and will be publicly available upon publication.}}\label{sec:expe}

\looseness=-1
\textbf{Datasets.} Following the suggestions of \citet{HanCCW0Z024}, we use the AI Fairness 360 toolkit \citep{aif360} to empirically evaluate our theoretical framework on $5$ different datasets: Adult Census Income \citep[ADULT,][]{Kohavi96}, ProPublica Recidivism \citep[COMPAS,][]{compas}, German Credit \citep[GERMAN,][]{german_credit_data},
and Medical Expenditure Panel Surveys, version~1 \citep[MEPS,][]{AHRQ2015,AHRQ2016}. 
In each dataset, two sensitive attributes are considered, selected from $\{$age, gender, race$\}$. 
The test set consists of a random $20\%$ split of the available data. 
A thorough overview of the datasets is given in Appendix~\ref{app:datasets_overview}.

\looseness=-1
\textbf{Models.} We consider majority votes of data-independent hypotheses consisting of axis-aligned decision stumps, with thresholds evenly spread over the input space ($10$ per feature).
We independently experiment with all admissible posterior family (Categorical, Dirichlet, and Gaussian) and use a uniform prior from the same family as the posterior.

\textbf{Objective function.} Given a dataset and a sensitive attribute, the task is to minimize the learning objective given by Equations~\eqref{eq:training_objective_stochastic} or~\eqref{eq:training_objective_det}, instantiated by one fairness metric among DP, EO, or EOP, using~\Cref{alg:self_bounding}. 
Given a fixed objective function and a posterior distribution type, we consider $\lambda\!\in\!\{0, 0.1, 0.25, 0.5, 0.75, 0.9, 1\}$.

\textbf{Baselines.} We compare our approach to a self-bounding algorithm denoted by ``O1'',  obtained by the minimization of~\Cref{eq:training_objective_stochastic} when instantiated by the bound of~\citet{oneto2019} (\Cref{theorem:oneto_general}).
For completeness, we consider the method (which is not self-bounding) suggested by \citet{oneto2020} denoted by ``O2'' (recalled in Appendix~\ref{appendix:Oneto}) and for which we compute post-hoc bounds on $\FairRisk{\Dcal}{\Gibbs_\post}$ using a data-dependent prior. 
Developed for the EOP metric, we generalized it to the DP metric, but were not able to do so for the EO metric.
Moreover, O2, unstable for continuous distributions, is only used with the Categorical distribution. 

\textbf{Optimization.}
The models are trained using SGD with the Adam optimizer \cite{KingmaB14}.
We used a batch size equal to $1,024$, and a learning rate of $0.1$ with a scheduler reducing this parameter by a factor of $10$ with an epoch patience of $2$. 
The maximal number of epochs is $100$, and patience is $25$ for performing early stopping.

\textbf{Model selection.}
We report the results obtained by the model having the best average bound (over $5$ random seeds) for $\FairRisk{\Dcal}{h}$, out of the various $\lambda$ and distributions tested, for our method for deterministic classifiers and O1. 
For O2 and our method for a stochastic classifier, the criterion for model selection is the best bound on $\FairRisk{\Dcal}{\Gibbs_\post}$.

\looseness=-1
\textbf{Analysis.}
\Cref{fig:plot_DP} reports the results for Demographic Parity (numerical values, as well as results for EO and EOP, are provided in Appendix~\ref{app:experimentations_detailed}).

\textit{On bound tightness.}
Except for the deterministic classifier on MEPS, our proposed bounds are tight for both risk and fairness while achieving competitive or improved risk and DP values.
This behavior supports the use of self-bounding algorithms for learning models with their own guarantees.
Note that, since the deterministic classifier is learned from the stochastic one (Lines~1 to~9 of \Cref{alg:self_bounding}), the lack of tightness and degraded DP performance observed on MEPS (the highest-dimensional dataset) are likely due to the increased difficulty of applying the heuristics (Line~10) for this setting.
Additionally, when comparing O1 and our stochastic classifier, our bounds are either tighter (for ADULT, GERMAN, and MEPS-Gender) or similarly tight. 
This behavior is expected due to the nature of our bound based on a tighter PAC-Bayesian guarantee \citep[the one of][]{Seeger02}.
Lastly, since O2 is not a self-bounding algorithm, it does not provide tighter bounds than ours and is often worse than O1.\\[1mm]
\textit{On the learned trade-off risk/DP.}
Due to their similar nature, O1 and our stochastic classifier show very similar trade-offs. 
In contrast, our stochastic and deterministic classifiers may exhibit significantly different behaviors.
In particular, for ADULT-gender, COMPAS, and MEPS-race, the learned trade-off for the deterministic classifier favors lower risk at the cost of a higher DP (probably a consequence of enforcing determinism).
Note that O2 (stochastic and non-self-bounding) also behaves differently, yielding the lowest DP values on all datasets (except GERMAN-age); However, these improvements in DP are not significant and can come at the cost of a substantially higher risk (with the exception of ADULT-gender, COMPAS, and MEPS-race).
Interestingly, the deterministic classifier displays an opposite trend to O2. 
This may come from the fact that minimizing a bound captures additional information about the trade-off.

\textit{Take-home message.} 
Our results highlight that learning with self-certified fairness is not only theoretically sound but also practically effective: 
Tight PAC-Bayes bounds can be optimized, yielding \mbox{reliable risk/fairness trade-offs in practice}.

\section{Conclusion}
\label{sec:conclu}
\looseness=-1
In this paper, we introduce PAC-Bayesian generalization bounds on a general fairness risk and its extension to various fairness metrics for group fairness.
We instantiate such bounds on stochastic and deterministic classifiers defined as
majority votes and demonstrate the effectiveness of minimizing bounds to obtain fair majority votes with tight theoretical guarantees.
As future work, we plan to extend our framework to the multi-class setting and to non-binary sensitive attributes.

\subsubsection*{Acknowledgements.}
This work was supported in part by the French Project FAMOUS ANR-23-CE23-0019. Pascal Germain is supported by the NSERC Discovery grant RGPIN-2020-07223. Benjamin Leblanc is supported by a Mitacs Acceleration grant, in partnership with Intact Financial Corporation.  Paul Viallard is partially funded through Inria with the associate team PACTOL and the exploratory action HYPE.

\bibliography{references}

\begin{thebibliography}{48}
\providecommand{\natexlab}[1]{#1}
\providecommand{\url}[1]{\texttt{#1}}
\expandafter\ifx\csname urlstyle\endcsname\relax
  \providecommand{\doi}[1]{doi: #1}\else
  \providecommand{\doi}{doi: \begingroup \urlstyle{rm}\Url}\fi

\bibitem[Agarwal et~al.(2018)Agarwal, Beygelzimer, Dudik, Langford, and
  Wallach]{agarwal2018}
Agarwal, A., Beygelzimer, A., Dudik, M., Langford, J., and Wallach, H.
\newblock A reductions approach to fair classification.
\newblock In \emph{International Conference on Machine Learning}, 2018.

\bibitem[AHRQ(2015)]{AHRQ2015}
AHRQ.
\newblock Medical expenditure panel survey data: 2015 full year consolidated
  data file, 2015.

\bibitem[AHRQ(2016)]{AHRQ2016}
AHRQ.
\newblock Medical expenditure panel survey data: 2016 full year consolidated
  data file, 2016.

\bibitem[Angwin et~al.(2016)Angwin, Larson, Mattu, and Kirchner]{compas}
Angwin, J., Larson, J., Mattu, S., and Kirchner, L.
\newblock Machine bias: There's software used across the country to predict
  future criminals. and its biased against blacks.
\newblock ProPublica, 2016.

\bibitem[Bellamy et~al.(2019)Bellamy, Dey, Hind, Hoffman, Houde, Kannan, Lohia,
  Martino, Mehta, Mojsilovic, Nagar, Ramamurthy, Richards, Saha, Sattigeri,
  Singh, Varshney, and Zhang]{aif360}
Bellamy, R. K.~E., Dey, K., Hind, M., Hoffman, S.~C., Houde, S., Kannan, K.,
  Lohia, P., Martino, J., Mehta, S., Mojsilovic, A., Nagar, S., Ramamurthy,
  K.~N., Richards, J., Saha, D., Sattigeri, P., Singh, M., Varshney, K.~R., and
  Zhang, Y.
\newblock {AI Fairness} 360: An extensible toolkit for detecting,
  understanding, and mitigating unwanted algorithmic bias.
\newblock \emph{IBM Journal of Research and Development}, 2019.

\bibitem[Blanchard \& Fleuret(2007)Blanchard and Fleuret]{blanchard2007}
Blanchard, G. and Fleuret, F.
\newblock Occam's hammer.
\newblock In \emph{Conference on Learning Theory}, 2007.

\bibitem[Canonne(2022)]{canonne2022short}
Canonne, C.~L.
\newblock A short note on an inequality between kl and tv.
\newblock \emph{arXiv preprint arXiv:2202.07198}, 2022.

\bibitem[Caton \& Haas(2024)Caton and Haas]{caton2024surveyfairness}
Caton, S. and Haas, C.
\newblock Fairness in machine learning: {A} survey.
\newblock \emph{{ACM} Computing Survey}, 2024.

\bibitem[Catoni(2007)]{catoni2007pac}
Catoni, O.
\newblock {PAC-B}ayesian supervised classification: The thermodynamics of
  statistical learning.
\newblock \emph{IMS Lecture Notes Monograph Series}, 2007.

\bibitem[Czarnowska et~al.(2021)Czarnowska, Vyas, and
  Shah]{czarnowska2021quantifying}
Czarnowska, P., Vyas, Y., and Shah, K.
\newblock Quantifying social biases in nlp: A generalization and empirical
  comparison of extrinsic fairness metrics.
\newblock \emph{Transactions of the Association for Computational Linguistics},
  2021.

\bibitem[Denis et~al.(2024)Denis, Elie, Hebiri, and Hu]{denis2024}
Denis, C., Elie, R., Hebiri, M., and Hu, F.
\newblock Fairness guarantees in multi-class classification with demographic
  parity.
\newblock \emph{Journal of Machine Learning Research}, 2024.

\bibitem[Dwork et~al.(2012)Dwork, Hardt, Pitassi, Reingold, and
  Zemel]{dwork2012fairness}
Dwork, C., Hardt, M., Pitassi, T., Reingold, O., and Zemel, R.
\newblock Fairness through awareness.
\newblock In \emph{Innovations in Theoretical Computer Science Conference},
  2012.

\bibitem[Dziugaite \& Roy(2017)Dziugaite and Roy]{DziugaiteR17}
Dziugaite, G.~K. and Roy, D.~M.
\newblock Computing nonvacuous generalization bounds for deep (stochastic)
  neural networks with many more parameters than training data.
\newblock In \emph{Conference on Uncertainty in Artificial Intelligence}, 2017.

\bibitem[Freund(1998)]{freund1998self}
Freund, Y.
\newblock Self bounding learning algorithms.
\newblock In \emph{Conference on Computational Learning Theory}, 1998.

\bibitem[Germain et~al.(2015)Germain, Lacasse, Laviolette, Marchand, and
  Roy]{GermainLacasseLavioletteMarchandRoy2015}
Germain, P., Lacasse, A., Laviolette, F., Marchand, M., and Roy, J.
\newblock Risk bounds for the majority vote: From a {PAC-B}ayesian analysis to
  a learning algorithm.
\newblock \emph{Journal of Machine Learning Research}, 2015.

\bibitem[Han et~al.(2024)Han, Chi, Chen, Wang, Zhao, Zou, and Hu]{HanCCW0Z024}
Han, X., Chi, J., Chen, Y., Wang, Q., Zhao, H., Zou, N., and Hu, X.
\newblock {FFB:} {A} fair fairness benchmark for in-processing group fairness
  methods.
\newblock In \emph{International Conference on Learning Representations}, 2024.

\bibitem[Hardt et~al.(2016)Hardt, Price, and Srebro]{hardt2016equality}
Hardt, M., Price, E., and Srebro, N.
\newblock Equality of opportunity in supervised learning.
\newblock \emph{Advances in Neural Information Processing Systems}, 2016.

\bibitem[Hofmann(1994)]{german_credit_data}
Hofmann, H.
\newblock {Statlog (German Credit Data)}.
\newblock UCI Machine Learning Repository, 1994.

\bibitem[Kingma \& Ba(2015)Kingma and Ba]{KingmaB14}
Kingma, D.~P. and Ba, J.
\newblock Adam: {A} method for stochastic optimization.
\newblock In \emph{International Conference on Learning Representations}, 2015.

\bibitem[Kohavi(1996)]{Kohavi96}
Kohavi, R.
\newblock Scaling up the accuracy of naive-bayes classifiers: {A} decision-tree
  hybrid.
\newblock In \emph{International Conference on Knowledge Discovery and Data
  Mining}, 1996.

\bibitem[Laakom et~al.(2025)Laakom, Chen, Schmidhuber, and
  Bu]{laakom2025fairness}
Laakom, F., Chen, H., Schmidhuber, J., and Bu, Y.
\newblock Fairness overfitting in machine learning: An information-theoretic
  perspective.
\newblock In \emph{International Conference on Machine Learning}, 2025.

\bibitem[Lacasse et~al.(2006)Lacasse, Laviolette, Marchand, Germain, and
  Usunier]{Lacasse06}
Lacasse, A., Laviolette, F., Marchand, M., Germain, P., and Usunier, N.
\newblock {{PAC-B}ayes} bounds for the risk of the majority vote and the
  variance of the {G}ibbs classifier.
\newblock In \emph{Advances in Neural Information Processing Systems}, 2006.

\bibitem[Langford \& Shawe-Taylor(2002)Langford and
  Shawe-Taylor]{langford2002pac}
Langford, J. and Shawe-Taylor, J.
\newblock {PAC-B}ayes \& margins.
\newblock \emph{Advances in Neural Information Processing Systems}, 2002.

\bibitem[Leblanc \& Germain(2025)Leblanc and Germain]{Leblanc25}
Leblanc, B. and Germain, P.
\newblock A framework for bounding deterministic risk with {PAC-B}ayes:
  Applications to majority votes.
\newblock \emph{arXiv preprint arXiv:2510.25569}, 2025.

\bibitem[Lever et~al.(2013)Lever, Laviolette, and Shawe-Taylor]{lever2013}
Lever, G., Laviolette, F., and Shawe-Taylor, J.
\newblock Tighter {PAC-B}ayes bounds through distribution-dependent priors.
\newblock \emph{Theoretical Computer Science}, 2013.

\bibitem[Masegosa et~al.(2020)Masegosa, Lorenzen, Igel, and
  Seldin]{MasegosaLorenzenIgelSeldin2020}
Masegosa, A., Lorenzen, S.~S., Igel, C., and Seldin, Y.
\newblock Second order {PAC-B}ayesian bounds for the weighted majority vote.
\newblock In \emph{Advances in Neural Information Processing Systems}, 2020.

\bibitem[Maurer(2004)]{Maurer2004}
Maurer, A.
\newblock A note on the {PAC} bayesian theorem.
\newblock \emph{arXiv preprint arXiV:cs.LG/0411099}, 2004.

\bibitem[McAllester(1998)]{McAllester1998SomePT}
McAllester, D.~A.
\newblock Some {PAC-B}ayesian theorems.
\newblock \emph{Machine Learning}, 1998.

\bibitem[McAllester(2003)]{mcallester2003pac}
McAllester, D.~A.
\newblock {PAC-B}ayesian stochastic model selection.
\newblock \emph{Machine Learning}, 2003.

\bibitem[Mehrabi et~al.(2021)Mehrabi, Morstatter, Saxena, Lerman, and
  Galstyan]{mehrabi2021survey}
Mehrabi, N., Morstatter, F., Saxena, N., Lerman, K., and Galstyan, A.
\newblock A survey on bias and fairness in machine learning.
\newblock \emph{ACM computing surveys}, 2021.

\bibitem[Menon \& Williamson(2018)Menon and Williamson]{menon2018tradeoff}
Menon, A.~K. and Williamson, R.~C.
\newblock The cost of fairness in binary classification.
\newblock In \emph{Conference on Fairness, Accountability and Transparency},
  2018.

\bibitem[Mertens(2005)]{partition}
Mertens, S.
\newblock The easiest hard problem: Number partitioning.
\newblock In \emph{Computational Complexity and Statistical Physics}. Oxford
  University Press, 2005.

\bibitem[Oneto et~al.(2019)Oneto, Donini, and Pontil]{oneto2019}
Oneto, L., Donini, M., and Pontil, M.
\newblock {PAC-B}ayes and fairness: Risk and fairness bounds on distribution
  dependent fair priors.
\newblock In \emph{European Symposium on Artificial Neural Networks}, 2019.

\bibitem[Oneto et~al.(2020)Oneto, Donini, Pontil, and Shawe-Taylor]{oneto2020}
Oneto, L., Donini, M., Pontil, M., and Shawe-Taylor, J.
\newblock Randomized learning and generalization of fair and private
  classifiers: From {PAC-B}ayes to stability and differential privacy.
\newblock \emph{Neurocomputing}, 2020.

\bibitem[Rivasplata(2022)]{rivasplata2022pac}
Rivasplata, O.
\newblock \emph{{PAC-B}ayesian computation}.
\newblock PhD thesis, University College London, UK, 2022.

\bibitem[Rivasplata et~al.(2020)Rivasplata, Kuzborskij, Szepesvari, and
  Shawe-Taylor]{rivasplata2020}
Rivasplata, O., Kuzborskij, I., Szepesvari, C., and Shawe-Taylor, J.
\newblock {PAC-B}ayes analysis beyond the usual bounds.
\newblock In \emph{Advances in Neural Information Processing Systems}, 2020.

\bibitem[Roy et~al.(2011)Roy, Laviolette, and
  Marchand]{RoyLavioletteMarchand2011}
Roy, J., Laviolette, F., and Marchand, M.
\newblock From {PAC-B}ayes bounds to quadratic programs for majority votes.
\newblock In \emph{International Conference on Machine Learning}, 2011.

\bibitem[Seeger(2002)]{Seeger02}
Seeger, M.~W.
\newblock {PAC-B}ayesian generalisation error bounds for gaussian process
  classification.
\newblock \emph{Journal of Machine Learning Reasearch}, 2002.

\bibitem[Shawe-Taylor \& Williamson(1997)Shawe-Taylor and
  Williamson]{ShaweTaylor1997}
Shawe-Taylor, J. and Williamson, R.~C.
\newblock A {PAC} analysis of a bayesian estimator.
\newblock In \emph{Conference on Computational Learning Theory}, 1997.

\bibitem[Ustun et~al.(2019)Ustun, Liu, and Parkes]{ustun19a}
Ustun, B., Liu, Y., and Parkes, D.
\newblock Fairness without harm: Decoupled classifiers with preference
  guarantees.
\newblock In \emph{International Conference on Machine Learning}, 2019.

\bibitem[Valiant(1984)]{Valiant1984}
Valiant, L.
\newblock A theory of the learnable.
\newblock \emph{Communications of the ACM}, 1984.

\bibitem[Vapnik(1999)]{vapnik1999nature}
Vapnik, V.
\newblock \emph{The nature of statistical learning theory}.
\newblock Springer science \& business media, 1999.

\bibitem[Viallard(2023)]{viallard2023pac}
Viallard, P.
\newblock \emph{{PAC-B}ayesian Bounds and Beyond: Self-Bounding Algorithms and
  New Perspectives on Generalization in Machine Learning}.
\newblock PhD thesis, University Jean Monnet, France, 2023.

\bibitem[Viallard et~al.(2024{\natexlab{a}})Viallard, Emonet, Habrard, Morvant,
  and Zantedeschi]{viallard2024leveraging}
Viallard, P., Emonet, R., Habrard, A., Morvant, E., and Zantedeschi, V.
\newblock Leveraging {PAC-B}ayes theory and gibbs distributions for
  generalization bounds with complexity measures.
\newblock In \emph{International Conference on Artificial Intelligence and
  Statistics}, 2024{\natexlab{a}}.

\bibitem[Viallard et~al.(2024{\natexlab{b}})Viallard, Germain, Habrard, and
  Morvant]{viallard2024general}
Viallard, P., Germain, P., Habrard, A., and Morvant, E.
\newblock A general framework for the practical disintegration of
  {PAC-B}ayesian bounds.
\newblock \emph{Machine Learning}, 2024{\natexlab{b}}.

\bibitem[Woodworth et~al.(2017)Woodworth, Gunasekar, Ohannessian, and
  Srebro]{woodworth2017}
Woodworth, B., Gunasekar, S., Ohannessian, M.~I., and Srebro, N.
\newblock Learning non-discriminatory predictors.
\newblock In \emph{Conference on Learning Theory}, 2017.

\bibitem[Wu(2017)]{wu2017lecture}
Wu, Y.
\newblock Lecture notes on information-theoretic methods for high-dimensional
  statistics.
\newblock \emph{Lecture Notes for ECE598YW (UIUC)}, 2017.

\bibitem[Zantedeschi et~al.(2021)Zantedeschi, Viallard, Morvant, Emonet,
  Habrard, Germain, and Guedj]{zantedeschi2021learning}
Zantedeschi, V., Viallard, P., Morvant, E., Emonet, R., Habrard, A., Germain,
  P., and Guedj, B.
\newblock Learning stochastic majority votes by minimizing a {PAC-B}ayes
  generalization bound.
\newblock \emph{Advances in Neural Information Processing Systems}, 2021.

\end{thebibliography}
\bibliographystyle{bibliostyle}

\appendix

\section{Detailed discussion of the results of~\citet{oneto2020}}
\label{appendix:Oneto}

\citet{oneto2020} proposed a PAC-Bayesian fairness bound on the measure of Equal Opportunity~\cite{hardt2016equality} (we build upon the formalization of the Equal Opportunity risk from~\Cref{section:fair_learning}). 
Let $\Dcal_{\Xcal|+1,g}$ the marginal distribution of $\Xcal$ restricted to the class $y = +1$ and sensitive group $g \in \AttrSpace$. 
The corresponding true Gibbs risk for any group $g \in \AttrSpace$ is,
\begin{align*}
    \GibbsRisk{\Dcal_{\Xcal|+1,g}} =  \underset{h \sim \post}{\Esp} \underset{x \sim \Dcal_{\Xcal|+1,g}}{\Esp}\ell(h(x), +1).
\end{align*}

Similarly, for the empirical version, take the sample $S_{\Xcal|+1,g} = \{(x_i,g_i) \in S : g_i = g, y_i=+1\}$ and compute the risk,
\begin{align*}
\GibbsRiskemp{S_{\Xcal|+1,g}}  = \frac{1}{m_{+1,g}} ~\sum_{\substack{i=1}}^{m_{+1, g}}~  \Esp_{\h \sim \post} ~\h(x_i)
\end{align*}
with $m_{+1,g} =\left|S_{\Xcal|+1,g}\right|$. 
Therefore, in the binary fairness setting, the Equal Opportunity true risk is, 
\begin{align*}
\EqualOpp{\Dcal}{G_\post}
= \bigg| \GibbsRisk{\Dcal_{\Xcal|+1,\GrOne}} - \GibbsRisk{\Dcal_{\Xcal|+1,\GrTwo}} \bigg|
\end{align*}
and the empirical Equal Opportunity risk is, 
\begin{align*}
\EqualOppEmp{S}{G_\post}
= \bigg| \GibbsRiskemp{S_{\Xcal|+1,\GrOne}} - \GibbsRiskemp{S_{\Xcal|+1,\GrTwo}} \bigg|
\end{align*}

The upper bound on the EOP risk (\Cref{theorem:Oneto_EOP}) is obtained by instantiating the classical PAC-Bayesian bound of \citet{mcallester2003pac} (\Cref{theorem:mcallester}) to the group-conditional Gibbs risks involved in the Equal Opportunity criterion.

\begin{theorem}[\citealp{oneto2020}]\label{theorem:Oneto_EOP}
    For any distribution $\Dcal$, hypothesis set $\Hcal$, prior distribution $\Pcal$ over $\Hcal$, $\delta\in (0,1]$, we have with probability at least $1-\delta$ over the random choice $S\sim \Dcal^m$ that for every $\post$ over $\Hcal$:
\begin{align*}
 \bigg|&\EqualOppEmp{S}{\post} - \EqualOpp{\Dcal}{G_\post} \bigg| \le\sqrt{\frac{1}{2 m_{+11,\GrOne}} \left(\KL(G_\post \| \prior) + \ln\left(\frac{4\sqrt{m_{+1,\GrOne}}}{\delta}\right) \right)} + \sqrt{\frac{1}{2 m_{+1, \GrTwo}} \left(\KL(\post \| \prior) + \ln\left(\frac{4\sqrt{m_{+11,\GrTwo}}}{\delta}\right) \right)}.
\end{align*}
\end{theorem}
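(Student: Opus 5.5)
The bound of \Cref{theorem:Oneto_EOP} is the special case of \Cref{theorem:oneto_general} obtained by taking $\ell(h(x),y)=\ell(h(x),+1)$ and replacing the group-conditional distributions $\Dcal_{|g}$ by $\Dcal_{\Xcal|+1,g}$, so I would follow that proof step by step. The first step is deterministic. Write $\EqualOpp{\Dcal}{G_\post}=|A-B|$ and $\EqualOppEmp{S}{G_\post}=|\hat A-\hat B|$, where $A=\GibbsRisk{\Dcal_{\Xcal|+1,\GrOne}}$, $B=\GibbsRisk{\Dcal_{\Xcal|+1,\GrTwo}}$ and $\hat A,\hat B$ are their empirical counterparts. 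The reverse triangle inequality followed by the triangle inequality gives
\[
\big||\hat A-\hat B|-|A-B|\big|\le |(\hat A-A)-(\hat B-B)|\le |\hat A-A|+|\hat B-B|.
\]
It therefore suffices to control the two group-conditional Gibbs generalization gaps separately.

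The second step applies McAllester's bound (\Cref{theorem:mcallester}) with confidence $\delta/2$ to each subsample $S_{\Xcal|+1,g}$, $g\in\{\GrOne,\GrTwo\}$. For group $g$ it gives, with probability at least $1-\delta/2$, simultaneously for all $\post$,
\[
|\hat R_g-R_g|\le\sqrt{\frac{\KL(\post\|\prior)+\ln\frac{2\sqrt{m_{+1,g}}}{\delta/2}}{2m_{+1,g}}},
\]
where $\ln\frac{2\sqrt{m}}{\delta/2}=\ln\frac{4\sqrt{m}}{\delta}$, which is the expression in the statement. I am assuming McAllester's bound is stated two-sided, with the $2\sqrt{m}$ term accounting for both tails. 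If it is only one-sided, I would instead split $\delta$ into four parts and check whether the constant $4\sqrt{m}$ survives. A union bound over the two groups then gives probability at least $1-\delta$, and summing the two deviations concludes the proof.

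The main obstacle is that the subsample sizes $m_{+1,g}$ are random, so the subsamples are not fixed-size i.i.d.\ samples from $\Dcal_{\Xcal|+1,g}$. I would handle this by conditioning on the vector of labels and groups $(y_i,g_i)_{i=1}^m$. Conditionally, the points of $S_{\Xcal|+1,g}$ are i.i.d.\ from $\Dcal_{\Xcal|+1,g}$ with $m_{+1,g}$ fixed, so the conditional failure probability is at most $\delta/2$ for each group. Taking the expectation over the conditioning variables preserves this guarantee. Groups with $m_{+1,g}=0$ need a convention: I would treat the bound as vacuous there, or equivalently assume both groups are nonempty. Finally, the typos in the statement, namely $\KL(G_\post\|\prior)$ for $\KL(\post\|\prior)$ and $m_{+11,\cdot}$ for $m_{+1,\cdot}$, should be read in the obvious way.
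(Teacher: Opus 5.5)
Your proposal is correct and follows the paper's proof essentially step for step. The paper also bounds the EOP deviation by the sum of the two group-wise Gibbs deviations via the (reverse) triangle inequality, applies McAllester's two-sided bound at confidence $\delta/2$ to each subsample $S_{\Xcal|+1,g}$, and concludes with a union bound. Your conditioning on $(y_i,g_i)_{i=1}^m$ to handle the random subsample sizes $m_{+1,g}$ is a legitimate detail that the paper leaves implicit.
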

\begin{proof}
    Note that from the triangle inequality, we have
    \begin{align*}
        \bigg|\EqualOppEmp{S}{G_\post} - \EqualOpp{\Dcal}{G_\post} \bigg| &= \bigg| \big| \GibbsRiskemp{S_{\Xcal|+1,\GrOne}} - \GibbsRiskemp{S_{\Xcal|+1,\GrTwo}} \big| -  
        \big| \GibbsRisk{\Dcal_{\Xcal|+1,\GrOne}} - \GibbsRisk{\Dcal_{\Xcal|+1,\GrTwo}} \big|\bigg|\\
        &\leq \bigg| \GibbsRisk{\Dcal_{\Xcal|+1,\GrOne}} - \GibbsRiskemp{S_{\Xcal|+1,\GrOne}}  \bigg| + \bigg| \GibbsRisk{\Dcal_{\Xcal|+1,\GrTwo}} -\GibbsRiskemp{S_{\Xcal|+1,\GrTwo}} \bigg|.
    \end{align*}
    Then apply the classical PAC-Bayes bound (\Cref{theorem:mcallester}) separately to each subgroup, with probability at least $1{-}\frac{\delta}{2}$.
A union bound over the two subgroups gives the desired result.
\end{proof}

\begin{theorem}[\citealp{mcallester2003pac}]\label{theorem:mcallester}
    For any distribution $\Dcal$, hypothesis set $\Hcal$, prior distribution $\prior$ over $\Hcal$, $\delta\in (0,1]$,  with probability at least $1-\delta$ over the random choice $S\sim \Dcal^m$, we have for every $\post$ over $\Hcal$,
    $$\bigg| \GibbsRisk{\Dcal} - \GibbsRiskemp{S} \bigg|\leq \sqrt{\frac{1}{2m}\left[\KL(\post\|\prior)+\ln\frac{2\sqrt{m}}{\delta}\right]}.$$
\end{theorem}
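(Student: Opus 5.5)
The plan is to derive \Cref{theorem:mcallester} directly from \Cref{theorem:classical_kl}, combined with Pinsker's inequality for Bernoulli distributions already recalled in \Cref{sec:sto-fair}: for all $(p,q)\in[0,1]^2$, $2(q-p)^2\le\kl(q\|p)$. The two statements have the same confidence term $\ln\frac{2\sqrt{m}}{\delta}$ and the same quantifiers: for any $\Dcal$, $\Hcal$ and $\prior$, with probability at least $1-\delta$ over $S\sim\Dcal^m$, uniformly over all $\post$. So no extra union bound should be needed. The whole argument should reduce to a deterministic inequality applied on the high-probability event of \Cref{theorem:classical_kl}.

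First, I fix $\Dcal$, $\Hcal$, $\prior$ and $\delta\in(0,1]$. I then invoke \Cref{theorem:classical_kl}, with $\Dcal$ in place of $\Pcal$ and the risks taken over $(x,y)$, which is just the generic supervised setting. It gives an event $E$ with $\Pbb_{S\sim\Dcal^m}[E]\ge1-\delta$ on which, for every $\post$ on $\Hcal$,
\begin{align*}
\kl\!\left(\GibbsRiskemp{S}\middle\|\GibbsRisk{\Dcal}\right)\le\frac1m\!\left[\KL(\post\|\prior)+\ln\frac{2\sqrt m}{\delta}\right].
\end{align*}
Second, on $E$ and for an arbitrary $\post$, I apply Pinsker with $q=\GibbsRiskemp{S}$ and $p=\GibbsRisk{\Dcal}$. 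Both lie in $[0,1]$ because the loss is binary and both are averages of $\{0,1\}$-valued quantities. This yields $2\big(\GibbsRisk{\Dcal}-\GibbsRiskemp{S}\big)^2\le\frac1m\big[\KL(\post\|\prior)+\ln\frac{2\sqrt m}{\delta}\big]$. Dividing by $2$ and taking square roots gives the claimed two-sided bound on the absolute deviation, simultaneously for all $\post$.

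The only delicate point, and the step I would check most carefully, is the boundary behaviour of $\kl$. When $p\in\{0,1\}$ and $q\neq p$, the left-hand side is $+\infty$ and the inequality of \Cref{theorem:classical_kl} cannot hold, so this case is excluded on $E$. When $q\in\{0,1\}$, I use the conventions $0\ln 0=0$, under which Pinsker still holds. I also need $\KL(\post\|\prior)<\infty$ for the bound to be meaningful; otherwise the statement is trivially true.

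If one preferred not to rely on \Cref{theorem:classical_kl}, the fallback would be the usual route. It combines the Donsker--Varadhan change of measure applied to $\phi(h)=2m(\Hriskemp{S}-\Hrisk{\Dcal})^2$ (or $m\,\kl$) with Maurer's bound $\Esp_S\, e^{m\,\kl(\Hriskemp{S}\|\Hrisk{\Dcal})}\le2\sqrt m$. One then swaps the expectations over $S$ and $h\sim\prior$, which is allowed because $\prior$ does not depend on $S$, and applies Markov's inequality. Jensen's inequality then moves $\Esp_{h\sim\post}$ inside the convex function $\kl$. In that route, Maurer's moment bound is the genuine obstacle. Since it is already encapsulated in \Cref{theorem:classical_kl}, the Pinsker reduction is the approach I would take.
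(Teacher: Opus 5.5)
Your reduction is correct: on the $1-\delta$ event of \Cref{theorem:classical_kl}, Pinsker's inequality $2(q-p)^2\le\kl(q\|p)$, applied with $q=\GibbsRiskemp{S}$ and $p=\GibbsRisk{\Dcal}$, gives exactly the stated two-sided bound, uniformly in $\post$ and with the same confidence term; your treatment of the boundary cases of $\kl$ is also fine. The paper gives no proof of its own and only cites \citet{mcallester2003pac}, but in \Cref{sec:sto-fair} it relates the two bounds through exactly this Pinsker comparison, so your argument matches the paper's implicit reasoning.
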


To define the posterior $\post$ and the prior $\prior$, \citet{oneto2020}  propose to follow a standard PAC-Bayesian approach \citep[\eg,][]{catoni2007pac,lever2013} that intends to favor the classifier with good accuracy and EOP. Let
$\lambda \in [0,+\infty[$ which denotes the trade-off parameter between accuracy and fairness. One can define the posterior distribution under this approach, denoted by $\post_\Dcal$, and defined as
\begin{align}\label{eq:Q_oneto}
\post_\Dcal(h) \propto \exp\left(- \gamma\left[\Hriskemp{S} + \lambda\EqualOppEmp{S}{h}\right]\right),
\end{align}
Similarly, the prior distribution is defined as 
\begin{align}\label{eq:P_oneto}
\prior(h) \propto \exp\left(- \gamma\big[\Hrisk{\Dcal} + \lambda\EqualOpp{\Dcal}{h}\big]\right).
\end{align}

However, the prior distribution defined in Equation~\eqref{eq:P_oneto} cannot be computed in practice, as it explicitly depends on the true risk $\Hrisk{\Dcal}$ and the true Equal Opportunity risk $\EqualOpp{\Dcal}{h}$, which are both unknown. Using the definition of the prior $\prior$ and posterior $\post_\Dcal$ from Equations~\eqref{eq:P_oneto} and~\eqref{eq:Q_oneto}, \citet{oneto2020} derive the following bound on $\KL(\post_\Dcal\|\|\prior)$.

\begin{theorem}[\citealp{oneto2020}]\label{theorem:oneto_kl} Given the prior $\prior$ and posterior $\post_\Dcal$  defined in Equations~\eqref{eq:P_oneto} and~\eqref{eq:Q_oneto}, we have
\begin{align*}
\underset{S \sim \Dcal^m}{\Pbb}\left[\KL(\post_\Dcal\|\|\prior) \ge \KL_2(\delta, m, m_{+1,\GrOne}, m_{+1,\GrTwo})\right] \le 6\delta
\end{align*}
where
\begin{align*} 
& \KL_2\left(\delta, m, m_{+1,\GrOne}, m_{+1,\GrTwo}\right)=\varepsilon_1^2+2 \varepsilon_1 \sqrt{\varepsilon_2}+\varepsilon_2, \\ & \varepsilon_1= \gamma\left(\sqrt{\frac{1}{2 m}}+\lambda\left(\sqrt{\frac{1}{2 m_{+1,\GrOne}}}+\sqrt{\frac{1}{2 m_{+1,\GrTwo}}}\right)\right), \\ & \varepsilon_1= 2 \gamma\left(\sqrt{\frac{1}{2 m} \ln \left(\frac{2 \sqrt{m}}{\delta}\right)}+\lambda\left(\sqrt{\frac{1}{2 m_{+1,\GrOne}} \ln \left(\frac{2 \sqrt{m_{+1,\GrOne}}}{\delta}\right)} +\sqrt{\frac{1}{2 m_{+1,\GrTwo}} \ln \left(\frac{2 \sqrt{m_{+1,\GrTwo}}}{\delta}\right)}\right)\right) .
\end{align*}
    
\end{theorem}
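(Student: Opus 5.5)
The plan follows the ``distribution-dependent prior'' argument of \citet{lever2013}, adapted to the two-term energy of \Cref{eq:Q_oneto,eq:P_oneto}. Write $f_S(h)\defeq\gamma[\Hriskemp{S}+\lambda\EqualOppEmp{S}{h}]$ and $f(h)\defeq\gamma[\Hrisk{\Dcal}+\lambda\EqualOpp{\Dcal}{h}]$, so that $\post_\Dcal\propto e^{-f_S}$ and $\prior\propto e^{-f}$.

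\textbf{Step 1 (exact decomposition).} Expanding $\ln\frac{\post_\Dcal}{\prior}$ with the normalizers $Z_S=\Esp_{\prior'}e^{-f_S}$ and $Z=\Esp_{\prior'}e^{-f}$ gives
\[
\KL(\post_\Dcal\|\prior)=\Esp_{h\sim\post_\Dcal}[f(h)-f_S(h)]+\ln\tfrac{Z}{Z_S}.
\]
Rewriting $\tfrac{Z_S}{Z}=\Esp_{h\sim\prior}e^{f-f_S}$ and applying Jensen yields
\[
\ln\tfrac{Z}{Z_S}=-\ln\Esp_{\prior}e^{f-f_S}\le\Esp_{h\sim\prior}[f_S(h)-f(h)].
\]
Hence
\[
\KL(\post_\Dcal\|\prior)\le\Esp_{\post_\Dcal}|f-f_S|+\Esp_{\prior}|f-f_S|.
\]

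\textbf{Step 2 (reduce to per-group deviations).} For every fixed $h$, the triangle inequality used in the proof of \Cref{theorem:Oneto_EOP} gives
\[
|f-f_S|\le\gamma\Big(|\Hrisk{\Dcal}-\Hriskemp{S}|+\lambda\textstyle\sum_{g\in\{\GrOne,\GrTwo\}}|\Risk_{\Dcal_{\Xcal|+1,g}}(h)-\Riskemp_{S_{\Xcal|+1,g}}(h)|\Big).
\]
We must therefore control $\Esp_{\rho}$ of three absolute deviations, for both $\rho=\post_\Dcal$ and $\rho=\prior$.

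\textbf{Step 3 (main obstacle).} \Cref{theorem:mcallester} only bounds the deviation of the \emph{averaged} risk, $|\Esp_\rho(R-\widehat R)|$, whereas here the absolute value sits inside the expectation. I would first return to the Donsker--Varadhan change of measure with the function $2m(R(h)-\widehat R(h))^2$. This function satisfies $\Esp_S\Esp_\prior e^{2m(R-\widehat R)^2}\le 2\sqrt m$, which is the usual step behind \Cref{theorem:mcallester}. It gives, with probability $1-\delta$ and simultaneously for all $\rho$,
\[
\Esp_\rho(R-\widehat R)^2\le\frac{\KL(\rho\|\prior)+\ln\frac{2\sqrt m}{\delta}}{2m}.
\]
Jensen then gives
\[
\Esp_\rho|R-\widehat R|\le\sqrt{\tfrac{1}{2m}}\sqrt{\KL}+\sqrt{\tfrac{1}{2m}\ln\tfrac{2\sqrt m}{\delta}},
\]
using $\sqrt{a+b}\le\sqrt a+\sqrt b$. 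I would apply this once to the full sample and once to each subsample $S_{\Xcal|+1,g}$ (sizes $m$, $m_{+1,\GrOne}$, $m_{+1,\GrTwo}$), and separately for $\rho=\post_\Dcal$ and for $\rho=\prior$. For $\rho=\prior$ the KL term vanishes, so only the logarithmic terms remain. This makes six events, and a union bound gives the failure probability $6\delta$. The subsample sizes are random, so I would condition on the group/label memberships before applying the bound; this is the standard argument for per-group PAC-Bayes.

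\textbf{Step 4 (solve the quadratic).} Summing the pieces with weights $\gamma$ and $\gamma\lambda$ gives, on the good event, $K\le\varepsilon_1\sqrt K+\varepsilon_2'$ with $K=\KL(\post_\Dcal\|\prior)$. Here $\varepsilon_1$ is exactly the first quantity in the statement. The term $\varepsilon_2'$ collects the logarithmic terms; they appear twice (once from $\post_\Dcal$, once from $\prior$), which explains the factor $2\gamma$ in the second $\varepsilon$ of the statement, i.e.\ what should read $\varepsilon_2$. Solving for $\sqrt K$ gives $\sqrt K\le\frac{\varepsilon_1+\sqrt{\varepsilon_1^2+4\varepsilon_2}}{2}\le\varepsilon_1+\sqrt{\varepsilon_2}$, hence $K\le\varepsilon_1^2+2\varepsilon_1\sqrt{\varepsilon_2}+\varepsilon_2=\KL_2$. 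I expect only minor bookkeeping of constants to remain at this point.
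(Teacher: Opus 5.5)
Your proof is correct. The paper gives no argument of its own for this result: it quotes it from \citet{oneto2020}, who use the distribution-dependent-prior technique of \citet{lever2013}. Your skeleton is exactly that technique. You write $\KL(\post_\Dcal\|\prior)=\mathbb{E}_{\post_\Dcal}[f-f_S]+\ln(Z/Z_S)$, bound the log-partition ratio by Jensen under $\prior$, and control the resulting deviations for both $\post_\Dcal$ and $\prior$ on the full sample and the two subsamples. You then solve $K\le\varepsilon_1\sqrt K+\varepsilon_2$ using $(\varepsilon_1+\sqrt{\varepsilon_1^2+4\varepsilon_2})/2\le\varepsilon_1+\sqrt{\varepsilon_2}$. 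Your constants reproduce $\varepsilon_1$ and $\varepsilon_2$, and you are right that the second $\varepsilon_1$ in the statement is a typo for $\varepsilon_2$.

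Step 3 goes beyond a bare appeal to \Cref{theorem:mcallester}, and that step is worth keeping. Because $\widehat{\mathrm{EOP}}_S(h)$ is not linear in the group risks, you must control $\mathbb{E}_\rho|R_g(h)-\widehat R_g(h)|$, not $|\mathbb{E}_\rho(R_g(h)-\widehat R_g(h))|$. Your change of measure on $2m_g(R_g(h)-\widehat R_g(h))^2$ handles this with the same constants. It is legitimate because $\prior$ depends on $\Dcal$ but not on $S$, and Pinsker together with $\mathbb{E}_S e^{m\,\kl(\widehat R(h)\|R(h))}\le 2\sqrt m$ gives the required moment bound. Conditioning on the $(g_i,y_i)$ correctly handles the random subsample sizes.

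One simplification: the Donsker--Varadhan inequality holds simultaneously for all $\rho$ on a single event per sample. So the cases $\rho=\post_\Dcal$ and $\rho=\prior$ need not be separate events: three events suffice, giving failure probability $3\delta\le6\delta$.
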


Then given \Cref{theorem:oneto_kl},~\citet{oneto2020} obtain the following bounds on the risk and the EOP risk by plugging ~\Cref{theorem:oneto_kl} into~\Cref{theorem:Oneto_EOP}.

\begin{theorem}[\citealp{oneto2020}]\label{theorem:Oneto_final_data_dependent}
    Given the prior $\prior$ and posterior $\post_\Dcal$  defined in Equations~\eqref{eq:P_oneto} and~\eqref{eq:Q_oneto}, the risk and the fairness of the stochastic classifier can be bounded in the following way:
\begin{align*}
&\underset{S \sim \Dcal^m}{\Pbb}\left[\bigg| \Risk_{\Dcal}\!\left(\Gibbs_{\post_\Dcal}\right) - \Riskemp_{S}\!\left(\Gibbs_{\post_\Dcal}\right) \bigg|\leq \sqrt{\frac{1}{2m}\left[\KL_2\left(\delta, m, m_{+1,\GrOne}, m_{+1,\GrTwo}\right)+\ln\frac{2\sqrt{m}}{\delta}\right]}\right]\le 7\delta,\\
\text{and}\quad &\underset{S \sim \Dcal^m}{\Pbb}\Bigg[
\bigg|\EqualOppEmp{S}{\post_\Dcal} - \EqualOpp{\Dcal}{\post_\Dcal} \bigg|
\le
\sqrt{\frac{1}{2 m_{+1,\GrOne}}
\left(
\KL_2\!\left(\delta, m, m_{+1,\GrOne}, m_{+1,\GrTwo}\right)
+ \ln\!\left(\frac{2\sqrt{m_{+1,\GrOne}}}{\delta}\right)
\right)} \\
&\hspace{4.0cm} +
\sqrt{\frac{1}{2 m_{+1,\GrTwo}}
\left(
\KL_2\!\left(\delta, m, m_{+1,\GrOne}, m_{+1,\GrTwo}\right)
+ \ln\!\left(\frac{2\sqrt{m_{+1,\GrTwo}}}{\delta}\right)
\right)}
\Bigg] \le 8\delta.
\end{align*}
\end{theorem}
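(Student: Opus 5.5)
The plan is to combine \Cref{theorem:oneto_kl} with \Cref{theorem:mcallester} and \Cref{theorem:Oneto_EOP} by a union bound. We use the posterior $\post_\Dcal$ and the data-independent (though unknown) prior $\prior$ of Equations~\eqref{eq:Q_oneto} and~\eqref{eq:P_oneto}. Because $\prior$ depends only on $\Dcal$ and not on $S$, the classical PAC-Bayesian bounds are valid with this prior, simultaneously for every posterior and in particular for the data-dependent $\post_\Dcal$. The only unknown quantity in those bounds is $\KL(\post_\Dcal\|\prior)$. \Cref{theorem:oneto_kl} controls it by $\KL_2(\delta,m,m_{+1,\GrOne},m_{+1,\GrTwo})$ outside an event of probability at most $6\delta$.

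\textbf{Risk bound.} Let $E_1$ be the event where \Cref{theorem:mcallester} fails for the prior $\prior$; then $\Pbb(E_1)\le\delta$. Let $E_2$ be the event where $\KL(\post_\Dcal\|\prior)\ge \KL_2$; then $\Pbb(E_2)\le 6\delta$. On the complement of $E_1\cup E_2$, we apply \Cref{theorem:mcallester} with $\post=\post_\Dcal$, and since the right-hand side is monotone increasing in the KL term, we replace that term by $\KL_2$. This yields the deviation bound on $\bigl|\Risk_{\Dcal}(\Gibbs_{\post_\Dcal})-\Riskemp_{S}(\Gibbs_{\post_\Dcal})\bigr|$. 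The union bound gives failure probability at most $7\delta$.

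\textbf{Fairness bound.} We proceed the same way with \Cref{theorem:Oneto_EOP} in place of \Cref{theorem:mcallester}. Again each square-root term is monotone in the KL term, so substituting $\KL_2$ is valid on the good event. The failure probability is $\delta+6\delta$, hence at most $8\delta$ as stated. The constant mismatch between $\ln\frac{4\sqrt{m}}{\delta}$ in \Cref{theorem:Oneto_EOP} and $\ln\frac{2\sqrt{m}}{\delta}$ here needs care. My first choice is to apply \Cref{theorem:mcallester} to each group separately at level $\delta$, so that each group gives the $2\sqrt{m_{+1,g}}/\delta$ term. This costs $2\delta$, for a total of $2\delta+6\delta=8\delta$, which matches the claimed $8\delta$ exactly. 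This accounting suggests the intended split, and it is the one I would commit to.

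\textbf{Main obstacle.} The delicate step is justifying that the PAC-Bayes bounds hold uniformly over posteriors, and hence for the $S$-dependent $\post_\Dcal$, while the prior is distribution-dependent and cannot be computed. This is legitimate because $\prior$ is fixed before drawing $S$. The bound becomes computable only after the KL term is replaced by the data-free quantity $\KL_2$. I would also check that the two different $\varepsilon$ definitions in \Cref{theorem:oneto_kl} are read as $\varepsilon_1$ and $\varepsilon_2$. Beyond that, I would take the statement of \Cref{theorem:oneto_kl} as given.
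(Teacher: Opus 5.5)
Your proposal is correct and follows the paper's route, which the paper states in one line: union-bound the $6\delta$ failure event of \Cref{theorem:oneto_kl} with McAllester-type bounds for the fixed, distribution-dependent prior $\prior$, then replace $\KL(\post_\Dcal\|\prior)$ by $\KL_2$ using monotonicity, with both probabilities read as failure probabilities (the ``$\le$'' inside $\Pbb[\cdot]$ in the typeset statement is evidently reversed). Your per-group application of \Cref{theorem:mcallester} at level $\delta$ is the same as invoking \Cref{theorem:Oneto_EOP} at confidence $2\delta$, since $\ln\frac{4\sqrt{m_{+1,g}}}{2\delta}=\ln\frac{2\sqrt{m_{+1,g}}}{\delta}$; this is exactly how the stated constants and the total $2\delta+6\delta=8\delta$ arise.
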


Even though the prior $\prior$ is defined using $\Hrisk{\Dcal}$ and  $\EqualOpp{\Dcal}{h}$, the result from~\Cref{theorem:Oneto_final_data_dependent} can be computed thanks to the fact that the $\KL_2\left(\delta, m, m_{+1,\GrOne}, m_{+1,\GrTwo}\right)w$ defined in~\Cref{theorem:oneto_kl} does not not require to access $\prior$. The posterior $\post_\Dcal$ defined~\Cref{eq:Q_oneto} gives the classifier $G_{\post_\Dcal}$ of the method denote O2 in~\Cref{sec:expe}, the reported bounds are given by~\Cref{theorem:Oneto_final_data_dependent}. 



\section{Proofs of the main results}

\subsection{Proof of \Cref{proposition:triple_decomposition}}\label{app:triple_decomposition}
\Riskdecomp*

\begin{proof}
    For any distribution $\Dcal$, distribution $\post$ over $\Hcal$ and classifier $h\in\Hcal$ such that $\TripleC\neq\TripleB$, we have:
\allowdisplaybreaks[4]
    \begin{align*}
        \GibbsRisk{\Dcal} = & \;  \Esp_{\h\sim \post} \Esp_{(x,y) \sim \Dcal}  \ell(\h(x),y), \\
        = & \; \underset{(x,y)\sim\Dcal}{\mathbb{P}} ( \ell(\h(x),y) = 1) \Esp_{(x,y) \sim \Dcal}\left[\Esp_{h' \sim \post  }  \ell(\h'(x),y)\mid  \ell(\h(x),y) = 1 \right] \\
         & \; + \underset{(x,y)\sim\Dcal}{\mathbb{P}} ( \ell(\h(x),y) = 0) \Esp_{(x,y) \sim \Pcal}\left[\Esp_{h' \sim \post  }  \ell(\h'(x),y)\mid  \ell(\h(x),y) = 0 \right],\\
         = & \;  \TripleC \underset{(x,y)\sim\Dcal}{\mathbb{P}} ( \ell(\h(x),y) = 1) + \TripleB \underset{(x,y)\sim\Dcal}{\mathbb{P}} ( \ell(\h(x),y) = 0),\\
         = & \; \TripleC \underset{(x,y)\sim\Dcal}{\mathbb{P}} ( \ell(\h(x),y) = 1) + \TripleB \left(1- \underset{(x,y)\sim\Dcal}{\mathbb{P}} ( \ell(\h(x),y) = 1)\right),\\
         = & \; (\TripleC - \TripleB) \underset{(x,y)\sim\Dcal}{\mathbb{P}} ( \ell(\h(x),y) = 1) + \TripleB,\\
         = & \; (\TripleC - \TripleB)  \Esp_{(x,y) \sim \Dcal} + \TripleB,\\
        = & \; (\TripleC - \TripleB)  \Hrisk{\Dcal} + \TripleB.
    \end{align*}

    The result is then obtained be rearranging the terms and and using the assumption: $\TripleC\neq\TripleB$.
\end{proof}



\subsection{Proof of~\Cref{lemma:det_upper_lower}}\label{app:proof:det_upper_lower}
\DetBounds*

\begin{proof}
We consider any distribution $\Dcal$, distribution $\post$ over $\Hcal$ and classifier $h\in\Hcal$ such that $\TripleC\neq\TripleB$. 
We apply the same steps as done in the proof of \Cref{proposition:triple_decomposition} to have
\begin{align*}
\GibbsRisk{\Dcal} = \; \TripleC \underset{(x,y)\sim\Dcal}{\mathbb{P}} ( \ell(\h(x),y) = 1) + \TripleB \left(1- \underset{(x,y)\sim\Dcal}{\mathbb{P}} ( \ell(\h(x),y) = 1)\right).
\end{align*}

\textbf{Upper bound.} 
We use the lower bounds of both $\TripleB$ and $\TripleC$, \ie, $\TripleB \ge \DownTripleB$ and $\TripleC \ge \DownTripleC$, to write
\begin{align*}
    \GibbsRisk{\Dcal} = &\; \; \TripleC \underset{(x,y)\sim\Dcal}{\mathbb{P}} ( \ell(\h(x),y) = 1) + \TripleB \left(1- \underset{(x,y)\sim\Dcal}{\mathbb{P}} ( \ell(\h(x),y) = 1)\right)\\
    \ge & \; \DownTripleC \underset{(x,y)\sim\Dcal}{\mathbb{P}} ( \ell(\h(x),y) = 1) + \DownTripleB \left(1- \underset{(x,y)\sim\Dcal}{\mathbb{P}} ( \ell(\h(x),y) = 1)\right)\\
    \ge & \;  \left(\DownTripleC  - \DownTripleB \right) \underset{(x,y)\sim\Dcal}{\mathbb{P}} ( \ell(\h(x),y) = 1) + \DownTripleB\\
    \ge & \;  \left(\DownTripleC  - \DownTripleB \right) \Hrisk{\Dcal}  + \DownTripleB.
\end{align*}
By rearranging the terms, we have,  with probability at least $1{-}\delta$ on $S \!\sim\! \Dcal^m$
\begin{align*}
\Hrisk{\Dcal} \le \dfrac{\GibbsRisk{\Dcal} - \DownTripleB}{\DownTripleC  - \DownTripleB} \le \dfrac{  \bound_{S}^{\uparrow}(\post, \prior, \delta, \Gibbs_{\post}) - \DownTripleB}{\DownTripleC  - \DownTripleB},
\end{align*}
where the last inequality comes from the lower bound on the Gibbs' risk $\GibbsRisk{\Dcal} \le \bound_{S}^{\uparrow}(\post, \prior, \delta, \Gibbs_{\post})$ that holds with probability $1{-}\delta$ on $S \!\sim\! \Dcal^m$ by assumption.

\textbf{Lower bound.}
We use the upper bounds of both $\TripleB$ and $\TripleC$, \ie $\TripleB \le \DownTripleB$ and $\TripleC \le \DownTripleC$, to write
\begin{align*}
    \GibbsRisk{\Dcal} = &\; \; \TripleC \underset{(x,y)\sim\Dcal}{\mathbb{P}} ( \ell(\h(x),y) = 1) + \TripleB \left(1- \underset{(x,y)\sim\Dcal}{\mathbb{P}} ( \ell(\h(x),y) = 1)\right)\\
    \le & \; \UpTripleC \underset{(x,y)\sim\Dcal}{\mathbb{P}} ( \ell(\h(x),y) = 1) + \UpTripleB \left(1- \underset{(x,y)\sim\Dcal}{\mathbb{P}} ( \ell(\h(x),y) = 1)\right)\\
    \le & \;  \left(\UpTripleC  - \UpTripleB \right) \underset{(x,y)\sim\Dcal}{\mathbb{P}} ( \ell(\h(x),y) = 1) + \UpTripleB\\
    \GibbsRisk{\Dcal} \le & \;  \left(\UpTripleC  - \UpTripleB \right) \Hrisk{\Dcal}  + \UpTripleB.
\end{align*}
By rearranging the terms, we have, with probability at least $1{-}\delta$ on $S \!\sim\! \Dcal^m$, we have
\begin{align*}
\Hrisk{\Dcal} \ge \dfrac{\GibbsRisk{\Dcal} - \UpTripleB}{\UpTripleC  - \UpTripleB} \ge \dfrac{  \bound_{S}^{\downarrow}(\post, \prior, \delta, \Gibbs_{\post}) - \UpTripleB}{\UpTripleC  - \UpTripleB}, 
\end{align*}
where the last inequality comes from the lower bound on the Gibbs' risk $\GibbsRisk{\Dcal} \ge \bound_{S}^{\downarrow}(\post, \prior, \delta, \Gibbs_{\post})$ that holds with probability $1{-}\delta$ on $S \!\sim\! \Dcal^m$ by assumption.
\end{proof}



\subsection{Proof of~\Cref{pr:bound_c_b}}\label{appendix:proof:bound_b_c}
\BoundsBC*
\begin{proof}
We briefly recall that $\TripleB$ and $\TripleC$ are respectively defined by
\begin{align*}
\TripleB  \defeq \! \underset{(x, y) \sim \Dcal}{\Esp}\left[\underset{\h^{\prime} \sim \post}{\Esp} \ell\left(\h^{\prime}(x), y\right) \;\middle|\; \ell(\h(x), y)=0 \right] \text{and} \;  \TripleC \defeq \! \underset{(x, y) \sim \Dcal}{\Esp}\left[\underset{\h^{\prime} \sim \post}{\Esp} \ell\left(\h^{\prime}(x), y\right) \;\middle|\; \ell(\h(x), y)=1\right].
\end{align*}

Let us denote $\alphabf^{\pm} = \sum_{i=1}^n \alpha_i\mathbf{I}\left\{f_i(x) \neq y\right\}$, the sum of the weights of the classifiers that misclassify $x$.
Thus, for any $(x, y)\in\Xcal\times\Ycal$, there exists $\mathbf{j}\in\{0,1\}^n$ such that $\alphabf^{\pm} = \mathbf{j}\cdot\alphabf$. 
Indeed, $\mathbf{j}$ can be seen as a vector indexing the base classifiers that misclassify an instance $x$. This representation will be used to draw the link between $\TripleB$ and $\TripleC$ and the partition problem algorithm.
We also recall that $\alphabf_1$ and $\alphabf_2$ are the result of the partition problem applied to~$\alphabf$, whereas \mbox{$\overset{\sim}{\alphabf} \!=\! \max\left(\sum_{\alpha\in\alphabf_1}\!\alpha, \sum_{\alpha\in\alphabf_2}\!\alpha\right)$}, and \mbox{$\overline{\alphabf} \!=\! \left|\sum_{\alpha\in\alphabf_1}\!\alpha\! -\! \sum_{\alpha\in\alphabf_2}\!\alpha\right|$}.
Now, we focus on the lower and upper bounds of $\TripleB$ and $\TripleC$ for the three forms of $\post$.

\textbf{Categorical distribution $\post=\cat(\alphabf)$.}
We first prove the upper bound of $\TripleBH$ when $\rho=\cat(\alphabf)$. Indeed, we have
\begingroup
\allowdisplaybreaks
\begin{align*}
    \TripleBH &= \underset{(x, y) \sim \Dcal}{\Esp}\left[\underset{\h^{\prime} \sim \cat(\alphabf)}{\Esp} \ell\left(\h^{\prime}(x), y\right) \;\middle|\; \ell(\h_{\alphabf}(x), y)=0 \right]\\
    &= \underset{(x, y) \sim \Dcal}{\Esp}\left[\underset{\h^{\prime} \sim \cat(\alphabf)}{\Esp} \mathbf{I}\big[\sign(h'(x)) \neq y\big] \;\middle|\; \mathbf{I}\big[\sign(h_{\alphabf}(x)) \neq y\big]=0 \right]\\
    &= \underset{(x, y)\sim \Dcal}{\mathbb{E}}\left[\alphabf^{\pm}~|~\alphabf^{\pm}< 0.5\right] &\langle\textup{\cite{Leblanc25}}\rangle\\
    &\leq \; \underset{(x, y)\in\Xcal\times\Ycal}{\max}\left[\alphabf^{\pm}~|~\alphabf^{\pm}< 0.5\right]\\
    &\leq \; \underset{\mathbf{i}\in\{0,1\}^n}{\max}\left[\mathbf{i}\cdot \alphabf~|~\mathbf{i}\cdot \alphabf< 0.5\right]\\
    &= \; \min\left(\sum_{\alpha\in\alphabf_1} \alpha, \sum_{\alpha\in \alphabf_2}\alpha\right) &\langle\textup{\cite{partition}}\rangle\\
    &= \;  1-\max\left(\sum_{\alpha\in\alphabf_1}\alpha, \sum_{\alpha\in\alphabf_2}\alpha\right)\\
    &= \; 1  - \Tilde{\alphabf}.
\end{align*}
\endgroup
Now, we focus on the lower bound of $\TripleCH$. We have
\begingroup
\allowdisplaybreaks
\begin{align*}
    \TripleCH &= \underset{(x, y)\sim \Dcal}{\mathbb{E}}\left[\alphabf^{\pm}~|~\alphabf^{\pm}\geq 0.5\right]&\langle\textup{\cite{Leblanc25}}\rangle\\
    &\geq \; \underset{(x, y)\in\Xcal\times\Ycal}{\min}\left[\alphabf^{\pm}~|~\alphabf^{\pm}\geq 0.5\right]\\
    &\geq \; \underset{\mathbf{i}\in\{0,1\}^n}{\min}\left[\mathbf{i}\cdot \alphabf~|~\mathbf{i}\cdot \alphabf\geq 0.5\right]\\
    &= \; \max\left(\sum_{\alpha\in\alphabf_1} \alpha, \sum_{\alpha\in \alphabf_2}\alpha\right)&\langle\textup{\cite{partition}}\rangle\\
    &= \;  \Tilde{\alphabf}.
\end{align*}
\endgroup

\textbf{Dirichlet distribution $\post = \dir(\alphabf)$.}
First of all, for the sake of clarity, we also denote $I_1$ the set of indexes of classifiers that err and $I_2$ the set of of classifier that predict correctly. 
Following \citet{zantedeschi2021learning}, using the aggregation property of the Dirichlet distribution, we can write 
$\left(\sum_{i \in I_1} w_i, \sum_{i\in I_2} w_i \right)$ is drawn from $\dir\left(\sum_{i \in I_1} \alpha_i, \sum_{i\in I_2} \alpha_i \right)$ which is no more than the $\Beta$ distribution with the same parameters.\\
Moreover, let us consider the random variable $S_w = \sum_{i \in I_1} w_i$ which represent the sum of the weights for classifiers that made mistakes. 
So $S_w \sim \Beta\left(\sum_{i \in I_1} \alpha_i, \sum_{i\in I_2} \alpha_i \right)$. 
Let us now compute the probability that the sum $S_w$ is greater than $0.5$, \ie, most of the classifiers wrongly classify the instance $x$. We have
\begin{align*}
\mathbb{P}[S_w \ge 0.5] = 1 - \mathbb{P}[S_w\le 0.5] = 1 - I_{0.5}\left(\sum_{i \in I_1} \alpha_i, \sum_{i\in I_2} \alpha_i \right) =   I_{0.5}\left(\sum_{i \in I_{2}} \alpha_i, \sum_{i\in I_1} \alpha_i \right) = I_{0.5}\bigg(\|\alphabf\|_{i} - \alphabf^{\pm}, \alphabf^{\pm} \bigg).
\end{align*}
We are now ready to prove the following upper bound $\TripleBH \leq I_{0.5}\left(\overset{\sim}{\alphabf}, ~\|\alphabf\|_1-\overset{\sim}{\alphabf}\right)$. 
We have
\begin{align*}
\TripleBH &=\underset{(x, y)\sim \Dcal}{\mathbb{E}}\left[I_{0.5}\left(\|\alphabf\|_1-\alphabf^{\pm}, \alphabf^{\pm}\right)\Big|\alphabf^{\pm}< \dfrac{\|\alphabf\|_1}{2}\right]~~~\langle\textup{ see \cite{Leblanc25} }\rangle\\
&\leq \underset{(x, y)\in\Xcal\times\Ycal}{\max}\left[I_{0.5}\left(\|\alphabf\|_1-\alphabf^{\pm}, \alphabf^{\pm}\right)\Big|\alphabf^{\pm}< \dfrac{\|\alphabf\|_1}{2}\right]\\
&\leq \underset{\mathbf{i}\in\{0,1\}^n}{\max}\left[I_{0.5}\left(\|\alphabf\|_1-\mathbf{i}\cdot\alphabf, \mathbf{i}\cdot\alphabf\right)\Big|\mathbf{i}\cdot\alphabf< \dfrac{\|\alphabf\|_1}{2}\right].
\end{align*}
Since $I_{0.5}(\cdot,\cdot)$ is decreasing in its first argument, and increasing in its second one, we have
\begin{align*}
\argmax_{\mathbf{i}\in\{0,1\}^n}\left[I_{0.5}\left(\|\alphabf\|_1-\mathbf{i}\cdot\alphabf, \mathbf{i}\cdot\alphabf\right)\Big|\,\mathbf{i}\cdot\alphabf<\dfrac{\|\alphabf\|_1}{2}\right] &= \underset{\mathbf{i}\in\{0,1\}^n}{\argmax}\left[\mathbf{i}\cdot\alphabf~|~\mathbf{i}\cdot\alphabf<\dfrac{\|\alphabf\|_1}{2}\right].\\
\end{align*}
Also, the function $\ibf \mapsto I_{0.5}\left(\|\alphabf\|_1- \ibf \cdot \alphabf, \ibf \cdot \alphabf\right)$ is maximized when $\ibf \cdot \alphabf$ is maximized.
Based on the development for the Categorical distribution, we have
\begin{align*}
\underset{\mathbf{i}\in\{0,1\}^n}{\max}\left[\mathbf{i}\cdot\alphabf~|~\mathbf{i}\cdot\alphabf<\dfrac{\|\alphabf\|_1}{2}\right] = \min\left(\sum_{\alpha\in\alphabf_1} \alpha, \sum_{\alpha\in \alphabf_2}\alpha\right) =\|\alphabf\|_1 - \Tilde{\alphabf}.
\end{align*}
Thus, we have
\begin{align*}
    &~\underset{\mathbf{i}\in\{0,1\}^n}{\max}\left[I_{0.5}\left(\|\alphabf\|_1-\mathbf{i}\cdot\alphabf, \mathbf{i}\cdot\alphabf\right)\Big|\mathbf{i}\cdot\alphabf< \dfrac{\|\alphabf\|_1}{2}\right] = I_{0.5}\left(\overset{\sim}{\alphabf}, \|\alphabf\|_1-\overset{\sim}{\alphabf}\right).
\end{align*}
Let us focus on the lower bound on $\TripleCH$. We have
\begin{align*}
    \TripleCH = & \; \underset{(x, y)\sim \Dcal}{\mathbb{E}}\left[I_{0.5}\left(\|\alphabf\|_1-\alphabf^{\pm}, \alphabf^{\pm}\right)\Big|\alphabf^{\pm}> \dfrac{\|\alphabf\|_1}{2}\right]~~~\langle\textup{ see \cite{Leblanc25} }\rangle\\
&\geq \underset{(x, y)\in\Xcal\times\Ycal}{\min}\left[I_{0.5}\left(\|\alphabf\|_1-\alphabf^{\pm}, \alphabf^{\pm}\right)\Big|\alphabf^{\pm}> \dfrac{\|\alphabf\|_1}{2}\right]\\
&\geq \underset{\mathbf{i}\in\{0,1\}^n}{\min}\left[I_{0.5}\left(\|\alphabf\|_1-\mathbf{i}\cdot\alphabf, \mathbf{i}\cdot\alphabf\right)\Big|\mathbf{i}\cdot\alphabf> \dfrac{\|\alphabf\|_1}{2}\right].
\end{align*}
Using the same properties on $\ibf \cdot \alphabf \mapsto I_{0.5}\left(\|\alphabf\|_1- \ibf \cdot \alphabf, \ibf \cdot \alphabf\right)$, we write
\begin{align*}
\underset{\mathbf{i}\in\{0,1\}^n}{\min}\left[\mathbf{i}\cdot\alphabf~|~\mathbf{i}\cdot\alphabf<\dfrac{\|\alphabf\|_1}{2}\right] = \max\left(\sum_{\alpha\in\alphabf_1} \alpha, \sum_{\alpha\in \alphabf_2}\alpha\right) = \Tilde{\alphabf}.
\end{align*}
Thus, using the same assumptions on $I_{0.5}$, we have
\begin{align*}
\TripleCH \ge I_{0.5}\left(\|\alphabf\|_1-\overset{\sim}{\alphabf}, \overset{\sim}{\alphabf}\right).
\end{align*}

\textbf{Normal distribution $\post=\gaus(\alphabf,\Ibf_{n\times n})$.}
First of all, using the Gaussian assumption, we have
\begin{align*}
\underset{h^{\prime}\sim \post}{\mathbb{P}}[\h^{\prime}(x)\neq y] = \underset{(w_1,\ldots,w_n) \sim \gaus(\alphabf,\Ibf_{n\times n})}{\mathbb{P}}\left[y\sum_{i=1}^n w_i f_i(x) < 0 \right].
\end{align*}
Let us denote $\wbf = (w_1,\ldots, w_n)$ and $\fbf = (f_1(x),\ldots, f_n(x))$, then we can write:
\begin{align*}
\underset{(w_1,\ldots,w_n) \sim \gaus(\alphabf,\Ibf_{n\times n})}{\mathbb{P}}\left[y\sum_{i=1}^n w_i f_i(x) < 0 \right] = \underset{(w_1,\ldots,w_n) \sim \gaus(\alphabf,\Ibf_{n\times n})}{\mathbb{P}}\left[ y \wbf \cdot \fbf < 0 \right],
\end{align*}
where $\wbf \cdot\fbf(x) \sim \gaus( \alphabf \cdot \fbf(x),\Vert \fbf(x)\Vert_2)$. 
Since both $f_i(x)$ and $y$ belong in $\{-1,+1\}$, we have $\Vert \fbf(x)\Vert_2 = \sqrt{n}$ and we can finally write
\begin{align*}
\underset{h^{\prime}\sim \post}{\mathbb{P}}[\h^{\prime}(x)\neq y] = \underset{(w_1,\ldots,w_n) \sim \gaus(\alphabf,\Ibf_{n\times n})}{\mathbb{P}}\left[y \wbf \cdot\fbf(x) < 0 \right] = \Tilde{\Phi}\left(-y\dfrac{ \alphabf \cdot \fbf(x) }{\sqrt{n}}\right) =  \Phi\left(y\dfrac{ \alphabf \cdot \fbf(x) }{\sqrt{n}}\right),
\end{align*}
where $\Tilde{\Phi}$ is the continuous density function of a centered and reduced Gaussian distribution with the relation:
\begin{align*}
\forall k \in \mathbb{R}\; \Tilde{\Phi}(-k) = \dfrac{1}{2}\left(1-\text{Erf}(k/\sqrt{2})\right) = \Phi(k).
\end{align*}

We are now able to prove the lower and upper bounds on $\TripleBH$ and $\TripleCH$.
We first prove the upper bound of $\TripleBH$.
\begingroup
\allowdisplaybreaks
\begin{align*}
\TripleBH  & = \underset{(x, y) \sim \Dcal}{\Esp}\left[\underset{\h^{\prime} \sim \post}{\Esp} \ell\left(\h^{\prime}(x), y\right) \;\middle|\; \ell(h_{\alphabf}(x), y)=0 \right],\\
&=\underset{(x, y)\sim \Dcal}{\mathbb{E}} \left[\Phi\left(y\frac{\alphabf\cdot\fbf(\xbf)}{\sqrt{n}}\right)~\Big|~y~\alphabf\cdot\fbf(\xbf) > 0\right],\\
&=\underset{(x, y)\sim \Dcal}{\mathbb{E}} \left[\Phi\left(\frac{|\alphabf\cdot\fbf(\xbf)|}{\sqrt{n}}\right)~\Big|~y~\alphabf\cdot\fbf(\xbf) > 0\right]&\langle\textup{ see \cite{Leblanc25} }\rangle\\
&\leq \underset{x\in\Xcal}{\max}~\Phi\left(\frac{|\alphabf\cdot\fbf(\xbf)|}{\sqrt{n}}\right)\\
&\leq \Phi\left(\frac{\underset{x\in\Xcal}{\min}|\alphabf\cdot\fbf(\xbf)|}{\sqrt{n}}\right)&\langle~\Phi\text{ is strictly decreasing. }\rangle\\
&\leq \Phi\left(\frac{\underset{\mathbf{i}\in\{-1,1\}^n}{\min}|\alphabf\cdot\ibf|}{\sqrt{n}}\right).
\end{align*}
\endgroup
Finally, note that 
\begin{align*}
\underset{\mathbf{i}\in\{-1,1\}^n}{\min}|\alphabf\cdot\ibf| = \min_{\alphabf_1,\alphabf_2}\left\{\big|\sum_{\alpha\in\alphabf_1} \alphabf - \sum_{\alpha\in\alphabf_2} \alphabf\Big|:\{\alphabf_1,\alphabf_2\}\textup{ is a partition of }\alphabf\right\},
\end{align*}
which corresponds to the objective of the partition problem to be minimized. Plugging that into the development yields the main result, we have
\begin{align*}
\TripleBH   \le \Phi \left(\dfrac{\overline{\alphabf}}{\sqrt{n}}\right).
\end{align*}
We now prove the lower bound of $\TripleBH$. We have
\begingroup
\allowdisplaybreaks
\begin{align*}
\TripleBH  & = \underset{(x, y) \sim \Dcal}{\Esp}\left[\underset{\h^{\prime} \sim \post}{\Esp} \ell\left(\h^{\prime}(x), y\right) \;\middle|\; \ell(h_{\alphabf}(x), y)=0 \right],\\
&=\underset{(x, y)\sim \Dcal}{\mathbb{E}} \left[\Phi\left(y\frac{\alphabf\cdot\fbf(\xbf)}{\sqrt{n}}\right)~\Big|~y~\alphabf\cdot\fbf(\xbf) > 0\right],\\
&=\underset{(x, y)\sim \Dcal}{\mathbb{E}} \left[\Phi\left(\frac{|\alphabf\cdot\fbf(\xbf)|}{\sqrt{n}}\right)~\Big|~y~\alphabf\cdot\fbf(\xbf) > 0\right]\\
&\geq \underset{x\in\Xcal}{\min}~\Phi\left(\frac{|\alphabf\cdot\fbf(\xbf)|}{\sqrt{n}}\right)\\
&\geq \Phi\left(\frac{\underset{x\in\Xcal}{\max}|\alphabf\cdot\fbf(\xbf)|}{\sqrt{n}}\right)&\langle~\Phi\text{ is strictly decreasing. }\rangle\\
&\geq \Phi\left(\frac{\underset{\mathbf{i}\in\{-1,1\}^n}{\max}|\alphabf\cdot\ibf|}{\sqrt{n}}\right)\\
&=\Phi\left(\frac{\|\alphabf\|_1}{\sqrt{n}}\right).
\end{align*}
\endgroup
We now prove the upper bound of $\TripleCH$. We have
\begingroup
\allowdisplaybreaks
\begin{align*}
 \TripleCH & =  \underset{(x, y) \sim \Dcal}{\Esp}\left[\underset{\h^{\prime} \sim \post}{\Esp} \ell\left(\h^{\prime}(x), y\right) \;\middle|\; \ell(\h(x), y)=1\right],\\
& =\underset{(x, y)\sim \Dcal}{\mathbb{E}} \left[\Phi\left(y\frac{\alphabf\cdot\fbf(\xbf)}{\sqrt{n}}\right)~\Big|~y~\alphabf\cdot\fbf(\xbf) < 0\right],\\
& = \underset{(x, y)\sim \Dcal}{\mathbb{E}} \left[\Phi\left(\frac{-|\alphabf\cdot\fbf(\xbf)|}{\sqrt{n}}\right)~\Big|~y~\alphabf\cdot\fbf(\xbf) \leq 0\right],\\
& =1-\underset{(x, y)\sim \Dcal}{\mathbb{E}} \left[\Phi\left(\frac{|\alphabf\cdot\fbf(\xbf)|}{\sqrt{n}}\right)~\Big|~y~\alphabf\cdot\fbf(\xbf) \leq 0\right]&\langle\textup{ see \cite{Leblanc25} }\rangle\\
&\leq 1-\underset{x\in\Xcal}{\min}~\Phi\left(\frac{|\alphabf\cdot\fbf(\xbf)|}{\sqrt{n}}\right)\\
&\leq  1-\Phi\left(\frac{\underset{x\in\Xcal}{\max}|\alphabf\cdot\fbf(\xbf)|}{\sqrt{n}}\right)&\langle~\Phi\text{ is strictly decreasing. }\rangle\\
&=  1-\Phi\left(\frac{\|\alphabf\|_1}{\sqrt{n}}\right).
\end{align*}
\endgroup
We now prove the lower bound of $\TripleCH$. We have
\begingroup
\allowdisplaybreaks
\begin{align*}
 \TripleCH & =  \underset{(x, y) \sim \Dcal}{\Esp}\left[\underset{\h^{\prime} \sim \post}{\Esp} \ell\left(\h^{\prime}(x), y\right) \;\middle|\; \ell(\h(x), y)=1\right],\\
 & =\underset{(x, y)\sim \Dcal}{\mathbb{E}} \left[\Phi\left(y\frac{\alphabf\cdot\fbf(\xbf)}{\sqrt{n}}\right)~\Big|~y~\alphabf\cdot\fbf(\xbf) < 0\right],\\
& = \underset{(x, y)\sim \Dcal}{\mathbb{E}} \left[\Phi\left(\frac{-|\alphabf\cdot\fbf(\xbf)|}{\sqrt{n}}\right)~\Big|~y~\alphabf\cdot\fbf(\xbf) \leq 0\right],\\
 &=1-\underset{(x, y)\sim \Dcal}{\mathbb{E}} \left[\Phi\left(\frac{|\alphabf\cdot\fbf(\xbf)|}{\sqrt{n}}\right)~\Big|~y~\alphabf\cdot\fbf(\xbf) \leq 0\right]\\
&\leq 1-\underset{x\in\Xcal}{\max}~\Phi\left(\frac{|\alphabf\cdot\fbf(\xbf)|}{\sqrt{n}}\right)\\
&\leq  1-\Phi\left(\frac{\underset{x\in\Xcal}{\min}|\alphabf\cdot\fbf(\xbf)|}{\sqrt{n}}\right)&\langle~\Phi\text{ is strictly decreasing. }\rangle\\
&=  1-\Phi\left(\frac{\overline{\alphabf}}{\sqrt{n}}\right).
\end{align*}
\endgroup
\end{proof}

\subsection{Trade-off upper bounds proofs}\label{app:trade-off_proofs}

\begin{theorem}\label{theorem:trade-off_randomized}
For any distribution $\Dcal$, any hypothesis set~$\Hcal$, any prior $\prior$ on $\Hcal$, and $\delta\!\in\! (0,1]$, $\lambda > 0$ with probability at least $1{-}\delta$ on the random choice $S{\sim} \Dcal^m$ we have for any $\post$ over $\Hcal$ and any $\h \!\in\! \Hcal$
\begin{align*}
 (1-\lambda)\GibbsRisk{\Dcal}+\lambda\FairRisk{\Dcal}{G_\post}  \leq &(1{-}\lambda)\ \bound_{S}^{\uparrow}(\post, \prior, \tfrac{\delta}{2}, \Gibbs_{\post}) + \\
& \nonumber \lambda\max\big\{\bound_{S_{|a\!}}^{\uparrow}(\post, \prior, \tfrac{\delta}{8}, \Gibbs_\post) \!-\!\bound_{S_{|b\!}}^{\downarrow}(\post, \prior, \tfrac{\delta}{8}, \Gibbs_{\post}),\\
& \phantom{+ \lambda\max\big\{\ \ }\bound_{S_{|b\!}}^{\uparrow}(\post, \prior, \tfrac{\delta}{8}, \Gibbs_{\post}) \!-\! \bound_{S_{|a\!}}^{\downarrow}(\post, \prior, \tfrac{\delta}{8}, \Gibbs_{\post})\big\}\,,
\end{align*}
\end{theorem}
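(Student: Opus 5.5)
The plan is a union bound that combines two results already established: \Cref{theorem:classical_kl} applied to the whole sample $S$ at confidence $\tfrac{\delta}{2}$, and \Cref{theorem:gibbs_fairness_bound} applied at confidence $\tfrac{\delta}{2}$.

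\textbf{Step 1 (risk term).} By \Cref{theorem:classical_kl}, in its upper-bound form \Cref{eq:borne-sup-kl}, with probability at least $1{-}\tfrac{\delta}{2}$ over $S\sim\Dcal^m$ we have, simultaneously for all $\post$,
\[
\GibbsRisk{\Dcal}\le \bound_{S}^{\uparrow}(\post,\prior,\tfrac{\delta}{2},\Gibbs_\post).
\]

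\textbf{Step 2 (fairness term).} Apply \Cref{theorem:gibbs_fairness_bound} with confidence parameter $\tfrac{\delta}{2}$ in place of $\delta$. Its internal splitting into four events at level $\tfrac{\delta}{2}\cdot\tfrac14=\tfrac{\delta}{8}$ yields exactly the per-group terms $\bound^{\uparrow}_{S_{|g}}(\post,\prior,\tfrac{\delta}{8},\Gibbs_\post)$ and $\bound^{\downarrow}_{S_{|g}}(\post,\prior,\tfrac{\delta}{8},\Gibbs_\post)$. So with probability at least $1{-}\tfrac{\delta}{2}$, the bound $\FairRisk{\Dcal}{\Gibbs_\post}\le\max\{\cdot,\cdot\}$ holds for all $\post$. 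One subtlety: the group subsamples $S_{|g}$ have random sizes $m_g$. I would handle this as in the proof of \Cref{theorem:gibbs_fairness_bound}, by conditioning on the group labels so that each $S_{|g}$ is an i.i.d. sample from $\Dcal_{|g}$, and then integrating.

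\textbf{Step 3 (combination).} A union bound over the two events gives probability at least $1{-}\delta$ that both inequalities hold for every $\post$. On that event, multiply the first inequality by $1{-}\lambda$ and the second by $\lambda$, then add. Both coefficients are nonnegative provided $\lambda\in(0,1]$, consistent with the trade-off setting $\lambda\in[0,1]$ of \Cref{eq:tradeoff_classic_h}. The statement only says $\lambda>0$, so I would restrict to $\lambda\le1$; for $\lambda>1$ the coefficient $1{-}\lambda$ would be negative and the risk term would require the lower bound instead. The quantifier ``any $h\in\Hcal$'' is vacuous here, since the conclusion only involves $\Gibbs_\post$.

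The only delicate step is the bookkeeping of the confidence levels (checking that $\tfrac{\delta}{2}+4\cdot\tfrac{\delta}{8}=\delta$) together with the conditioning argument for the random group sizes. Everything else is immediate.
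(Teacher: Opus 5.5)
Your proposal is correct and follows the paper's proof: \Cref{theorem:classical_kl} at level $\tfrac{\delta}{2}$ for the risk term, \Cref{theorem:gibbs_fairness_bound} at level $\tfrac{\delta}{2}$ for the fairness term (which yields the $\tfrac{\delta}{8}$ group terms), and a union bound. Your remark that the final combination step needs $\lambda\le 1$, so that $1-\lambda\ge 0$, is a valid correction: the stated hypothesis $\lambda>0$ is too weak, and the paper's proof silently relies on $\lambda\le 1$; your remark on conditioning on the group labels to handle the random sizes $m_g$ is likewise a reasonable refinement that the paper leaves implicit.
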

\begin{proof}
We apply~\Cref{theorem:classical_kl} to upper-bound $ (1-\lambda)\GibbsRisk{\Dcal}$ and~\Cref{theorem:gibbs_fairness_bound} to upper-bound $\lambda\FairRisk{\Dcal}{G_\post}$, both with probabilty with probabilty $1-\tfrac{\delta}{2}$. Combining them via union bound yields the stated result
\end{proof}

\begin{theorem}\label{theorem:trade-off_deterministic}
For any distribution $\Dcal$, any hypothesis set~$\Hcal$, any prior $\prior$ on $\Hcal$, and $\delta\!\in\! (0,1]$, $\lambda > 0$ with probability at least $1{-}\delta$ on the random choice $S{\sim} \Dcal^m$ we have for any $\post$ over $\Hcal$ and any $\h \!\in\! \Hcal$
\begin{align*}
 (1-\lambda)\Risk(h) +\lambda\FairRisk{\Dcal}{h}  \leq & (1{-}\lambda)\ \BoundMV_{S_{}}^{\uparrow}(\post, \prior, \tfrac{\delta}{2}, h)\nonumber +\\ 
 &\lambda\max\big\{\BoundMV_{S_{|\GrOne}\!}^{\uparrow}(\post, \prior, \tfrac{\delta}{8}, h) \!-\! \BoundMV_{S_{|\GrTwo}\!}^{\downarrow}(\post, \prior, \tfrac{\delta}{8}, h),\BoundMV_{S_{|\GrTwo}\!}^{\uparrow}(\post, \prior, \tfrac{\delta}{8}, h) \!-\! \BoundMV_{S_{|\GrOne}\!}^{\downarrow}(\post, \prior, \tfrac{\delta}{8},h)
\big\},
\end{align*}
\end{theorem}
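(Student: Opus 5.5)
The plan mirrors the proof of \Cref{theorem:trade-off_randomized}. The goal is to bound the two terms of the trade-off separately, each with confidence $1{-}\tfrac{\delta}{2}$, and then combine them by a union bound. Both bounds must hold uniformly over all $\post$ and $\h$.

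\textbf{Risk term.} I would apply the upper bound of \Cref{lemma:det_upper_lower} (\Cref{lemma:det_upper}) on the whole sample $S$ with confidence parameter $\tfrac{\delta}{2}$. Its hypotheses are of two kinds:
\begin{itemize}
\item The Gibbs upper bound $\GibbsRisk{\Dcal}\le\bound_S^{\uparrow}(\post,\prior,\tfrac{\delta}{2},\Gibbs_\post)$. This comes from \Cref{theorem:classical_kl} and holds with probability $1{-}\tfrac{\delta}{2}$ simultaneously for all $\post$.
\item The bounds $\DownTripleB\le\TripleB$ and $\DownTripleC\le\TripleC$, together with $\DownTripleC>\DownTripleB$. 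These are deterministic: for majority votes they are supplied by \Cref{pr:bound_c_b}, so they cost no confidence.
\end{itemize}
This gives $\Hrisk{\Dcal}\le\BoundMV_S^{\uparrow}(\post,\prior,\tfrac{\delta}{2},\h)$ on this event. Since $1{-}\lambda\ge 0$ for $\lambda\in(0,1]$, multiplying by $1{-}\lambda$ preserves the inequality.

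\textbf{Fairness term.} Here I would invoke \Cref{theorem:deterministic_abs_dif} with confidence $\tfrac{\delta}{2}$ in place of $\delta$. Inside its proof the four one-sided group bounds each receive $\tfrac{\delta}{2}\cdot\tfrac14=\tfrac{\delta}{8}$, which explains the $\tfrac{\delta}{8}$ arguments. Concretely, these are the upper and lower bounds $\BoundMV^{\uparrow},\BoundMV^{\downarrow}$ on $S_{|\GrOne}$ and $S_{|\GrTwo}$, each built from \Cref{lemma:det_upper_lower} applied to the conditional distributions $\Dcal_{|g}$. With probability $1{-}\tfrac{\delta}{2}$, $\FairRisk{\Dcal}{\h}$ is then bounded by the stated maximum. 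Multiplying by $\lambda>0$ preserves the inequality.

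\textbf{Combining.} A union bound over the two events gives probability at least $1{-}\delta$, and adding the two inequalities yields the claim.

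\textbf{Main obstacle.} The delicate point is bookkeeping rather than any new analysis. First, one must check that the events are uniform in $\post$, which holds because \Cref{theorem:classical_kl} is uniform in $\post$. Second, one must check that they are uniform in $\h$, which holds because the only $\h$-dependence enters through deterministic bounds on $\TripleB$ and $\TripleC$. Third, the subgroup bounds are conditional on the random sizes $m_g$. I would handle this by applying \Cref{theorem:classical_kl} conditionally on the group labels $(g_i)_i$, where $S_{|g}$ is an i.i.d.\ sample of size $m_g$ from $\Dcal_{|g}$, and then integrating out, exactly as in \Cref{theorem:gibbs_fairness_bound}.
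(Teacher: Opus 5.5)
Your proposal is correct and matches the paper's argument: the risk term comes from the upper bound of \Cref{lemma:det_upper_lower} at confidence $\tfrac{\delta}{2}$, the fairness term from \Cref{theorem:deterministic_abs_dif} at confidence $\tfrac{\delta}{2}$ (which gives the $\tfrac{\delta}{8}$ arguments), and the two are combined by a union bound. You are right that the step needs $1-\lambda\ge 0$, so the stated ``$\lambda>0$'' should really be $\lambda\in(0,1]$ as in \Cref{sec:algo}; your remarks on uniformity in $\post,\h$ and on the random subgroup sizes $m_g$ make explicit what the paper leaves implicit.
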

\begin{proof}
We apply~\Cref{lemma:det_upper_lower} to upper-bound $(1-\lambda)\Risk(\halpha)$ and~\Cref{theorem:deterministic_abs_dif} to upper-bound $\lambda\FairRisk{\Dcal}{\halpha}$, both with probabilty with probabilty $1-\tfrac{\delta}{2}$. Combining them via union bound yields the stated result
\end{proof}

\section{Experimentations}\label{appendix:experimentations}
We used an NVIDIA GeForce RTX 2080 Ti graphics card for the experiments. We used a batch size equal to 1024, and a learning rate equal to 0.1 with a scheduler reducing this parameter by a factor of 10 with an epoch patience of 2. The maximal number of epochs is set to 100, and patience is set to 25 for performing early stopping. For each distribution posterior family (Categorical, Dirichlet and Gaussian) we use a uniform prior from the same family as the prior. Let $\alphabf_\prior$ be the parameter vector corresponding to the prior distribution $\prior$. We have for the three possible distribution family :
\begin{enumerate}
    \item Categorical $\alpha^{\prior}_{i} = \frac{1}{n}$ and $\prior = \cat(\alphabf^\prior)$
    \item Dirichlet $\alpha^{\prior}_{i} =1$ and $\prior = \dir(\alphabf^\prior)$
    \item  Gaussian $\alpha^{\prior}_{i} = 0$ and $\prior = \gaus(\alphabf^\prior, \Ibf_{n\times n})$
\end{enumerate}

\subsection{Datasets overview}\label{app:datasets_overview}

\begin{table}[H]
\centering
\caption{Overview of the datasets used in the experiments and their overall imbalance, where $m$ refers to the number of observations and $d$ the number of features.}
\label{tab:datasets_overview1}
\begin{tabular}{|c|c|c|c|}
\hline
Dataset & $m$ & $d$ & $\Pbb_S(y=+1)$ \\
\hline
ADULT  & 48842 & 6  & 0.24 \\
COMPAS & 5278  & 9  & 0.53 \\
GERMAN & 1000  & 10 & 0.70 \\
MEPS   & 15839 & 41 & 0.82 \\
\hline
\end{tabular}
\end{table}

\begin{table}[H]
    \centering
    \caption{An overview of the various datasets used in the experiments, with the sensitive attributes available for each and the balance according to this attribute, where $m$ refers to the number of observations and $d$ the number of features.}
    \label{tab:datasets_overview2}
    \begin{tabular}{|cc|cc|cc|cc|}
    \hline
    Dataset & Sens. attr. & $m$ & $d$ & $\Pbb_S(A=0)$ & $\Pbb_S(y=+1~|~A=0)$ & $\Pbb_S(A=1)$ & $\Pbb_S(y=+1~|~A=1)$ \\
    \hline
    \multirow{2}{*}{ADULT} & RACE & \multirow{2}{*}{48842} & \multirow{2}{*}{6} & 0.14 & 0.15 & 0.86 & 0.25\\ 
     & GENDER &  &  & 0.33 & 0.11 & 0.67 & 0.30  \\ 
    \hline
    \multirow{2}{*}{COMPAS} & RACE & \multirow{2}{*}{5278} & \multirow{2}{*}{9} & 0.60 & 0.48 & 0.40 & 0.61\\ 
     & GENDER &  &  & 0.80 & 0.50 & 0.20 & 0.64 \\ 
    \hline
    \multirow{2}{*}{GERMAN} & AGE & \multirow{2}{*}{1000} & \multirow{2}{*}{10} & 0.15 & 0.59 & 0.85 & 0.72 \\
     & GENDER &  &  & 0.31 & 0.65 & 0.69 & 0.72 \\ 
    \hline
    \multirow{2}{*}{MEPS} & RACE & \multirow{2}{*}{15839} & \multirow{2}{*}{41} & 0.64 & 0.87 & 0.36 & 0.74 \\
     & GENDER &  &  & 0.52 & 0.79 & 0.48 & 0.87 \\ 
    \hline
    \end{tabular}
\end{table}

\begin{table}[H]
\centering
\caption{Prediction task and sensitive attributes. For each dataset, we specify the favorable outcome ($y=+1$) and the sensitive attributes.
When a sensitive attribute is non-binary, it is binarized by contrasting the privileged group
against the rest (\eg, \emph{White} vs.\ \emph{Non-White}).}
\label{tab:tasks_sensitive}
\begin{tabular}{|c|c|c|c|}
\hline
Dataset & $y=+1$ definition & Attribute & Values \\
\hline

\multirow{2}{*}{ADULT} & \multirow{2}{*}{Annual income $>50$k\$} & Gender & Male / Female \\
& & Race   & White / Non-White \\

\hline

\multirow{2}{*}{COMPAS}
& \multirow{2}{*}{No recidivism after 2 years}
& Gender & Male / Female \\
& & Race   & White / Non-White \\

\hline

\multirow{2}{*}{GERMAN}
& \multirow{2}{*}{Good credit}
& Gender & Male / Female \\
& & Age    & $\geq 25$ years \\

\hline

\multirow{2}{*}{MEPS}
& \multirow{2}{*}{Fewer than $10$ medical visits}
& Gender & Male / Female \\
& & Race   & White / Non-White \\

\hline
\end{tabular}
\end{table}

\subsection{Detailed results}\label{app:experimentations_detailed}

In~\Cref{tab:results_bounds,tab:results_measure}, we report the test performance and corresponding generalization bounds obtained when training under each fairness constraint (DP, EO, and EOP).
Additionally,~\Cref{fig:plot_EO} provides a visual illustration of the results for models trained with the EO constraint, while~\Cref{fig:plot_EOP} presents the analogous results for EOP.\\
The results are consistent with what we discussed in~\Cref{sec:expe} and confirm the meaningfulness of the obtained risk and fairness bounds.

\begin{table}[ht]
    \centering
    \caption{Average bound value, over 5 runs, and standard deviation of the risk and fairness for each measure considered for training. For EO the O2 method cannot be directly generalized because the fairness risk is a linear combination of risks; therefore, we do not report it.}
    \label{tab:results_bounds}
  \resizebox{\linewidth}{!}{\begin{tabular}{|ccc|cc|cc|cc|}
    \hline
    Dataset & Sens. attr. & Type & Risk & DP & Risk & EO & Risk & EOP \\
    \hline
    \multirow{8}{*}{ADULT} & \multirow{4}{*}{GENDER} & Ours - det. & 21.45 $\pm$ 0.07 & 5.33 $\pm$ 0.05 & 21.45 $\pm$ 0.07 & 3.87 $\pm$ 0.08 & 21.45 $\pm$ 0.07 & 10.95 $\pm$ 0.27\\
 &  & Ours - sto. & 43.91 $\pm$ 0.18 & 4.6 $\pm$ 0.04 & 21.69 $\pm$ 0.07 & 6.79 $\pm$ 0.08 & 43.91 $\pm$ 0.18 & 9.65 $\pm$ 0.12\\
 &  & O1 - sto. & 43.89 $\pm$ 0.18 & 4.5 $\pm$ 0.04 & 21.44 $\pm$ 0.07 & 3.87 $\pm$ 0.08 & 43.89 $\pm$ 0.18 & 9.59 $\pm$ 0.12\\
 &  & O2 - sto. & 50.79 $\pm$ 0.01 & 3.5 $\pm$ 0.01 & N/A & N/A & 47.22 $\pm$ 0.06 & 8.5 $\pm$ 0.09\\
\cline{2-9}
 & \multirow{4}{*}{OTHER} & Ours - det. & 21.45 $\pm$ 0.07 & 4.07 $\pm$ 0.09 & 21.45 $\pm$ 0.07 & 3.76 $\pm$ 0.15 & 21.45 $\pm$ 0.07 & 10.61 $\pm$ 0.57\\
 &  & Ours - sto. & 21.69 $\pm$ 0.07 & 6.99 $\pm$ 0.1 & 21.69 $\pm$ 0.07 & 7.44 $\pm$ 0.15 & 21.69 $\pm$ 0.07 & 12.31 $\pm$ 0.53\\
 &  & O1 - sto. & 21.44 $\pm$ 0.07 & 4.06 $\pm$ 0.09 & 21.44 $\pm$ 0.07 & 3.76 $\pm$ 0.15 & 21.44 $\pm$ 0.07 & 10.6 $\pm$ 0.57\\
 &  & O2 - sto. & 50.77 $\pm$ 0.01 & 4.41 $\pm$ 0.02 & N/A & N/A & 29.4 $\pm$ 0.13 & 10.35 $\pm$ 0.21\\
\hline
\multirow{8}{*}{COMPAS} & \multirow{4}{*}{GENDER} & Ours - det. & 40.34 $\pm$ 0.45 & 24.15 $\pm$ 0.43 & 40.34 $\pm$ 0.45 & 25.86 $\pm$ 0.58 & 40.34 $\pm$ 0.45 & 20.73 $\pm$ 0.55\\
 &  & Ours - sto. & 52.59 $\pm$ 0.17 & 12.97 $\pm$ 1.32 & 51.22 $\pm$ 0.15 & 18.28 $\pm$ 0.38 & 51.22 $\pm$ 0.15 & 16.28 $\pm$ 0.43\\
 &  & O1 - sto. & 52.58 $\pm$ 0.17 & 12.91 $\pm$ 1.31 & 51.22 $\pm$ 0.15 & 17.97 $\pm$ 0.36 & 51.22 $\pm$ 0.15 & 15.99 $\pm$ 0.4\\
 &  & O2 - sto. & 53.75 $\pm$ 0.0 & 13.6 $\pm$ 0.01 & N/A & N/A & 52.09 $\pm$ 0.04 & 16.9 $\pm$ 0.57\\
\cline{2-9}
 & \multirow{4}{*}{OTHER} & Ours - det. & 40.34 $\pm$ 0.45 & 27.05 $\pm$ 0.72 & 40.34 $\pm$ 0.45 & 28.44 $\pm$ 0.6 & 40.34 $\pm$ 0.45 & 23.81 $\pm$ 1.19\\
 &  & Ours - sto. & 52.63 $\pm$ 0.13 & 11.51 $\pm$ 1.22 & 52.7 $\pm$ 0.11 & 16.17 $\pm$ 1.16 & 52.7 $\pm$ 0.11 & 15.18 $\pm$ 0.91\\
 &  & O1 - sto. & 52.63 $\pm$ 0.13 & 11.45 $\pm$ 1.21 & 52.69 $\pm$ 0.11 & 16.03 $\pm$ 1.14 & 52.69 $\pm$ 0.11 & 15.05 $\pm$ 0.89\\
 &  & O2 - sto. & 52.89 $\pm$ 0.04 & 10.73 $\pm$ 0.08 & N/A & N/A & 52.29 $\pm$ 0.06 & 14.18 $\pm$ 0.15\\
\hline
\multirow{8}{*}{GERMAN} & \multirow{4}{*}{GENDER} & Ours - det. & 37.02 $\pm$ 0.45 & 9.93 $\pm$ 1.67 & 37.02 $\pm$ 0.45 & 14.38 $\pm$ 2.51 & 37.02 $\pm$ 0.45 & 8.2 $\pm$ 2.18\\
 &  & Ours - sto. & 37.48 $\pm$ 0.42 & 27.77 $\pm$ 0.63 & 37.48 $\pm$ 0.42 & 40.08 $\pm$ 0.77 & 37.48 $\pm$ 0.42 & 33.65 $\pm$ 0.32\\
 &  & O1 - sto. & 37.02 $\pm$ 0.45 & 9.92 $\pm$ 1.67 & 37.02 $\pm$ 0.45 & 14.38 $\pm$ 2.51 & 37.02 $\pm$ 0.45 & 8.2 $\pm$ 2.18\\
 &  & O2 - sto. & 57.33 $\pm$ 0.0 & 22.14 $\pm$ 0.01 & N/A & N/A & 57.33 $\pm$ 0.0 & 26.57 $\pm$ 0.12\\
\cline{2-9}
 & \multirow{4}{*}{OTHER} & Ours - det. & 37.02 $\pm$ 0.45 & 19.15 $\pm$ 5.13 & 37.02 $\pm$ 0.45 & 26.75 $\pm$ 5.24 & 37.02 $\pm$ 0.45 & 20.03 $\pm$ 3.67\\
 &  & Ours - sto. & 37.48 $\pm$ 0.42 & 33.86 $\pm$ 2.45 & 37.48 $\pm$ 0.42 & 47.81 $\pm$ 1.88 & 37.48 $\pm$ 0.42 & 40.96 $\pm$ 0.37\\
 &  & O1 - sto. & 37.02 $\pm$ 0.45 & 19.15 $\pm$ 5.13 & 37.02 $\pm$ 0.45 & 26.75 $\pm$ 5.24 & 37.02 $\pm$ 0.45 & 20.02 $\pm$ 3.67\\
 &  & O2 - sto. & 57.33 $\pm$ 0.0 & 26.55 $\pm$ 0.0 & N/A & N/A & 57.33 $\pm$ 0.0 & 33.18 $\pm$ 0.35\\
\hline
\multirow{8}{*}{MEPS} & \multirow{4}{*}{GENDER} & Ours - det. & 21.9 $\pm$ 1.95 & 39.8 $\pm$ 12.4 & 21.9 $\pm$ 1.95 & 39.12 $\pm$ 12.58 & 21.9 $\pm$ 1.95 & 35.4 $\pm$ 12.02\\
 &  & Ours - sto. & 19.81 $\pm$ 0.05 & 12.35 $\pm$ 0.94 & 19.31 $\pm$ 0.04 & 13.48 $\pm$ 0.34 & 19.31 $\pm$ 0.04 & 11.15 $\pm$ 0.28\\
 &  & O1 - sto. & 19.01 $\pm$ 0.07 & 6.86 $\pm$ 0.94 & 18.57 $\pm$ 0.04 & 7.22 $\pm$ 0.6 & 18.57 $\pm$ 0.04 & 4.73 $\pm$ 0.49\\
 &  & O2 - sto. & 51.6 $\pm$ 0.01 & 5.66 $\pm$ 0.02 & N/A & N/A & 51.6 $\pm$ 0.01 & 6.19 $\pm$ 0.0\\
\cline{2-9}
 & \multirow{4}{*}{OTHER} & Ours - det. & 21.46 $\pm$ 2.16 & 38.85 $\pm$ 13.76 & 21.46 $\pm$ 2.16 & 36.76 $\pm$ 13.73 & 21.46 $\pm$ 2.16 & 34.27 $\pm$ 13.36\\
 &  & Ours - sto. & 47.53 $\pm$ 2.05 & 8.0 $\pm$ 0.88 & 17.65 $\pm$ 0.08 & 11.95 $\pm$ 0.24 & 17.65 $\pm$ 0.08 & 10.72 $\pm$ 0.13\\
 &  & O1 - sto. & 47.52 $\pm$ 2.06 & 7.9 $\pm$ 0.82 & 17.04 $\pm$ 0.08 & 8.35 $\pm$ 0.25 & 17.04 $\pm$ 0.08 & 6.6 $\pm$ 0.14\\
 &  & O2 - sto. & 51.6 $\pm$ 0.01 & 5.83 $\pm$ 0.02 & N/A & N/A & 51.6 $\pm$ 0.01 & 6.5 $\pm$ 0.01\\
    \hline
    \end{tabular}}
\end{table}

\begin{table}[ht]
    \centering
    \caption{Average test value, over 5 runs, and standard deviation of the risk and fairness for each measure considered for training. For EO the O2 method cannot be directly generalized because the fairness risk is a linear combination of risks; therefore, we do not report it.}
    \label{tab:results_measure}
  \resizebox{\linewidth}{!}{\begin{tabular}{|ccc|cc|cc|cc|}
    \hline
    Dataset & Sens. attr. & Type & Risk & DP & Risk & EO & Risk & EOP \\
    \hline
    \multirow{8}{*}{ADULT} & \multirow{4}{*}{GENDER} & Ours - det. & 20.27 $\pm$ 0.28 & 3.75 $\pm$ 0.21 & 20.27 $\pm$ 0.28 & 0.76 $\pm$ 0.34 & 20.27 $\pm$ 0.28 & 1.75 $\pm$ 1.07\\
 &  & Ours - sto. & 42.64 $\pm$ 0.17 & 0.93 $\pm$ 0.07 & 20.27 $\pm$ 0.28 & 0.76 $\pm$ 0.34 & 42.64 $\pm$ 0.17 & 0.43 $\pm$ 0.26\\
 &  & O1 - sto. & 42.64 $\pm$ 0.17 & 0.93 $\pm$ 0.07 & 20.27 $\pm$ 0.28 & 0.76 $\pm$ 0.34 & 42.64 $\pm$ 0.17 & 0.43 $\pm$ 0.26\\
 &  & O2 - sto. & 49.6 $\pm$ 0.01 & 0.01 $\pm$ 0.01 & N/A & N/A & 46.02 $\pm$ 0.08 & 0.08 $\pm$ 0.07\\
\cline{2-9}
 & \multirow{4}{*}{OTHER} & Ours - det. & 20.27 $\pm$ 0.28 & 1.49 $\pm$ 0.38 & 20.27 $\pm$ 0.28 & 0.83 $\pm$ 0.48 & 20.27 $\pm$ 0.28 & 2.89 $\pm$ 2.03\\
 &  & Ours - sto. & 20.27 $\pm$ 0.28 & 1.49 $\pm$ 0.38 & 20.27 $\pm$ 0.28 & 0.83 $\pm$ 0.48 & 20.27 $\pm$ 0.28 & 2.89 $\pm$ 2.03\\
 &  & O1 - sto. & 20.27 $\pm$ 0.28 & 1.49 $\pm$ 0.38 & 20.27 $\pm$ 0.28 & 0.83 $\pm$ 0.48 & 20.27 $\pm$ 0.28 & 2.89 $\pm$ 2.03\\
 &  & O2 - sto. & 49.58 $\pm$ 0.01 & 0.09 $\pm$ 0.01 & N/A & N/A & 28.13 $\pm$ 0.19 & 0.81 $\pm$ 0.42\\
\hline
\multirow{8}{*}{COMPAS} & \multirow{4}{*}{GENDER} & Ours - det. & 36.34 $\pm$ 1.77 & 14.92 $\pm$ 1.73 & 36.34 $\pm$ 1.77 & 12.45 $\pm$ 2.63 & 36.34 $\pm$ 1.77 & 8.68 $\pm$ 3.14\\
 &  & Ours - sto. & 49.41 $\pm$ 0.23 & 2.26 $\pm$ 1.32 & 47.8 $\pm$ 0.29 & 2.01 $\pm$ 0.46 & 47.8 $\pm$ 0.29 & 1.39 $\pm$ 0.46\\
 &  & O1 - sto. & 49.41 $\pm$ 0.23 & 2.26 $\pm$ 1.32 & 47.8 $\pm$ 0.29 & 2.01 $\pm$ 0.46 & 47.8 $\pm$ 0.29 & 1.39 $\pm$ 0.46\\
 &  & O2 - sto. & 49.82 $\pm$ 0.01 & 0.09 $\pm$ 0.03 & N/A & N/A & 48.41 $\pm$ 0.11 & 0.5 $\pm$ 0.21\\
\cline{2-9}
 & \multirow{4}{*}{OTHER} & Ours - det. & 36.34 $\pm$ 1.77 & 17.01 $\pm$ 2.98 & 36.34 $\pm$ 1.77 & 13.98 $\pm$ 2.53 & 36.34 $\pm$ 1.77 & 11.96 $\pm$ 5.12\\
 &  & Ours - sto. & 49.46 $\pm$ 0.22 & 2.11 $\pm$ 1.42 & 49.53 $\pm$ 0.17 & 2.59 $\pm$ 1.36 & 49.53 $\pm$ 0.17 & 2.57 $\pm$ 1.36\\
 &  & O1 - sto. & 49.46 $\pm$ 0.22 & 2.11 $\pm$ 1.42 & 49.53 $\pm$ 0.17 & 2.59 $\pm$ 1.36 & 49.53 $\pm$ 0.17 & 2.57 $\pm$ 1.36\\
 &  & O2 - sto. & 49.19 $\pm$ 0.04 & 0.9 $\pm$ 0.12 & N/A & N/A & 48.65 $\pm$ 0.07 & 1.62 $\pm$ 0.2\\
\hline
\multirow{8}{*}{GERMAN} & \multirow{4}{*}{GENDER} & Ours - det. & 29.4 $\pm$ 2.04 & 2.89 $\pm$ 1.61 & 29.4 $\pm$ 2.04 & 3.4 $\pm$ 1.58 & 29.4 $\pm$ 2.04 & 2.83 $\pm$ 2.05\\
 &  & Ours - sto. & 29.4 $\pm$ 2.04 & 2.89 $\pm$ 1.61 & 29.4 $\pm$ 2.04 & 3.4 $\pm$ 1.58 & 29.4 $\pm$ 2.04 & 2.83 $\pm$ 2.05\\
 &  & O1 - sto. & 29.4 $\pm$ 2.04 & 2.89 $\pm$ 1.61 & 29.4 $\pm$ 2.04 & 3.4 $\pm$ 1.58 & 29.4 $\pm$ 2.04 & 2.83 $\pm$ 2.05\\
 &  & O2 - sto. & 49.78 $\pm$ 0.01 & 0.11 $\pm$ 0.02 & N/A & N/A & 49.78 $\pm$ 0.01 & 0.1 $\pm$ 0.01\\
\cline{2-9}
 & \multirow{4}{*}{OTHER} & Ours - det. & 29.4 $\pm$ 2.04 & 3.93 $\pm$ 4.27 & 29.4 $\pm$ 2.04 & 5.15 $\pm$ 3.92 & 29.4 $\pm$ 2.04 & 3.88 $\pm$ 3.77\\
 &  & Ours - sto. & 29.4 $\pm$ 2.04 & 3.93 $\pm$ 4.27 & 29.4 $\pm$ 2.04 & 5.15 $\pm$ 3.92 & 29.4 $\pm$ 2.04 & 3.88 $\pm$ 3.77\\
 &  & O1 - sto. & 29.4 $\pm$ 2.04 & 3.93 $\pm$ 4.27 & 29.4 $\pm$ 2.04 & 5.15 $\pm$ 3.92 & 29.4 $\pm$ 2.04 & 3.88 $\pm$ 3.77\\
 &  & O2 - sto. & 49.78 $\pm$ 0.01 & 0.19 $\pm$ 0.01 & N/A & N/A & 49.78 $\pm$ 0.01 & 0.18 $\pm$ 0.02\\
\hline
\multirow{8}{*}{MEPS} & \multirow{4}{*}{GENDER} & Ours - det. & 15.11 $\pm$ 0.33 & 4.47 $\pm$ 0.8 & 15.11 $\pm$ 0.33 & 2.53 $\pm$ 0.73 & 15.11 $\pm$ 0.33 & 2.29 $\pm$ 0.81\\
 &  & Ours - sto. & 16.31 $\pm$ 0.16 & 2.42 $\pm$ 0.46 & 16.14 $\pm$ 0.16 & 1.5 $\pm$ 0.47 & 16.14 $\pm$ 0.16 & 1.45 $\pm$ 0.36\\
 &  & O1 - sto. & 16.31 $\pm$ 0.16 & 2.42 $\pm$ 0.46 & 16.14 $\pm$ 0.16 & 1.5 $\pm$ 0.47 & 16.14 $\pm$ 0.16 & 1.45 $\pm$ 0.36\\
 &  & O2 - sto. & 49.56 $\pm$ 0.01 & 0.02 $\pm$ 0.0 & N/A & N/A & 49.56 $\pm$ 0.01 & 0.03 $\pm$ 0.0\\
\cline{2-9}
 & \multirow{4}{*}{OTHER} & Ours - det. & 15.11 $\pm$ 0.33 & 6.34 $\pm$ 0.65 & 15.11 $\pm$ 0.33 & 2.57 $\pm$ 0.33 & 15.11 $\pm$ 0.33 & 2.2 $\pm$ 0.39\\
 &  & Ours - sto. & 45.28 $\pm$ 2.09 & 1.21 $\pm$ 0.66 & 15.11 $\pm$ 0.33 & 2.56 $\pm$ 0.33 & 15.11 $\pm$ 0.33 & 2.19 $\pm$ 0.39\\
 &  & O1 - sto. & 45.28 $\pm$ 2.09 & 1.21 $\pm$ 0.66 & 15.11 $\pm$ 0.33 & 2.56 $\pm$ 0.33 & 15.11 $\pm$ 0.33 & 2.19 $\pm$ 0.39\\
 &  & O2 - sto. & 49.56 $\pm$ 0.01 & 0.02 $\pm$ 0.0 & N/A & N/A & 49.56 $\pm$ 0.01 & 0.02 $\pm$ 0.01\\
    \hline
    \end{tabular}}
\end{table}

\begin{figure}[ht]
    \centering
    \begin{subfigure}{0.9\linewidth}
        \centering
        \includegraphics[width=\linewidth]{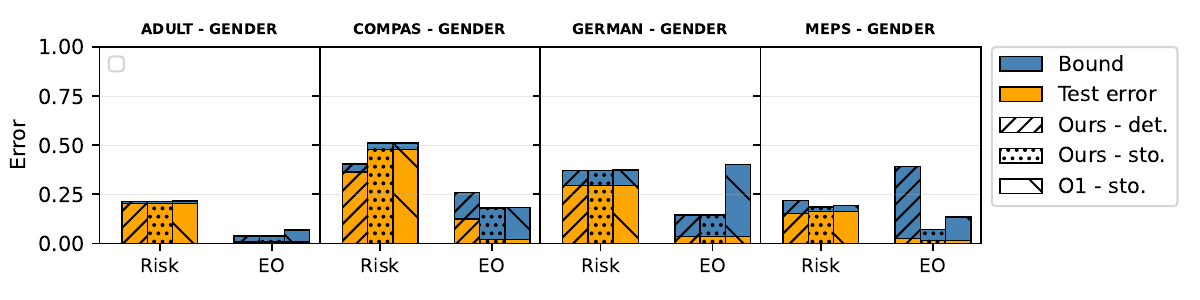}
        \caption{Sensitive attribute: Gender.}
        \label{fig:plot_EO_gender}
    \end{subfigure}

    \vspace{0.5em}

    \begin{subfigure}{0.9\linewidth}
        \centering
        \includegraphics[width=\linewidth]{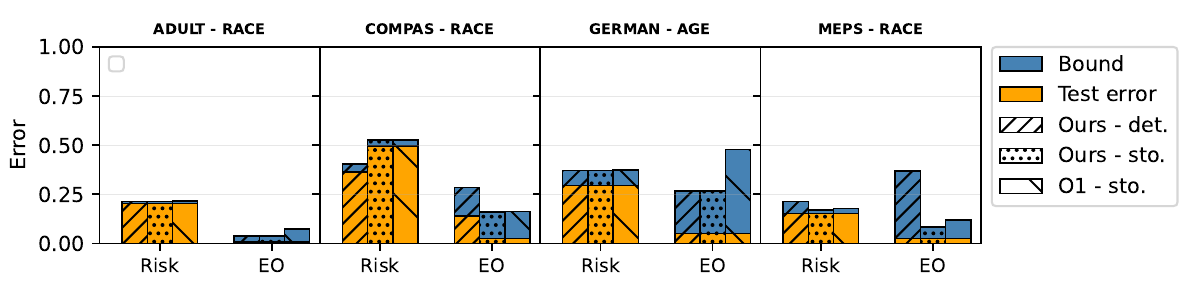}
        \caption{Sensitive attribute: Race or Age (depending on availability).}
        \label{fig:plot_EO_other}
    \end{subfigure}

    \caption{Test error and generalization bound of a stochastic majority vote classifier and its deterministic counterpart for Equalized Odds (EO). To compute the Equalized Odds risk, we replace $\Pbb[y=+1]$ and $\Pbb[y=0]$ by their empirical estimate. In this case, the O2 method cannot be directly generalized because the fairness risk is a linear combination of risks; therefore, we do not report it.}
    \label{fig:plot_EO}
\end{figure}

\begin{figure}[ht]
    \centering
    \begin{subfigure}{0.9\linewidth}
        \centering
        \includegraphics[width=\linewidth]{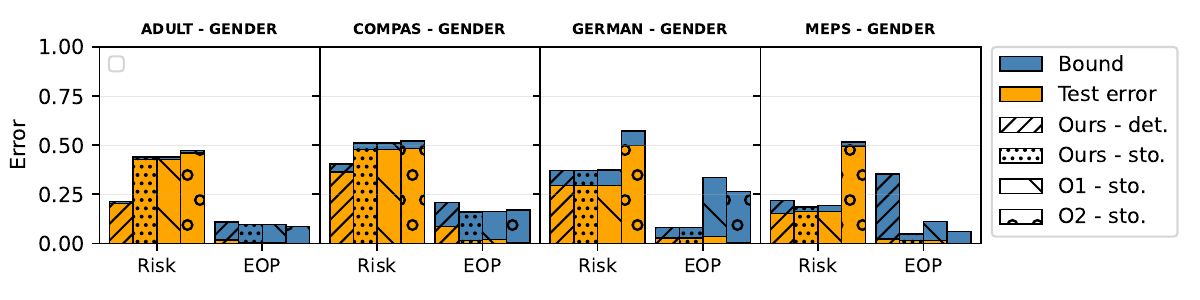}
        \caption{Sensitive attribute: Gender.}
        \label{fig:plot_EOP_gender}
    \end{subfigure}

    \vspace{0.5em}

    \begin{subfigure}{0.9\linewidth}
        \centering
        \includegraphics[width=\linewidth]{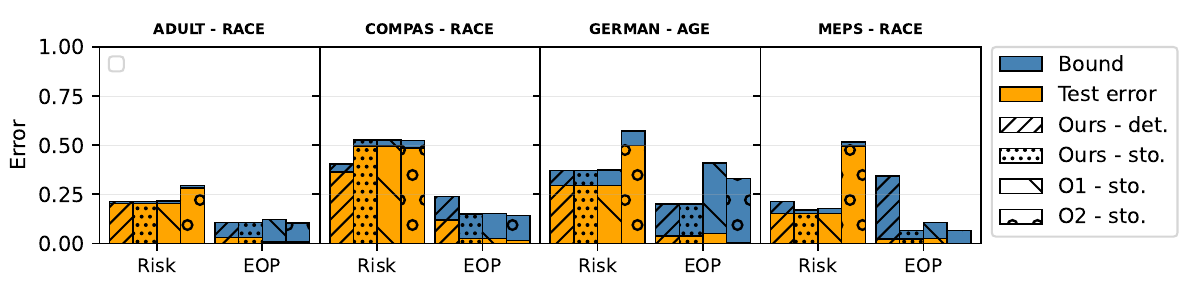}
        \caption{Sensitive attribute: Race or Age (depending on availability).}
        \label{fig:plot_EOP_other}
    \end{subfigure}

    \caption{Test error and generalization bound of a stochastic majority vote classifier and its deterministic counterpart for Equal Opportunity (EOP).}
    \label{fig:plot_EOP}
\end{figure}

\end{document}